\newcommand{\hangin}{\goodbreak\hangindent=.35cm \noindent}
\algrenewcommand\algorithmicindent{0.7em}
\newtheorem{theorem}{Theorem}
\def\eqref#1{equation~\ref{#1}}
\def\1{\bm{1}}
\def\va{{\bm{a}}}
\def\vf{{\bm{f}}}
\def\vg{{\bm{g}}}
\def\vh{{\bm{h}}}
\def\vm{{\bm{m}}}
\def\vp{{\bm{p}}}
\def\vq{{\bm{q}}}
\def\vs{{\bm{s}}}
\def\vu{{\bm{u}}}
\def\vx{{\bm{x}}}
\def\vy{{\bm{y}}}
\def\mD{{\bm{D}}}
\def\mF{{\bm{F}}}
\def\mM{{\bm{M}}}
\def\mR{{\bm{R}}}
\def\mS{{\bm{S}}}
\def\mU{{\bm{U}}}
\def\mV{{\bm{V}}}
\def\mW{{\bm{W}}}
\DeclareMathAlphabet{\mathsfit}{\encodingdefault}{\sfdefault}{m}{sl}
\SetMathAlphabet{\mathsfit}{bold}{\encodingdefault}{\sfdefault}{bx}{n}
\newcommand{\R}{\mathbb{R}}
\title{
\center 
\vspace{-0.35cm}
Addressing Loss of Plasticity and Catas-\\trophic Forgetting in Continual Learning
}
\author{Mohamed Elsayed \\
Department of Computing Science\\
University of Alberta\\
Alberta Machine Intelligence Institute\\
\texttt{mohamedelsayed@ualberta.ca} \\
\And A.\ Rupam Mahmood \\
Department of Computing Science\\
University of Alberta\\
CIFAR Canada AI Chair, Amii\\
\texttt{armahmood@ualberta.ca}
}
\begin{document}
{
\center
\maketitle
\vspace{-0.5cm}
}
\begin{abstract}
Deep representation learning methods struggle with continual learning, suffering from both catastrophic forgetting of useful units and loss of plasticity, often due to rigid and unuseful units.
While many methods address these two issues separately, only a few currently deal with both simultaneously.
In this paper, we introduce Utility-based Perturbed Gradient Descent (UPGD) as a novel approach for the continual learning of representations. 
UPGD combines gradient updates with perturbations, where it applies smaller modifications to more useful units, protecting them from forgetting, and larger modifications to less useful units, rejuvenating their plasticity.
We use a challenging streaming learning setup where continual learning problems have hundreds of non-stationarities and unknown task boundaries.
We show that many existing methods suffer from at least one of the issues, predominantly manifested by their decreasing accuracy over tasks.
On the other hand, UPGD continues to improve performance and surpasses or is competitive with all methods in all problems. Finally, in extended reinforcement learning experiments with PPO, we show that while Adam exhibits a performance drop after initial learning, UPGD avoids it by addressing both continual learning issues.\footnote{Code is available at \url{https://github.com/mohmdelsayed/upgd}}
\end{abstract}


\section{Challenges of Continual Learning}

Continual learning remains a significant hurdle for artificial intelligence, despite advancements in natural language processing, games, and computer vision.
\emph{Catastrophic forgetting} (McCloskey \& Cohen 1989, Hetherington \& Seidenberg 1989) in neural networks is widely recognized as a major challenge of continual learning (De Lange et al.\ 2021).
The phenomenon manifests as the failure of gradient-based methods like SGD or Adam to retain or leverage past knowledge due to forgetting or overwriting previously learned units (Kirkpatrick et al.\ 2017).
In continual learning, these learners often relearn recurring tasks, offering little gain over learning from scratch (Kemker et al.\ 2018).
This issue also raises a concern for reusing large practical models, where finetuning them for new tasks causes significant forgetting of pretrained models (Chen et al.\ 2020, He et al.\ 2021).

Methods for mitigating catastrophic forgetting are primarily designed for specific settings. These include settings with independently and identically distributed (i.i.d.) samples, tasks fully contained within a batch or dataset, growing memory requirements, known task boundaries, storing past samples, and offline evaluation. 
Such setups are often impractical in situations where continual learning is paramount, such as on-device learning. For example, retaining samples may not be possible due to the limitation of computational resources (Hayes et al.\ 2019, Hayes et al.\ 2020, Hayes \& Kannan 2022, Wang et al.\ 2023) or concerns over data privacy (Van de Ven et al.\ 2020).

\begin{figure}[ht]
\centering
\vspace{-0.9cm}
\subfigure[Catastrophic Forgetting]{
\includegraphics[width=0.31\textwidth]{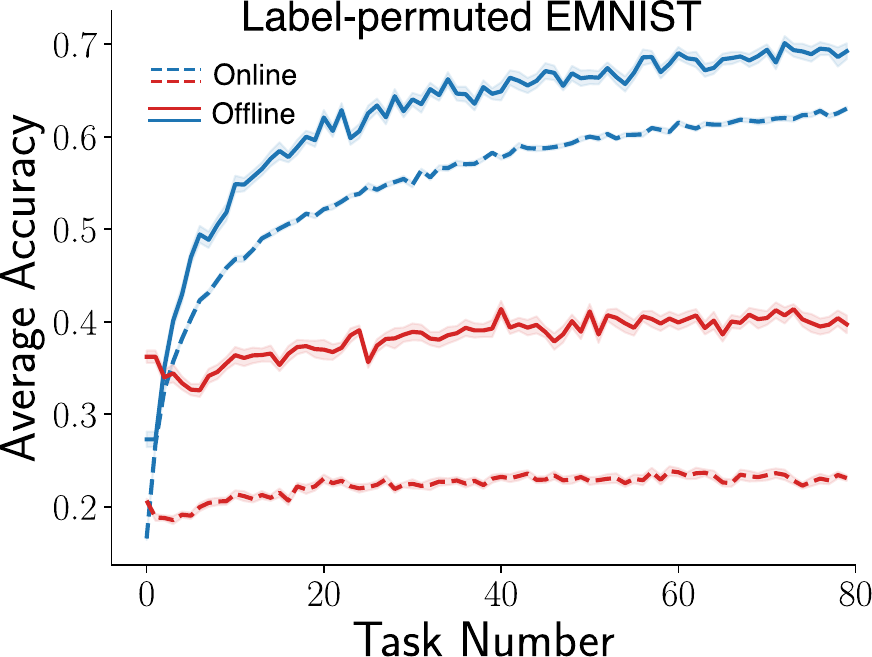}
    \label{fig:story-cf}
  }
\subfigure[Loss of Plasticity]{
\includegraphics[width=0.31\textwidth]{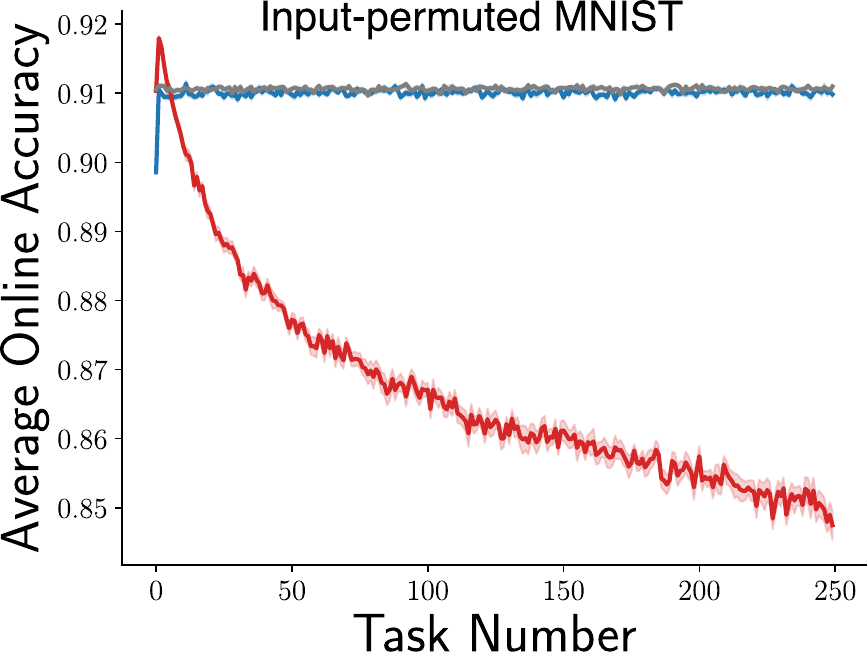}
\label{fig:story-lop}
}
\subfigure[Closer look at Loss of Plasticity]{
\includegraphics[width=0.31\textwidth]{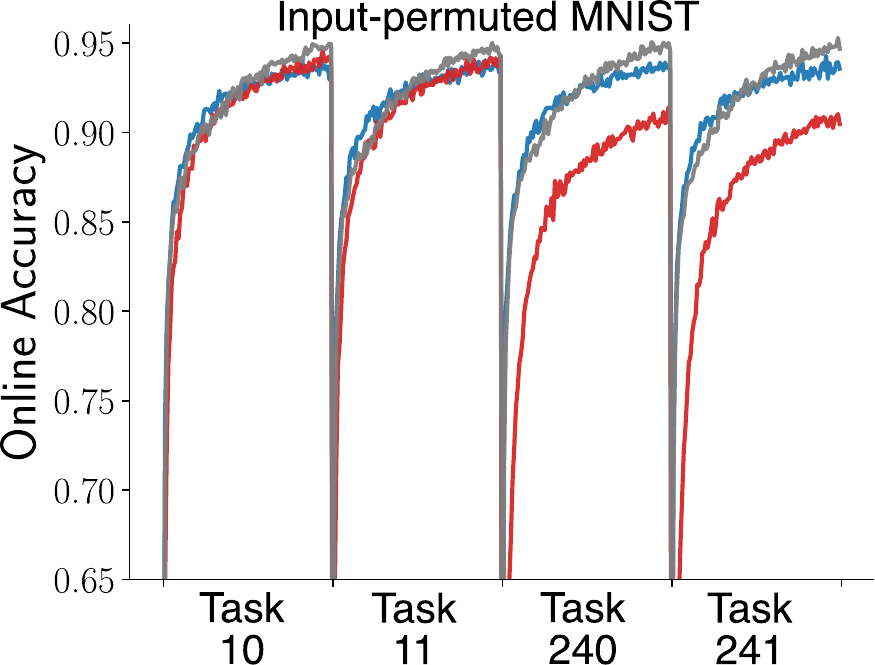}
    \label{fig:story-lop-closer-look}
 }
\vspace{-0.3cm}
\includegraphics[width=0.35\textwidth]{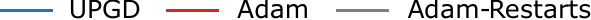}
\caption{(a) Adam suffers from catastrophic forgetting and hence hardly improves performance.
(b \& c) Adam loses plasticity as newer and newer tasks are presented and performs much worse than Adam with restarts later.
In contrast, our proposed method, UPGD, quickly learns and maintains plasticity throughout learning. See Appendix \ref{appendix:story-experiment} for experimental details.
}
\label{fig:story}
\vspace{-0.4cm}
\end{figure}

In the challenging and practical setting of \emph{streaming learning}, catastrophic forgetting is more severe and remains largely unaddressed (Hayes et al.\ 2019).
In streaming learning, samples are presented to the learner as they arise, which is non-i.i.d.\ in most practical problems. The learner cannot retain the sample and is thus expected to learn from it immediately. 
Moreover, evaluation happens online on the most recently presented sample.
This setup mirrors animal learning (Hayes et al.\ 2021, also c.f., list-learning, Ratcliff 1990) and is practical for many applications, such as robotics or autonomous on-device learning.
In this work, we consider streaming learning with unknown task boundaries.

Streaming learning provides the learner with a stream of samples $(\vx_t, \vy_t)$ generated using a non-stationary \emph{target function} ${f}_t$ such that $\vy_t = f_t(\vx_t)$. 
The learner observes the input $\vx_t\in \R^d$, outputs the prediction $\hat{\vy}_t\in \mathbb{R}^m$, and then observes the true output $\vy_t\in \R^m$, strictly in this order.
The learner is then evaluated immediately based on the online metric $E(\vy_t, \hat{\vy}_t)$, for example, accuracy in classification or squared error in regression.
The learner uses a neural network for prediction and $E$ or a related loss to learn the network parameters immediately without storing the sample.
The target function ${f}_t$ is locally stationary, where changes to ${f}_t$ occur occasionally, creating a nonstationary continual learning problem composed of a sequence of stationary tasks.

Fig.\ \ref{fig:story-cf} illustrates catastrophic forgetting with Adam in the streaming learning setting.
Here, a sequence of tasks based on \emph{Label-Permuted EMNIST} is presented to the learner.
The tasks are designed to be highly coherent, where the features learned in one task are fully reusable in the other.
Full details of the problem are described in Section \ref{section:lop-cf-experiments}.
If the learner can remember and leverage prior learning, it should continue to improve performance as more tasks are presented.
However, Fig.\ \ref{fig:story-cf} reveals that Adam can hardly improve its performance, which remains at a low level of accuracy, indicating forgetting and relearning.
Although catastrophic forgetting is commonly studied under offline evaluation (solid lines), the issue also manifests in online evaluation (dashed lines).
This result indicates that current representation learning methods are unable to leverage previously learned useful features but instead forget and relearn them in subsequent tasks.

Yet another pernicious challenge of continual learning is \emph{loss of plasticity}, where the learner's ability to learn new things diminishes.
Recent studies reveal that SGD or Adam continues to lose plasticity with more tasks, primarily due to features becoming difficult to modify (Dohare et al.\ 2021, Lyle et al.\ 2023).
Several methods exist to maintain plasticity, but they generally do not address catastrophic forgetting. 
Figures \ref{fig:story-lop} and \ref{fig:story-lop-closer-look} illustrate loss of plasticity, where Adam is presented with a sequence of new tasks based on \emph{Input-Permuted MNIST} (Goodfellow et al.\ 2013).
Adam's performance degrades with more tasks and becomes worse than \emph{Adam-Restarts}, which learns from scratch on each task.
The stable performance of Adam-Restarts indicates that the tasks are of similar difficulty.
Yet, Adam becomes slower to learn over time, demonstrating loss of plasticity.

A method that preserves useful units, such as features or weights, while leaving the other units adaptable would potentially address both catastrophic forgetting and loss of plasticity. Although a few methods address both issues simultaneously, such methods expect known task boundaries, maintain a replay buffer, or require pretraining, which does not fit streaming learning (Hayes et al.\ 2022). In this paper, we intend to fill this gap and present a continual learning method that addresses both catastrophic forgetting and loss of plasticity in streaming learning without such limitations.


\section{Related Works}
\label{section:related-works}
\textbf{Addressing Catastrophic Forgetting.}
Different approaches have been proposed to mitigate catastrophic forgetting.
For example, replay-based methods (e.g., Chaudhry et al.\ 2019, Isele \& Cosgun 2018, Rolnick et al.\ 2019) address forgetting by using a replay buffer to store incoming non-i.i.d.\ data and then sample from the buffer i.i.d.\ samples. 
Catastrophic forgetting is also addressed by parameter isolation methods (e.g., Rusu et al.\ 2016, Schwarz et al.\ 2018, Wortsman et al.\ 2020, Ge et al.\ 2023) that can expand to accommodate new information without significantly affecting previously learned knowledge. 
There are also sparsity-inducing methods (e.g., Liu et al.\ 2019) that work by maintaining sparse connections so that the weight updates can be localized and not affect many prior useful weights.
Finally, regularization-based methods use a penalty to discourage the learner from moving too far from previously learned weights (Kirkpatrick et al.\ 2017, Aljundi et al.\ 2018, Aljundi et al.\ 2019) or approximations (Lan et al.\ 2023). 
The penalty amount is usually a function of the weight importance based on its contribution to previous tasks. 

\textbf{Addressing Loss of Plasticity.}
Dohare et al.\ (2023a) introduced a generate-and-test 
method (Appendix \ref{appendix:g-and-t}, Mahmood \& Sutton 2013) that maintains plasticity by continually replacing less useful features and showed that methods with continual injection of noise (e.g., Ash \& Adams 2020) also maintain plasticity. Later, several methods were presented to retain plasticity. For example, Nikishin et al.\ (2023) proposed dynamically expanding the network, Abbas et al.\ (2023) recommended adapted activation functions, and Kumar et al.\ (2023) proposed regularization toward initial weights.

\textbf{Addressing Both Issues.}
The trade-off between plasticity and forgetting has been outlined early by Carpenter \& Grossberg (1987) as the stability-plasticity dilemma, a trade-off between maintaining performance on previous experiences and adapting to newer ones. The continual learning community focused more on improving the stability aspect by overcoming forgetting. Recently, however, there has been a new trend of methods that address both issues simultaneously. 
Chaudhry et al.\ (RWalk, 2018) utilized a regularization-based approach with a fast-moving average that quickly adapts to the changes in the weight importance, emphasizing the present and the past equally. Jung et al.\ (2022) introduced different techniques, including structure-wise distillation loss and pretraining to balance between plasticity and forgetting. Gurbuz et al.\ (2022) proposed using connection rewiring to induce plasticity in sparse neural networks. Finally, Kim et al.\ (2023) proposed using a separate network for learning the new task, which then is consolidated into a second network for previous tasks. Despite the recent advancement in addressing the two issues of continual learning, most existing methods do not fit the streaming learning setting since they require knowledge of task boundaries, replay buffers, or pretraining.

\textbf{Importance Measure in Neural Network Pruning.}
Pruning in neural networks requires an importance metric to determine which weights to remove. Typically, the network is pruned using different measures such as the weight magnitude (e.g., Han et al.\ 2015, Park et al.\ 2020), first-order information (e.g., Mozer \& Smolensky 1988, Hassibi \& Stork 1992, Molchanov et al.\ 2016), second-order information (e.g., LeCun et al.\ 1989, Dong et al.\ 2017), or both (e.g., Tresp et al.\ 1996, Molchanov et al.\ 2019). Similar to pruning, regularization-based methods that address catastrophic forgetting use weight-importance measures such as the Fisher information diagonals to weigh their penalties.


\section{Method}
Our approach is to retain useful units while modifying the rest, which requires a metric to assess the utility or usefulness of the units.
In this section, we introduce a measure for weight utility and outline an efficient method for computing it. 
Weight utility can be defined as the change in loss when setting the weight to zero, essentially removing its connection (Mozer \& Smolensky 1988, Karnin 1990).
Removing an important weight should result in increased loss. 
Ideally, both immediate and future losses matter, but we can only assess immediate loss at the current step.
Finally, we devise a gradient-based update rule that protects or modifies weights based on this utility measure.

To define utility precisely,
let us consider that the learner produces the predicted output $\hat{\vy}$ using a neural network with $L$ layers,  parametrized by the set of weights $\mathcal{W}=\{\mW_1,...,\mW_L\}$. 
Here $\mW_l$ is the weight matrix at the $l$-th layer, and its element at the $i$-th row and the $j$-th column is denoted by $W_{l,i,j}$. 
At each layer $l$, we get the activation output $\vh_{l}$ of the features by applying the activation function $\boldsymbol\sigma_a$ to the activation input $\va_{l}$: $\vh_{l} = \boldsymbol\sigma_a(\va_{l})$.
We simplify notations by defining $\vh_0 \doteq \vx$.
The activation output $\vh_{l}$ is then multiplied by the weight matrix $\mW_{l+1}$ of layer $l+1$ to produce the next activation input: ${a}_{l+1,i} = \sum_{j=1}^{d_l} {{W}_{l+1,i,j}}{h}_{l,j}, \forall i$, where $\vh_{l} \in \mathbb{R}^{d_l}$. 
Here, $\boldsymbol\sigma_a$ applies activation element-wise for all layers except for the final layer, which becomes the softmax function.

The utility $U_{l,i,j}(Z)$ of the weight $i,j$ at layer $l$ and sample $Z$ is defined as
\begin{align}
U_{l,i,j}(Z) & \doteq \mathcal{L}(\mathcal{W}_{\neg[l,i,j]}, Z) - \mathcal{L}(\mathcal{W},Z),
\end{align}
where $\mathcal{L}(\mathcal{W}, Z)$ is the sample loss given $\mathcal{W}$, and $\mathcal{L}(\mathcal{W}_{\neg[l,i,j]}, Z)$ is a counterfactual loss where $\mathcal{W}_{\neg[l,i,j]}$ is the same as $\mathcal{W}$ except the weight $W_{l,i,j}$ is set to 0. 
We refer to it as the \emph{true utility} to distinguish it from its approximations, which are referred to as either \emph{approximated utilities} or simply utilities.
Note that this utility is a global measure, and it provides a total ordering for weights according to their importance. However, computing it is prohibitive since it requires additional $N_w$ forward passes, where $N_w$ is the total number of weights. 


\subsection{Scalable Approximation of the True Utility}
Since the computation of the true utility is prohibitive, we aim to approximate it such that no additional forward passes are needed. 
To that end, we estimate the true utility by a second-order Taylor approximation. 
We expand the counterfactual loss $\mathcal{L}(\mathcal{W}_{\neg[l,i,j]}, Z)$ around the current weight $W_{l,i,j}$ and evaluate at weight zero. Hence, the quadratic approximation of $U_{l,i,j}(Z)$ can be written as
\begin{align}
    U_{l,i,j}(Z) 
    &= \mathcal{L}(\mathcal{W}_{\neg[l,i,j]}, Z) - \mathcal{L}(\mathcal{W} , Z) \nonumber \\
    &\approx \mathcal{L}(\mathcal{W} , Z) + \frac{\partial \mathcal{L}(\mathcal{W}, Z)}{\partial W_{l,i,j}}(0 - W_{l,i,j}) +\frac{1}{2}\frac{\partial^2 \mathcal{L}}{\partial W^2_{l,ij}} (0-W_{l,i,j})^2 - \mathcal{L}(\mathcal{W} , Z) \nonumber \\
    &= -\frac{\partial \mathcal{L}(\mathcal{W}, Z)}{\partial W_{l,i,j}} W_{l,i,j} + \frac{1}{2}\frac{\partial^2 \mathcal{L}(\mathcal{W}, Z)}{\partial W^2_{l,i,j}} W^2_{l,i,j}.
\end{align}
We refer to the utility measure containing the first term as the \emph{first-order utility} and the measure containing both terms as the \emph{second-order utility}. The computation required for the second-order term has quadratic complexity. 
Therefore, we use the approximation by Elsayed and Mahmood (2022) that provides a Hessian diagonal approximation in linear complexity. 
This makes the computation of both of our utility approximations linear in complexity and therefore scalable. Moreover, we present a way for propagating our approximated utilities by the \emph{utility propagation theorem} in Appendix \ref{appendix:utility-propagation}. We also define the utility of a feature and provide its scalable approximation in Appendix \ref{appendix:approximated-feature-utility} and \ref{appendix:feature-weight-connection}.


\subsection{Utility-based Perturbed Gradient Descent (UPGD)}
Now, we devise a new method called \emph{Utility-based Perturbed Gradient Descent} (UPGD) that performs gradient-based learning guided by utility-based information.
The utility information is used as a gate, referred to as \emph{utility gating}, for the gradients to prevent large updates to already useful weights, addressing forgetting.
On the other hand, the utility information helps maintain plasticity by perturbing unuseful weights which become difficult to change through gradients (see Dohare et al.\ 2023a). Using shorthand, $\mathcal{L} = \mathcal{L}(\mathcal{W}, Z)$, the update rule of UPGD is given by
\begin{align}
    w_{l, i,j} \leftarrow w_{l, i,j} - \alpha\left(\frac{\partial \mathcal{L}}{\partial w_{l, i,j}} + \xi \right) \left(1-\Bar{U}_{l,i,j}\right),\label{equ:upgd}
\end{align}
where $\xi\sim \mathcal{N}(0,\sigma^2)$ is noise, $\alpha$ is the step-size parameter, and $\Bar{U}_{l,i,j}\in[0, 1]$ is a scaled utility. For important weights with utility $\bar{U}_{l,i,j}=1$, the weight remains unaltered even by gradient descent, whereas unimportant weights with $\bar{U}_{l,i,j}=0$ get updated by both perturbation and gradient descent.

Another variation of UPGD, which we call \emph{non-protecting UPGD}, is to add the utility-based perturbation to the gradient as $ w_{l, i,j} \leftarrow w_{l, i,j} - \alpha[\sfrac{\partial \mathcal{L}}{\partial w_{l, i,j}} + \xi (1-\Bar{U}_{l,i,j})]$.
However, such an update rule can only help against loss of plasticity, not catastrophic forgetting, as useful weights are not protected from change by gradients. 
We include non-protecting UPGD in our experiments to validate that using the utility information as a gate for both the perturbation and the gradient update is necessary to mitigate catastrophic forgetting.
We provide convergence analysis for both UPGD and Non-protecting UPGD on non-convex stationary problems in Appendix \ref{appendix:convergence-proof}.

Utility scaling is important for the UPGD update rule. We present here a global scaling and present a local scaling variation in Appendix \ref{appendix:upgd-algorithms-others}.
The global scaled utility requires the maximum utility of all weights (e.g., instantaneous or trace) at every time step, which is given by $\Bar{U}_{l,i,j} = \phi({U}_{l,i,j} / \eta)$. Here $\eta$ is the maximum utility of the weights and $\phi$ is the scaling function, for which we use sigmoid.
We show the pseudo-code of our method using the global scaled utility in Algorithm \ref{alg:upgd-weight-global}, where $\mF_l$ contains first derivatives and $\mS_l$ contains second-derivative approximations (see Appendix \ref{appendix:get-derivative} for more details on the \texttt{GetDerivatives} function).
We focus here on weight-wise UPGD and provide similar pseudo-codes for feature-wise UPGD in Appendix \ref{appendix:upgd-algorithms-others}.

UPGD update rule can be related to some existing update rules. When we perturb all weights evenly, that is, when all scaled utilities are zero, UPGD reduces to a well-known class of algorithms called \emph{Perturbed Gradient Descent} (PGD) (Zhou et al.\ 2019). The PGD learning rule is given by $w_{l, i,j} \leftarrow w_{l, i,j} - \alpha\left[\sfrac{\partial \mathcal{L}}{\partial w_{l, i,j}} + \xi \right]$. It has been shown that a PGD with weight decay algorithm, known as \emph{Shrink and Perturb} (S\&P) (Ash \& Adams 2020), can help maintain plasticity in continual classification problems (Dohare et al.\ 2023a) since maintaining small weights prevents weights from over-committing, making them easy to change. The learning rule of Shrink and Perturb can be written as $w_{l, i,j} \leftarrow \rho w_{l, i,j} - \alpha\left(\sfrac{\partial \mathcal{L}}{\partial w_{l, i,j}} + \xi \right)$,
where $\rho =1-\lambda\alpha$ and $\lambda$ is the weight decay factor. When no noise is added, the update reduces to SGD with weight decay (Loshchilov \& Hutter 2019), known as SGDW. Incorporating the useful role of weight decay into UPGD, we can write the \emph{UPGD with weight decay} (UPGD-W) update rule as $w_{l, i,j} \leftarrow \rho w_{l, i,j} - \alpha\left(\frac{\partial \mathcal{L}}{\partial w_{l, i,j}} + \xi \right) \left(1-\Bar{U}_{l,i,j}\right)$.

 
\subsection{Forgetting and Plasticity Evaluation Metrics}
\label{section:metrics}
\begin{wrapfigure}{r}{0.44\textwidth}
\vspace{-0.8cm}
\begin{minipage}{0.47\textwidth}
\begin{algorithm}[H]
\caption{UPGD}\label{alg:upgd-weight-global}
\begin{algorithmic}
\State Given a stream of data $\mathcal{D}$, a network $f$ with weights $\{\mW_1,...,\mW_L\}$.
\State Initialize step size $\alpha$, utility decay rate $\beta$, and noise standard deviation $\sigma$.
\State Initialize $\{\mW_1,...,\mW_L\}$.
\State Initialize $\mU_{l},\forall l$ and time step $t$ to zero.
\For{$(\vx, \vy)$ in $\mathcal{D}$}
\State $t\leftarrow t+1$
\For{$l$ in $\{L,L-1,...,1\}$}
\State $\eta\leftarrow-\infty$
\State $\mF_{l},\mS_{l} \leftarrow$\texttt{GetDerivatives}($f, \vx, \vy, l)$
\State $\mM_{l} \leftarrow \sfrac{1}{2}\mS_{l}\circ\mW^2_{l} - \mF_{l}\circ\mW_l $
\State $\mU_{l} \leftarrow \beta\mU_{l} + (1-\beta)\mM_l $
\State $\hat{\mU}_{l} \leftarrow \mU_{l}/(1-\beta^{t})$
\If{$\eta < \texttt{max}(\hat{\mU}_{l})$} $\eta \leftarrow \texttt{max}(\hat{\mU}_{l})$
\EndIf
\EndFor
\For{$l$ in $\{L,L-1,...,1\}$}
\State Sample $\boldsymbol\xi$ elements from $\mathcal{N}(0, \sigma^2)$
\State $\Bar{\mU}_{l} \leftarrow \phi{(\hat{\mU_{l}}/\eta)}$
\State $\mW_{l} \leftarrow \mW_{l} - \alpha (\mF_{l} + \boldsymbol{\xi})\circ(1-\Bar{\mU}_{l})$
\EndFor
\EndFor
\end{algorithmic}
\end{algorithm}
\end{minipage}
\vspace{-0.9cm}
\end{wrapfigure}
Here, we present two metrics to characterize plasticity and forgetting in streaming learning. First, we introduce a new online metric to quantify plasticity. 
Neuroplasticity is usually defined as the ability of biological neural networks to change in response to some stimulus (Konorski 1948, Hebb 1949). Similarly, in artificial neural networks, plasticity can be viewed as the ability of a neural network to change its predictions in response to new information (Lyle et al.\ 2023). We provide a definition that captures the existing intuition in the literature.
We define the plasticity of a learner given a sample as the ability to change its prediction to match the target.
The learner achieves plasticity of $1$ given a sample if it can exactly match the target and achieves plasticity of $0$ if it achieves zero or negative progress toward the target compared to its previous prediction given the same sample. Formally, we define the sample plasticity to be $p(Z) = \text{max}\left(1 - \frac{\mathcal{L}(\mathcal{W}^{\dagger}, Z)}{\text{max}(\mathcal{L}(\mathcal{W}, Z), \epsilon)}, 0 \right) \in[0,1]$, where $\mathcal{W}^{\dagger}$ is the set of weights after performing the update and $\epsilon$ is a small number to maintain numerical stability.
Note that the term $\left(1 - \frac{\mathcal{L}(\mathcal{W}^{\dagger}, Z)}{\text{max}(\mathcal{L}(\mathcal{W}, Z), \epsilon)}\right)\in (-\infty,1]$ has an upper bound of $1$, since  $\mathcal{L}(\mathcal{W}, Z)\in [0,\infty), \forall \mathcal{W}, Z$ for cross-entropy and squared-error losses.
We use this metric to measure plasticity directly, especially since most measures that are introduced (e.g., weight norm) to show loss of plasticity do not often correlate with plasticity (see Lyle et al.\ 2023).
Our metric can be viewed as a baseline-normalized version of the plasticity metric by Lyle et al.\ (2023), where the baseline is the loss prior to the update.
We measure loss of plasticity as $\Delta \bar{p}_{k+1}= \bar{p}_{k} - \bar{p}_{k+1}$, where $\bar{p}_{k}$ is the average plasticity over all samples in the $k$-th evaluation window. Note that this metric ranges from $-1$ to $1$, $\Delta\bar{p}_{k} \in [-1,1], \forall k$. Negative values indicate the learner has gained plasticity, whereas positive values indicate the learner has lost plasticity. In our experiments, we report the overall loss of plasticity on $T$ evaluation windows: $\sum_{k=1}^{T-1} \Delta \bar{p}_{k+1}=\bar{p}_{1}-\bar{p}_{T}$.

Existing metrics for catastrophic forgetting are predominantly based on offline evaluations.
In streaming learning, forgetting previously learned useful features leads to future learning deterioration rather than affecting past performance.
If the learner keeps improving its representations with each task, the average online accuracy on new tasks should improve continually. On the other hand, if it cannot improve representations, its average online accuracy on new tasks may stay the same or even decrease (see Fig.\ \ref{fig:story-cf}).
Hence, we propose measuring forgetting using different online evaluation windows for streaming learning. Inspired by the windowed-forgetting metric introduced by De Lange et al.\ (2023), we propose the metric $F_{k+1} = A_{k}-A_{k+1}$, where $A_k$ is the accuracy averaged over all samples on the $k$-th evaluation window.
This metric assumes the learned representations in one task remain relevant for future tasks, and all tasks have the same complexity.
Note that this metric ranges from $-1$ to $1$, $F_k\in [-1,1], \forall k$, where negative values indicate the learner can improve on previous representations, and positive values indicate re-learning, hence forgetting. In our experiments, we report the overall forgetting on $T$ evaluation windows, $\sum_{k=1}^{T-1} F_{k+1}=A_{1}-A_{T}$.


\section{Experiments}
\label{section:experiments}
In this section, we begin by studying the quality of our approximated utilities. 
Then we study the effectiveness of UPGD in mitigating loss of plasticity and catastrophic forgetting. 
For the latter, we use non-stationary streaming problems based on MNIST (LeCun et al.\ 1998), EMNIST (Cohen et al.\ 2017), CIFAR-10 (Krizhevsky 2009), and ImageNet (Deng et al.\ 2009) datasets with learners that use multi-layer perceptrons, convolutional neural networks (LeCun et al.\ 1998), and residual neural networks (He et al.\ 2016) (however, see Appendix \ref{appendix:details-stationary-mnist} for validating UPGD on stationary tasks). We also validate UPGD in extended reinforcement learning experiments.
UPGD is compared to suitable baselines for our streaming learning setup, that is, without replay, batches, or task boundaries.

The performance of continual learners is evaluated based on the 
average online accuracy for classification problems. In each of the following experiments, a thorough hyperparameter search is conducted (see Appendix \ref{appendix:experimental-details}). Our criterion was to find the best set of hyperparameters for each method that maximizes the area under the online accuracy curve.
Unless stated otherwise, we averaged the performance of each method over $20$ independent runs. We focus on the key results here and give the full experimental details in Appendix \ref{appendix:experimental-details}.


\subsection{Quality of the Approximated Utilities}
\label{section:utility-quality-experiments}
\begin{wrapfigure}{r}{0.40\textwidth}
\vspace{-0.8cm}
\centering
\includegraphics[width=0.40\textwidth]{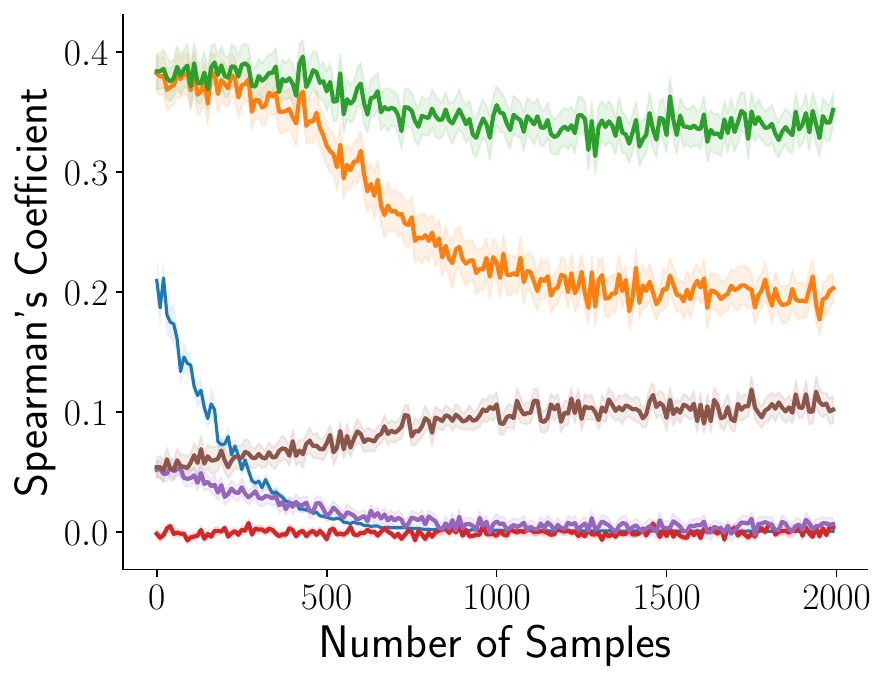}
\label{fig:global-weight-utility}
\includegraphics[width=0.35\textwidth]{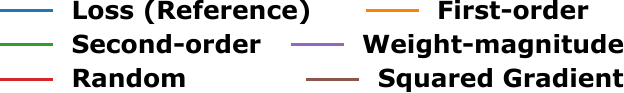}
\vspace{-0.3cm}
\caption{Rank correlation between the true utility and approximated utility.}
\label{fig:global-utility}
\vspace{-0.5cm}
\end{wrapfigure}
A high-quality approximation of utility should give a similar ordering of weights to the true utility. 
We use the ordinal correlation measure of Spearman to quantify the quality of our utility approximations. An SGD learner with a small neural network with ReLU activations is used on a simple problem to minimize the online squared error.

At each time step, Spearman's correlation is calculated for first- and second-order global utility against the random ordering, the squared-gradient utility, and the weight-magnitude utility.
We report the correlations between the true utility and approximated global weight utilities in Fig.\ \ref{fig:global-utility}. The correlation is the highest for the second-order utility throughout learning. On the other hand, the first-order utility becomes less correlated when the learner plateaus, likely due to zigzagging gradient elements near the solution. The weight-magnitude utility shows a small correlation to the true utility that gets smaller. The correlation of the squared-gradient utility increases with time steps but remains smaller than that of the first-order utility. We use random ordering as a baseline, which maintains zero correlation with the true utility, as expected.
We also show the correlation between approximated local utility and true utility in Appendix \ref{appendix:utility-additional} in addition to results with other activations.


\subsection{UPGD Against Loss of Plasticity}
\label{section:lop-experiments}
In this section, we use \emph{Input-Permuted MNIST}, a problem where only loss of plasticity is present, and answer two main questions: 1) how UPGD and the other continual learning methods perform on this problem, and 2) whether performance alone is indicative of plasticity in this task. Performance has been used to measure plasticity on various problems and settings (Dohare et al.\ 2021, Nikishin et al.\ 2023, Kumar et al.\ 2023, Abbas et al.\ 2023). The decaying performance is usually attributed to loss of plasticity. Here, we question using performance to measure plasticity in arbitrary problems since performance can also be affected by other issues such as catastrophic forgetting (see Fedus et al.\ 2020). However, we hypothesize that if the only underlying issue is loss of plasticity, performance might actually reflect plasticity. 
Therefore, it is necessary to use a problem where only loss of plasticity is present to test our hypothesis. This approach also allows us to study the effectiveness of UPGD against loss of plasticity, isolated from catastrophic forgetting. 
In Input-permuted MNIST, we permute the inputs every $5000$ steps where the time step at each permutation marks the beginning of a new task. After each permutation, the learned features become irrelevant to the new task, so the learner is expected to overwrite prior-learned representations as soon as possible. Thus, the input-permuted MNIST is a suitable problem to study loss of plasticity.

\begin{wrapfigure}{r}{0.40\textwidth}
\centering
\vspace{-0.1cm}
\includegraphics[width=0.40\textwidth]{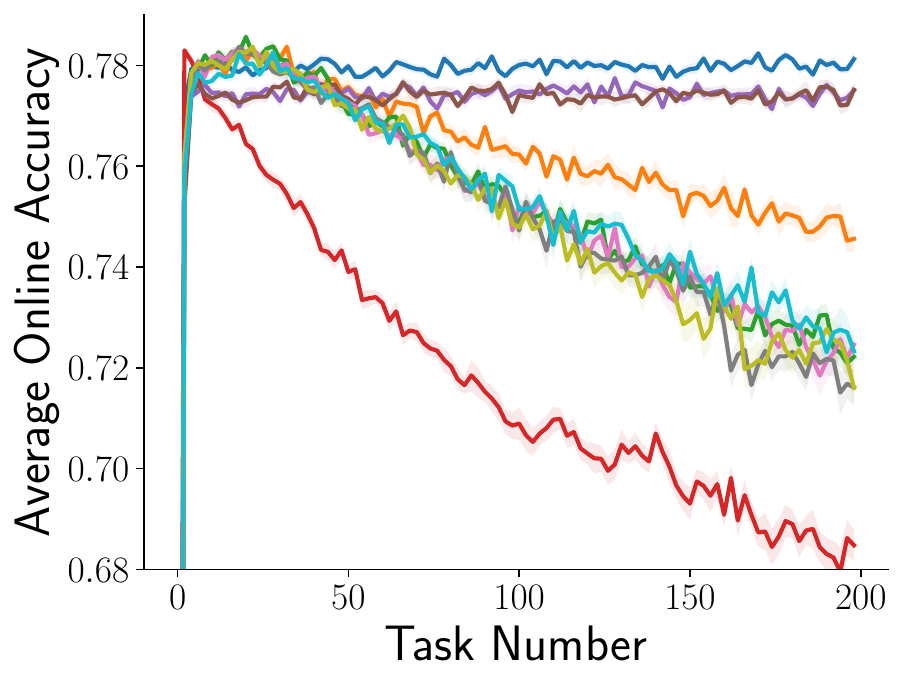}
\vspace{-0.2cm}
\includegraphics[width=0.4\textwidth]{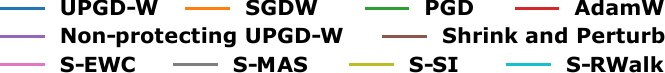}
\caption{Performance of methods on the Input-permuted MNIST problem.}
\label{fig:inputpermuted-mnist-accuracy}
\vspace{-0.3cm}
\end{wrapfigure}
We compare SGDW, PGD, S\&P, which addresses loss of plasticity, Adam with weight decay (Loshchilov \& Hutter 2019) known as AdamW, UPGD-W, and Non-protecting UPGD-W. We also introduce and compare against \emph{Streaming Elastic Weight Consolidation} (S-EWC), \emph{Streaming Synaptic Intelligence} (S-SI), and \emph{Streaming Memory-Aware Intelligence} (S-MAS).
These methods can be viewed as a natural extension of EWC (Kirkpatrick et al.\ 2017), SI (Zenke et al.\ 2017), and MAS (Aljundi et al.\ 2018), respectively, which are regularization-based methods for mitigating forgetting to the streaming learning setting. Finally, we introduce and compare against \emph{Streaming RWalk} (S-RWalk). This can be seen as a natural extension of RWalk (Chaudhry et al.\ 2018), a method intended to address both issues, adapted for streaming learning.
We write the update rule of the last four methods in streaming learning as $w_{l, i,j} \leftarrow w_{l, i,j} - \alpha\left[\sfrac{\partial \mathcal{L}}{\partial w_{l, i,j}} +\kappa \Omega_{l,i,j} (w_{l,i,j} - \Bar{w}_{l,i,j}) \right]$, where $\kappa$ is the regularization factor, $\Omega_{l,i,j}$ is the weight importance, and $\Bar{w}_{l,i,j}$ is the trace of weight $i,j$ at the $l$-th layer.
The weight importance is estimated as a trace of squared gradients in S-EWC, whereas it is estimated as a trace of gradient magnitudes in S-MAS.
S-RWalk is different from RWalk since it uses a trace of previous weights instead of the instantaneous previous weights. We omit the details of S-SI and RWalk weight importance estimation here and defer it to Appendix \ref{appendix:details-si-rwalk}.
Lastly, since we compare against methods that use first-order information, we use global first-order utility traces in this and subsequent experiments for a fair comparison (however, see Appendix \ref{appendix:toy-experiment} for an experiment using second-order utilities).

\begin{wrapfigure}{r}{0.40\textwidth}
\centering
\vspace{-0.5cm}
\includegraphics[width=0.40\textwidth]{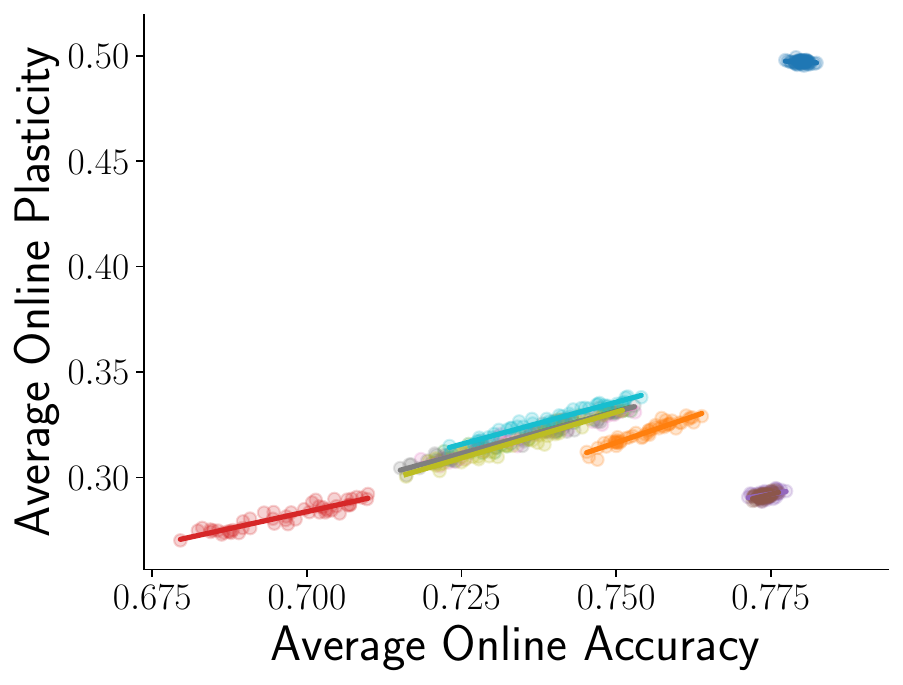}
\vspace{-0.2cm}
\includegraphics[width=0.4\textwidth]{figures/new-legend.pdf}
\caption{Each method's average plasticity against average accuracy of on Input-permuted MNIST.}
\label{fig:plasticity-vs-accuracy}
\vspace{-0.5cm}
\end{wrapfigure}
Fig.\ \ref{fig:inputpermuted-mnist-accuracy} shows that methods that only address catastrophic forgetting (e.g., S-EWC) continue to decay in performance whereas methods that address loss of plasticity alone (e.g., S\&P) or together with catastrophic forgetting (e.g., UPGD), except S-RWalk, maintain their performance level. We plot the average online accuracy against the number of tasks. The average online accuracy is the percentage of correct predictions within each task, where the sample online accuracy is $1$ for correct prediction and $0$ otherwise. The prediction of the learner is given by argmax over its output probabilities. The learners are presented with a total of $1$ million examples, one example per time step, and are required to maximize online accuracy using a multi-layer ($300\times150$) network with ReLU units.

In order to answer the second question, we measure online plasticity as defined in Section \ref{section:metrics} for each method and plot performance against measured plasticity in Fig.\ \ref{fig:plasticity-vs-accuracy}. The average online plasticity of each method is shown against the average online accuracies of the last $100$ tasks. We notice that plasticity and accuracy are strongly correlated, indicating that when the learner loses plasticity, its accuracy also decreases. This result corroborates that when the only underlying issue is loss of plasticity, performance indeed reflects plasticity. Note that such a correlation is not expected when loss of plasticity is not the only issue, and hence, the plasticity metric can be generally more reliable.

Finally, in Fig.\ \ref{fig:input-permuted-mnist-statistics}, we use diagnostic statistics to further analyze the solutions each method achieves. More statistics for learners on this problem and next ones are reported in Appendix \ref{appendix:stats-additional}. Notably, the results show that, for all methods, the fraction of zero activations increases and $\ell_0$ gradient norm decreases substantially except for UPGD-W along with Non-protecting UPGD-W and S\&P.

\begin{figure}[ht]
\vspace{-0.9cm}
\centering
\includegraphics[width=0.24\textwidth]{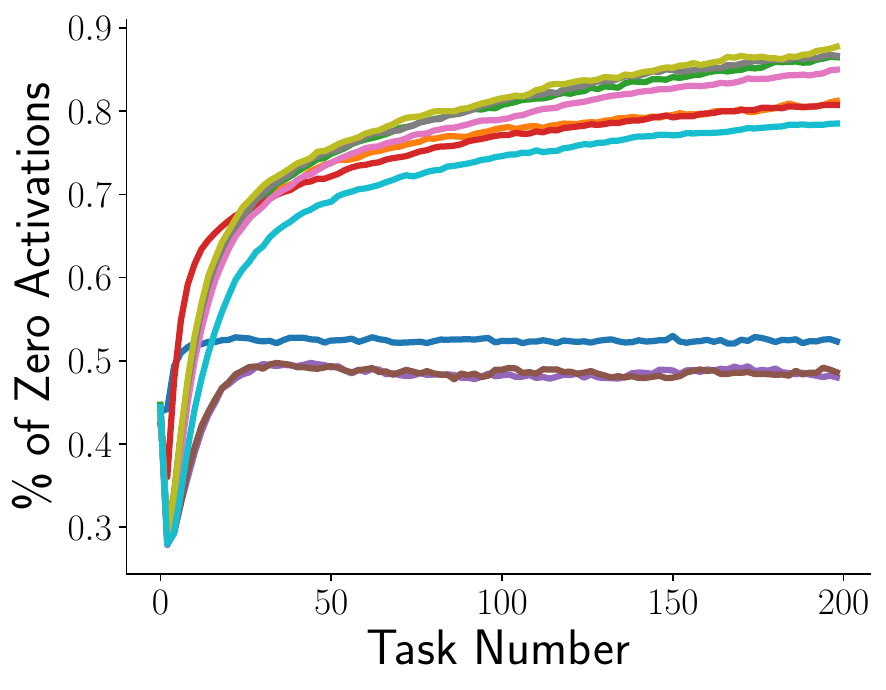}
\includegraphics[width=0.24\textwidth]{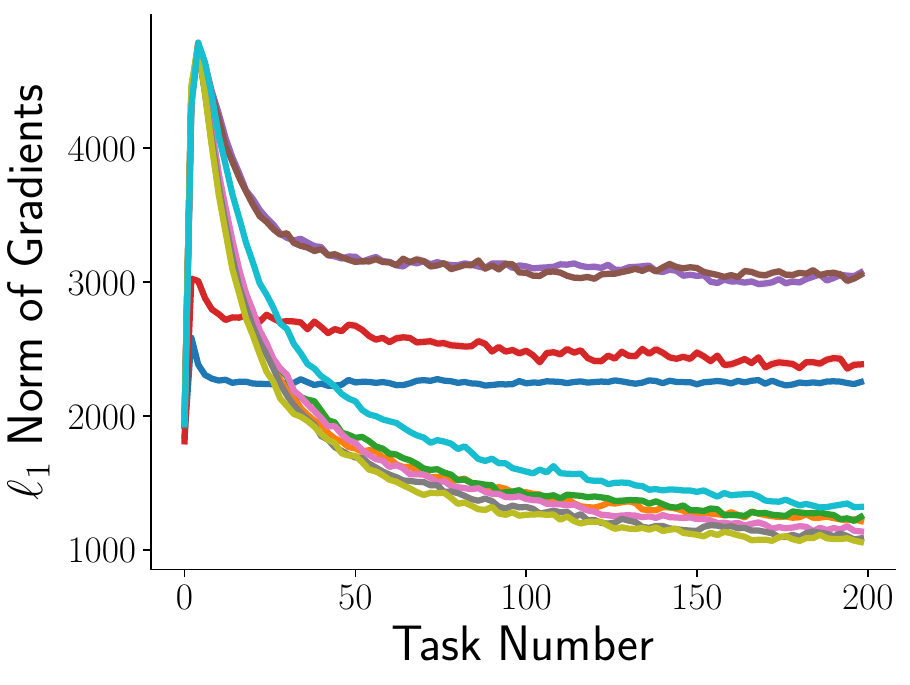}
\includegraphics[width=0.24\textwidth]{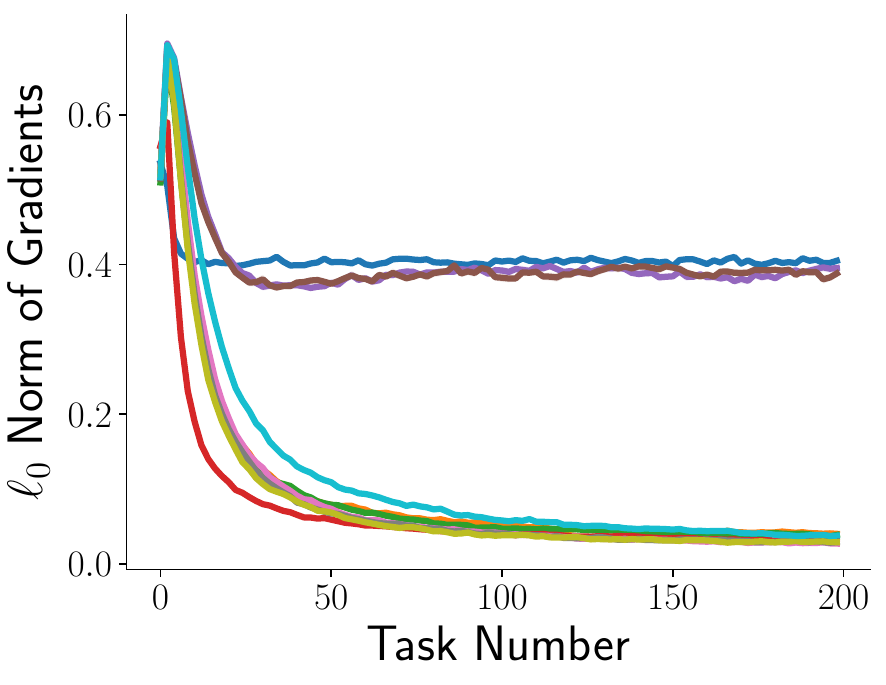}
\includegraphics[width=0.24\textwidth]{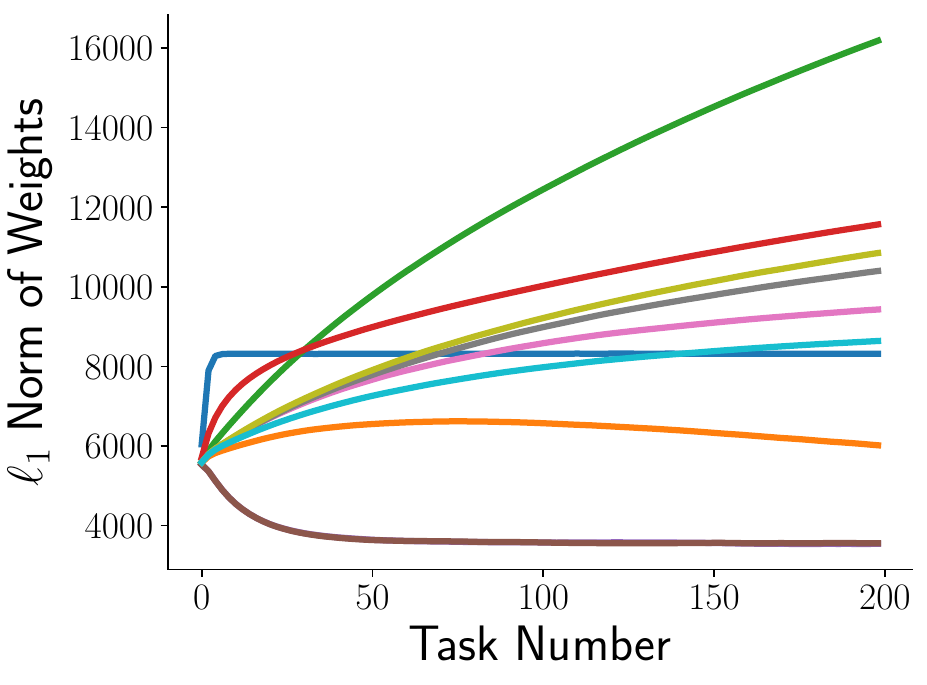}
\vspace{-0.3cm}
\includegraphics[width=0.6\textwidth]{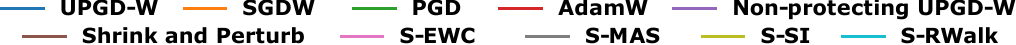}
\caption{Diagnostic statistics on Input-permuted MNIST. The percentage of zero activations, $\ell_0$-norm and $\ell_1$-norm of the gradients, and $\ell_1$-norm of the weights are shown. We stacked the elements from the network gradients or weights into vectors to compute each norm at every sample.}
\label{fig:input-permuted-mnist-statistics}
\vspace{-0.4cm}
\end{figure}

\subsection{UPGD Against Catastrophic Forgetting}
\label{section:cf-experiments}
Now, we study how UPGD and other continual learning methods address forgetting, and for that we use \emph{Label-permuted CIFAR-10}.
CIFAR-10 dataset contains $60{,}000$ RGB images of size $32\times32$ belonging to 10 classes; each class has 6000 images.
The labels are permuted every $2500$ time step. Each learner is trained for 1M samples, one sample each time step, using a convolutional neural network with ReLU activations.
Such permutations should not make the learner change its learned representations since it can simply change the weights of the last layer to adapt to that change. This makes the Label-permuted CIFAR10 problem suitable for studying catastrophic forgetting. It is not clear, however, whether the issue of loss of plasticity is present.
We hypothesize that the issue of loss plasticity does not occur in this problem mainly due to its small number of classes (Lesort et al.\ 2023), leading to a probability of 10\% for the same label re-occurrence after label permutation, resulting in less amount of non-stationarity that causes loss of plasticity.

\begin{figure}[ht]
\centering
\subfigure[Label-permuted CIFAR-10]{
    \includegraphics[width=0.315\textwidth]{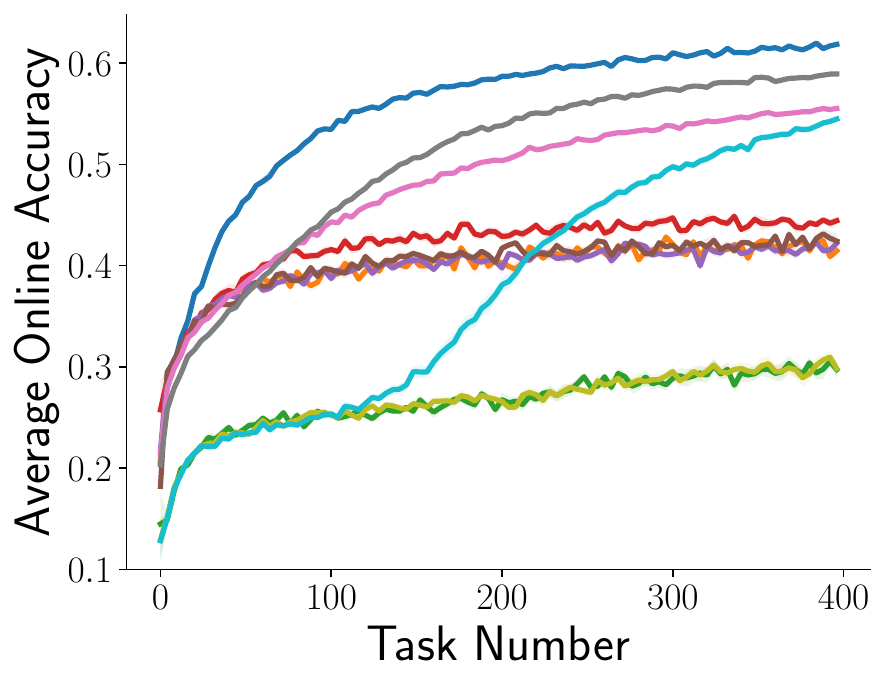}
    \label{fig:labelpermuted-cifar10-upgd-global}
 }
\subfigure[Label-permuted EMNIST]{
\includegraphics[width=0.315\textwidth]{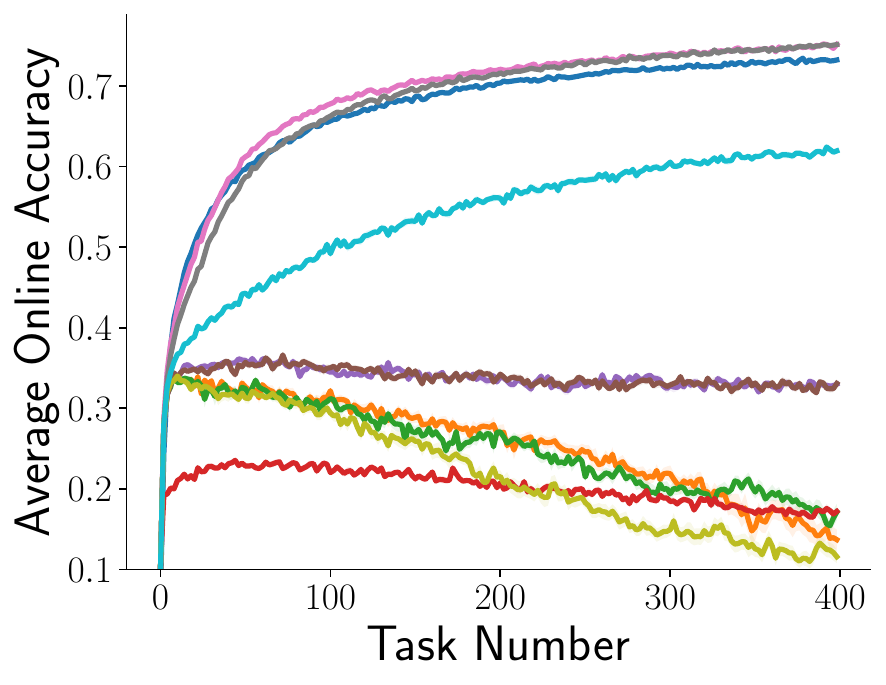}
\label{fig:labelpermuted-emnist-upgd-global}
}
\subfigure[Label-permuted \emph{mini}ImageNet]{
    \includegraphics[width=0.315\textwidth]{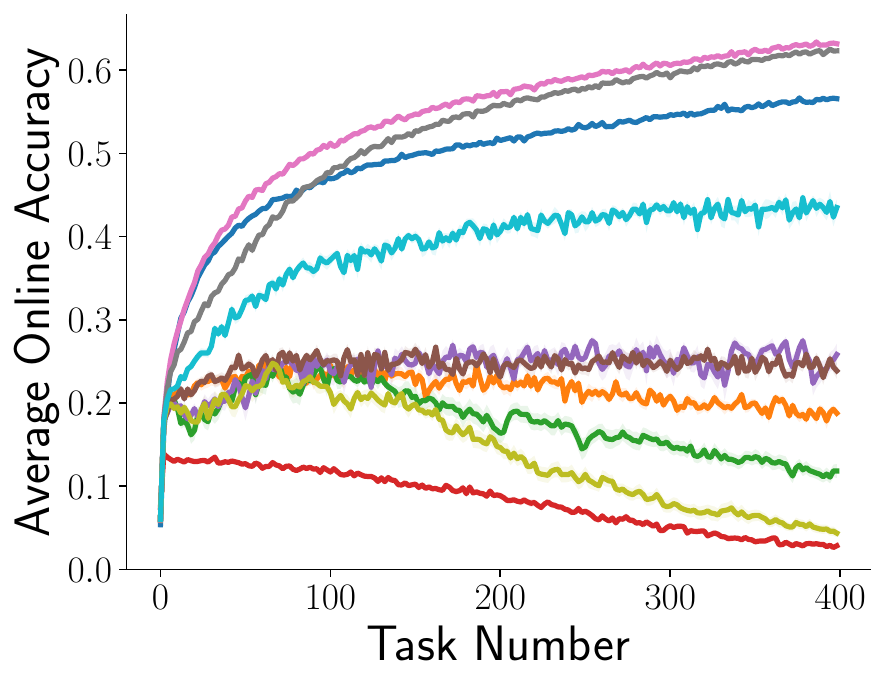}
    \label{fig:labelpermuted-imagenet-upgd-global}
 }
\vspace{-0.3cm}
\includegraphics[width=0.6\textwidth]{figures/new-wide-legend.pdf}
\caption{Performance of methods on Label-permuted CIFAR-10, Label-permuted EMNIST, and Label-permuted mini-ImageNet. The higher the online accuracy, the better.}
\end{figure}

Fig.\ \ref{fig:labelpermuted-cifar10-upgd-global} shows that methods addressing catastrophic forgetting continually improve their performance. We use our loss of plasticity metric to check whether learners experience any loss of plasticity. Fig.\ \ref{fig:lop-heat-map} shows that the majority of methods have negative values, reflecting no loss of plasticity, which also indicates that catastrophic forgetting is the major issue in this problem. Although all learners can improve their performance, some can improve more than others, according to their forgetting metric (see Fig.\ \ref{fig:cf-heat-map}). We observe that learners without an explicit mechanism for addressing forgetting (e.g., AdamW) can reach a maximum accuracy of slightly above 40\%, compared to learners addressing catastrophic forgetting that keep improving their performance.


\subsection{UPGD Against Loss of Plasticity and Catastrophic Forgetting}
\label{section:lop-cf-experiments}
In this section, we study the interplay of catastrophic forgetting and loss of plasticity using the \emph{Label-permuted EMNIST} problem. The EMNIST dataset is an extended form of MNIST that has $47$ classes, of both digits and letters, instead of just $10$ digits. 
We permute the inputs every $2500$ time steps and present the learners with a total $1$ million examples, one example per time step, using the same network architecture from the first problem.
Hence, this problem is also suitable for studying catastrophic forgetting. Since EMNIST has more classes, label re-occurrence probability becomes significantly smaller, leading to more non-stationarity. Thus, we expect that loss of plasticity might be present in this problem. Fig.\ \ref{fig:lop-heat-map} shows that most learners indeed suffer from loss of plasticity.

Fig.\ \ref{fig:labelpermuted-emnist-upgd-global} shows that methods addressing catastrophic forgetting, including UPGD-W but except S-RWalk and S-SI, keep improving their performance and outperform methods that only address loss of plasticity. Notably, we observe that S-RWalk, which addressed catastrophic forgetting in the previous problem, struggles in this problem, likely due to the additional loss of plasticity. On the other hand, the performance of S-SI and the rest of the methods keeps deteriorating over time.

\begin{figure}[ht]
\centering
\vspace{-1.2cm}
\subfigure[Forgetting ($\times 100$)]{
\includegraphics[width=0.51\textwidth]{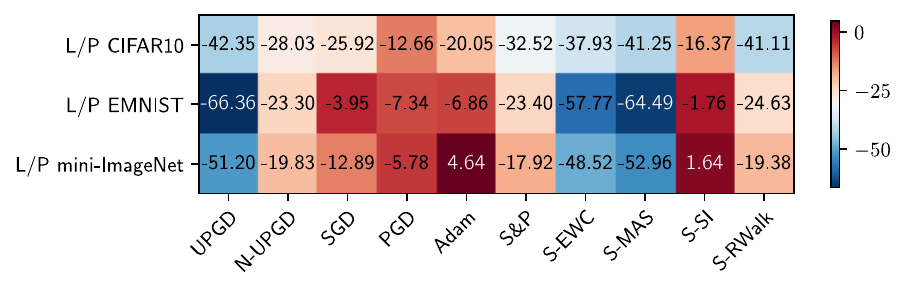}
\label{fig:cf-heat-map}
}
\subfigure[Loss of Plasticity ($\times 100$)]{
\includegraphics[width=0.45\textwidth]{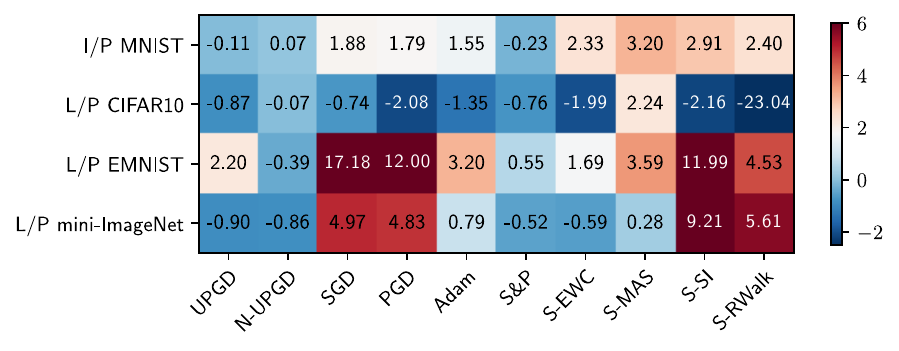}
\label{fig:lop-heat-map}
}
\vspace{-0.3cm}
\label{fig:heat-map}
\caption{Forgetting and loss of plasticity of each method in different problems. I/P is short for input-permuted, and L/P is short for label-permuted. (a) forgetting is reported using our overall forgetting metric, where positive values indicate forgetting. The metric is computed starting from the sample number $1$ using windows of twice the task length.
(b) loss of plasticity is measured based on the overall loss in plasticity, meaning that positive values indicate loss of plasticity. The metric is computed starting from the sample number $0.5$ million using windows of twice the task length.}
\vspace{-0.5cm}
\end{figure}

Next, we perform a large-scale experiment using the \emph{Label-permuted mini-ImageNet} problem, which has a large number of classes; hence, loss of plasticity is expected along with catastrophic forgetting.
The mini-ImageNet (Vinyals et al.\ 2016) is a subset of the ImageNet dataset. The mini-ImageNet dataset contains $60{,}000$ RGB images of size $84\times84$ belonging to $100$ classes; each class has $600$ images. In Label-permuted mini-ImageNet, the labels are permuted every $2500$ time step. Each learner uses a fully connected network of two layers on top of a pre-trained \emph{ResNet-50} (He et al.\ 2016) on ImageNet with fixed weights.

Fig.\ \ref{fig:labelpermuted-imagenet-upgd-global} exhibits the same trends manifested in the previous problem, where methods addressing catastrophic forgetting (e.g., S-EWC) performed the best, whereas methods addressing loss of plasticity (e.g., S\&P) only maintained their performance at a lower level. 
Fig.\ \ref{fig:cf-heat-map} and Fig.\ \ref{fig:lop-heat-map} demonstrate that both metrics of UPGD-W are among the best overall values across all problems, indicating diminished forgetting and loss of plasticity. We refer the reader to Appendix \ref{appendix:ablation-study} for an ablation study on the components of UPGD.


\subsection{UPGD against Policy Collapse}
Lastly, we study the role of UPGD in preventing a gradual performance drop or \emph{policy collapse} (Dohare et al.\ 2023b) in reinforcement learning (RL). Dohare et al.\ (2023b) showed that non-stationarities in RL might exhibit both continual learning issues and demonstrated that the PPO algorithm (Schulman et al.\ 2017) can experience policy collapse when trained for an extended period. 
Since UPGD helps against catastrophic forgetting and loss of plasticity, we test whether UPGD can address policy collapse in PPO. 
We devise an adaptive variant of our method based on Adam normalization (Kingma \& Ba 2015), which we call \emph{Adaptive UPGD} (AdaUPGD), making its step size robust in several RL environments without tuning. We refer the reader to Algorithm \ref{alg:adaupgd} for the full description and Appendix \ref{appendix:details-policy-collapse} for experimental details. Fig.\ \ref{fig:ppo-experiment} shows that AdaUPGD continually improves its performance, unlike Adam, which suffers from policy collapse. 
\vspace{-0.1cm}
\begin{figure}[ht]
\centering
\includegraphics[width=0.24\textwidth]{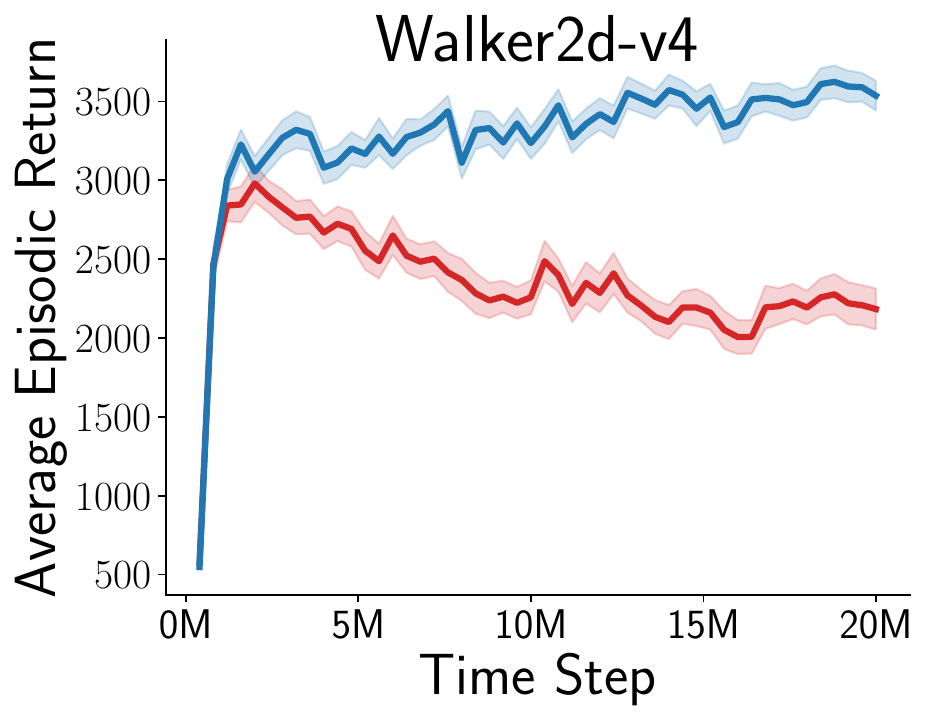}
\includegraphics[width=0.24\textwidth]{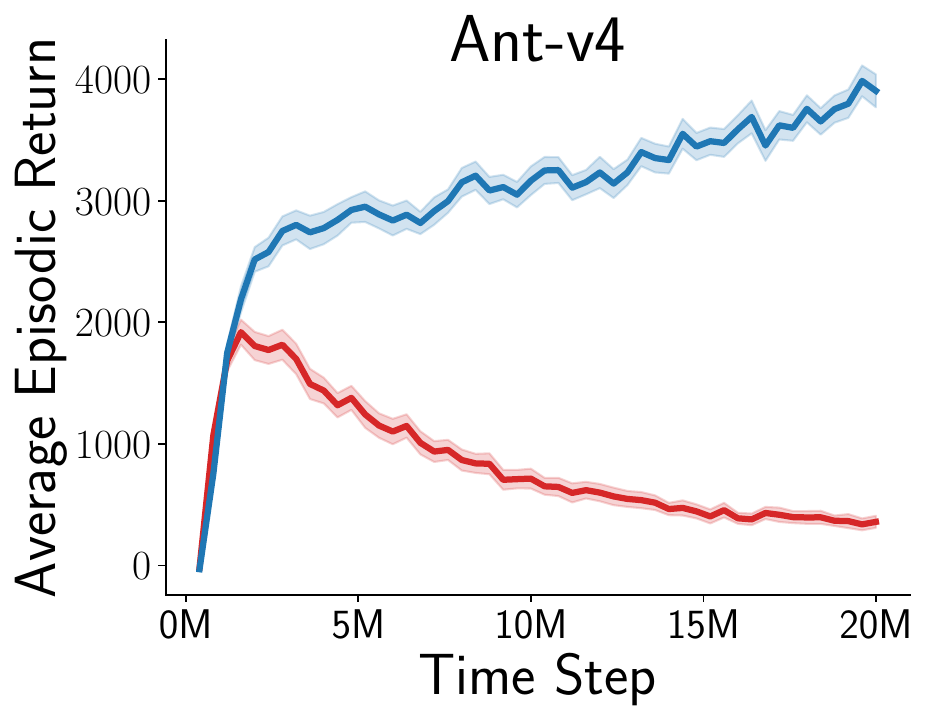}
\includegraphics[width=0.24\textwidth]{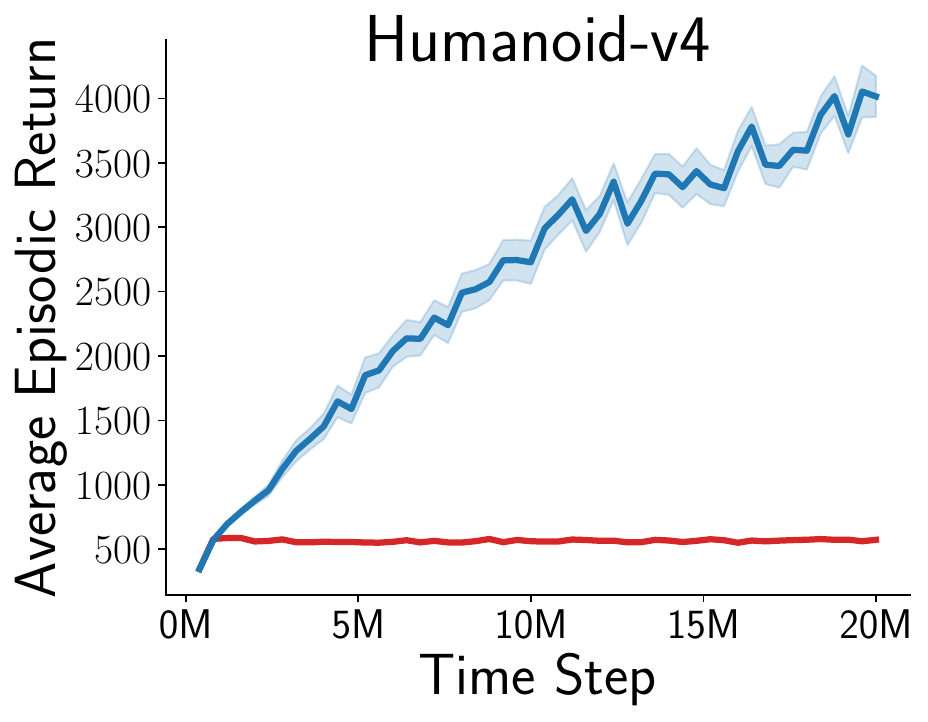}
\includegraphics[width=0.24\textwidth]{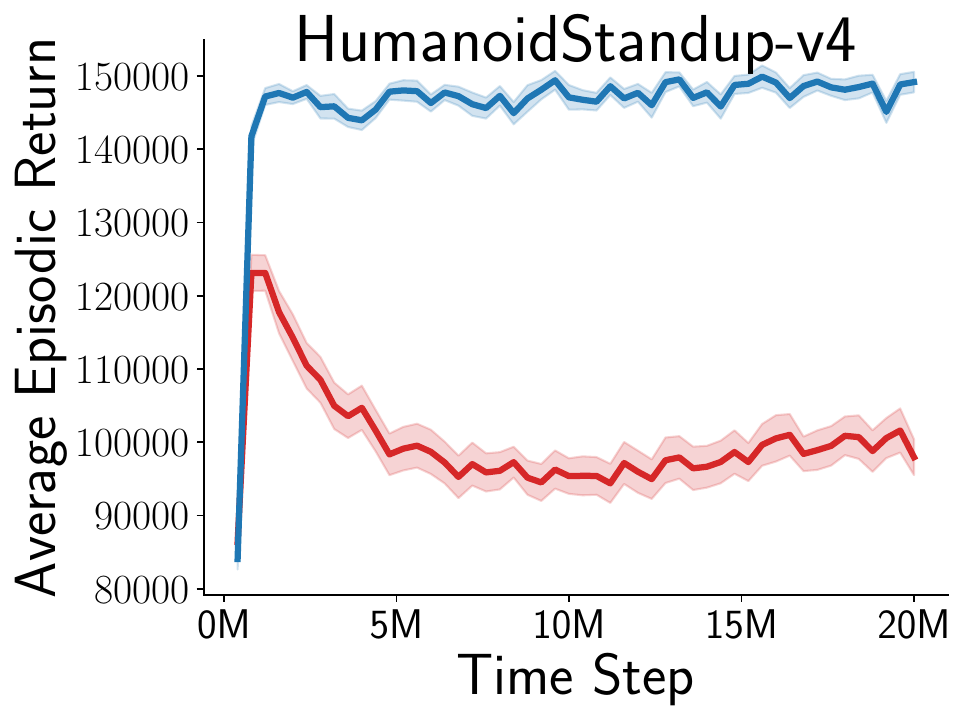}
\vspace{-0.4cm}
\caption{Performance of Adaptive UPGD (blue) and Adam (red) with PPO in different MuJoCo environments. The results are averaged over $40$ independent runs.}
\label{fig:ppo-experiment}
\vspace{-0.6cm}
\end{figure}


\section{Conclusion}
In this paper, we introduced a novel approach to mitigating loss of plasticity and catastrophic forgetting. 
We devised learning rules that protect useful weights and perturb less useful ones, thereby maintaining plasticity and reducing forgetting. 
We performed a series of challenging streaming learning experiments with many non-stationarities alongside reinforcement learning experiments. 
Our experiments showed that UPGD maintains network plasticity and reuses previously learned useful features, being among the only few methods that can address both issues effectively.
Our work endeavors to pave the way for a new class of methods for addressing these issues together. Finally, we discuss the limitations of our method and future works in Appendix \ref{appendix:future-works}.


\section*{Acknowledgement}
We gratefully acknowledge funding from the Canada CIFAR AI Chairs program, the Reinforcement Learning and Artificial Intelligence (RLAI) laboratory, the Alberta Machine Intelligence Institute (Amii), and the Natural Sciences and Engineering Research Council (NSERC) of Canada. We would also like to thank the Digital Research Alliance of Canada for providing the computational resources needed. We especially thank Shibhansh Dohare and Homayoon Farrahi for the useful discussions that helped improve this paper. We thank Nishanth Anand, Alex Lewandowski, and Khurram Javed for the helpful discussions. We also thank all of the Lifelong Reinforcement Learning Barbados Reinforcement Learning Workshop 2023 participants for their feedback and discussions.


\section*{References}
\hangin Abbas, Z., Zhao, R., Modayil, J., White, A., \& Machado, M.\ C.\ (2023). Loss of Plasticity in Continual Deep Reinforcement Learning. \emph{arXiv preprint arXiv:2303.07507}.\\
\hangin Ash, J., \& Adams, R. P. (2020). On warm-starting neural network training. \emph{Advances in Neural Information Processing Systems}, \emph{33}, 3884-3894.\\
\hangin Aljundi, R., Babiloni, F., Elhoseiny, M., Rohrbach, M., \& Tuytelaars, T.\ (2018). Memory aware synapses: Learning what (not) to forget. European Conference on Computer Vision (pp. 139-154).\\
\hangin Aljundi, R., Kelchtermans, K., \& Tuytelaars, T.\ (2019). Task-free continual learning. \emph{Conference on Computer Vision and Pattern Recognition} (pp. 11254-11263).\\
\hangin Ammar, H.\ B.\ (2020). The optimisation series: The story of Adam. Machine Learning and AI Academy.\\
\hangin Carpenter, G.\ A., \& Grossberg, S.\ (1987). ART 2: Self-organization of stable category recognition codes for analog input patterns. \emph{Applied Optics}, \emph{26}(23), 4919-4930.\\
\hangin Chaudhry, A., Dokania, P. K., Ajanthan, T., \& Torr, P. H. (2018). Riemannian walk for incremental learning: Understanding forgetting and intransigence. \emph{European Conference on Computer Vision} (pp. 532-547).\\
\hangin Chaudhry, A., Rohrbach, M., Elhoseiny, M., Ajanthan, T., Dokania, P.\ K., Torr, P. H., \& Ranzato, M.\ A.\ (2019). On tiny episodic memories in continual learning. \emph{arXiv preprint arXiv:1902.10486}.\\
\hangin Chen, S., Hou, Y., Cui, Y., Che, W., Liu, T., \& Yu, X.\ (2020). Recall and Learn: Fine-tuning deep pretrained language models with less forgetting. In \emph{Proceedings of the 2020 Conference on Empirical Methods in Natural Language Processing}.\\
\hangin Cohen, G., Afshar, S., Tapson, J., \& Van Schaik, A.\ (2017). EMNIST: Extending MNIST to handwritten letters. \emph{International Joint Conference on Neural Networks} (pp. 2921-2926).\\
\hangin De Lange, M., Aljundi, R., Masana, M., Parisot, S., Jia, X., Leonardis, A., ... \& Tuytelaars, T. (2021). A continual learning survey: Defying forgetting in classification tasks. \emph{IEEE transactions on pattern analysis and machine intelligence, 44}(7), 3366--3385.\\
\hangin De Lange, M., van de Ven, G.\ M., \& Tuytelaars, T.\ (2023). Continual evaluation for lifelong learning: Identifying the stability gap. \emph{International Conference on Learning Representations}.\\
\hangin Deng, J., Dong, W., Socher, R., Li, L.\ J., Li, K., \& Fei-Fei, L.\ (2009). Imagenet: A large-scale hierarchical image database. \emph{IEEE Conference on Computer Vision and Pattern Recognition} (pp. 248-255).\\
\hangin Dohare, S., Hernandez-Garcia, J.\ F., Rahman, P., Sutton, R.\ S., \& Mahmood, A.\ R.\ (2023a). Maintaining Plasticity in Deep Continual Learning. \emph{arXiv preprint arXiv:2306.13812}.\\
\hangin Dohare, S., Lan, Q., \& Mahmood, A. R. (2023b). Overcoming policy collapse in deep reinforcement learning. \emph{European Workshop on Reinforcement Learning}.\\
\hangin Dohare, S., Sutton, R.\ S., \& Mahmood, A.\ R. (2021). Continual backprop: Stochastic gradient descent with persistent randomness. \emph{arXiv preprint arXiv:2108.06325}.\\
\hangin Dong, X., Chen, S., \& Pan, S.\ (2017). Learning to prune deep neural networks via layer-wise optimal brain surgeon. \emph{Advances in Neural Information Processing Systems}, \emph{30}.\\
\hangin Elsayed, M.\ (2022). \emph{Investigating Generate and Test for Online Representation Search with Softmax Outputs}. M.Sc.\ thesis, University of Alberta.\\
\hangin Elsayed, M., \& Mahmood, A.\ R.\ (2022). HesScale: Scalable Computation of Hessian Diagonals. \emph{arXiv preprint arXiv:2210.11639}.\\
\hangin Fedus, W., Ghosh, D., Martin, J.\ D., Bellemare, M.\ G., Bengio, Y., \& Larochelle, H.\ (2020). On catastrophic interference in atari 2600 games. \emph{arXiv preprint arXiv:2002.12499}.\\
\hangin Goodfellow, I.\ J., Mirza, M., Xiao, D., Courville, A., \& Bengio, Y.\ (2013). An empirical investigation of catastrophic forgetting in gradient-based neural networks. \emph{arXiv preprint arXiv:1312.6211}.\\
\hangin Ge, Y., Li, Y., Wu, D., Xu, A., Jones, A.\ M., Rios, A.\ S., ... \& Itti, L.\ (2023). Lightweight Learner for Shared Knowledge Lifelong Learning. \emph{Transactions on Machine Learning Research}.\\
\hangin Gurbuz, M.\ B., \& Dovrolis, C.\ (2022). NISPA: Neuro-inspired stability-plasticity adaptation for continual learning in sparse networks. \emph{International Conference on Machine Learning} (pp. 8157-8174).\\
\hangin Hebb, D.\ O.\ (1949). The organization of behavior: A neuropsychological theory. \emph{Wiley}.\\
\hangin Hetherington, P.\ A., \& Seidenberg, M.\ S.\ (1989). Is there `catastrophic interference' in connectionist networks? \emph{Conference of the Cognitive Science Society} (pp. 26-33).\\
\hangin Han, S., Pool, J., Tran, J., \& Dally, W.\ (2015). Learning both weights and connections for efficient neural network. \emph{Advances in neural information processing systems}, 28.\\
\hangin He, T., Liu, J., Cho, K., Ott, M., Liu, B., Glass, J., \& Peng, F.\ (2021). Analyzing the forgetting problem in pretrain-finetuning of open-domain dialogue response models. In \emph{Conference of the European Chapter of the Association for Computational Linguistics}.\\
\hangin He, K., Zhang, X., Ren, S., \& Sun, J.\ (2016). Deep residual learning for image recognition. \emph{Conference on Computer Vision and Pattern Recognition} (pp. 770-778).\\
\hangin He, K., Zhang, X., Ren, S., \& Sun, J.\ (2015). Delving deep into rectifiers: Surpassing human-level performance on imagenet classification. \emph{IEEE International Conference on Computer Vision} (pp. 1026-1034).\\
\hangin Hassibi, B., \& Stork, D.\ (1992). Second order derivatives for network pruning: Optimal brain surgeon. \emph{Advances in neural information processing systems}, \emph{5}.\\
\hangin Hayes, T.\ L., Cahill, N.\ D., \& Kanan, C. (2019). Memory efficient experience replay for streaming learning. \emph{International Conference on Robotics and Automation} (pp. 9769-9776).\\
\hangin Hayes, T.\ L., Kafle, K., Shrestha, R., Acharya, M., \& Kanan, C. (2020). Remind your neural network to prevent catastrophic forgetting. \emph{European Conference on Computer Vision} (pp. 466-483).\\
\hangin Hayes, T. L., Krishnan, G. P., Bazhenov, M., Siegelmann, H. T., Sejnowski, T. J., \& Kanan, C. (2021). Replay in deep learning: Current approaches and missing biological elements. \emph{Neural Computation}, \emph{33}(11), 2908-2950.\\
\hangin Hayes, T.\ L., Kanan, C. (2022). Online continual learning for embedded devices. In \emph{Conference on Lifelong Learning Agents, PMLR 199}:744--766.\\
\hangin Isele, D., \& Cosgun, A.\ (2018). Selective experience replay for lifelong learning. \emph{AAAI Conference on Artificial Intelligence} (pp. 3302-3309).\\
\hangin Jung, D., Lee, D., Hong, S., Jang, H., Bae, H., \& Yoon, S.\ (2022). New insights for the stability-plasticity dilemma in online continual learning. \emph{International Conference on Learning Representations}.\\
\hangin Kaelbling, L.\ P.\ (1994). Associative reinforcement learning: A generate and test algorithm. \emph{Machine Learning 15}, 299--319.\\
\hangin Karnin, E.\ D.\ (1990). A simple procedure for pruning back-propagation trained neural networks. \emph{IEEE transactions on neural networks}, \emph{1}(2), 239-242.\\
\hangin Kemker, R., McClure, M., Abitino, A., Hayes, T., \& Kanan, C.\ (2018). Measuring catastrophic forgetting in neural networks. In \emph{Proceedings of the AAAI conference on artificial intelligence 32}:1.\\
\hangin Kingma, D.\ P.\ \& Ba, J.\ (2015). Adam: A method for stochastic optimization. \emph{International Conference on Learning Representations}. \\
\hangin Kirkpatrick, J., Pascanu, R., Rabinowitz, N., Veness, J., Desjardins, G., Rusu, A.\ A., ... \& Hadsell, R.\ (2017). Overcoming catastrophic forgetting in neural networks. \emph{National Academy of Sciences}, \emph{114}(13), 3521-3526.\\
\hangin Kim, S., Noci, L., Orvieto, A., \& Hofmann, T.\ (2023). Achieving a Better Stability-Plasticity Trade-off via Auxiliary Networks in Continual Learning. \emph{Conference on Computer Vision and Pattern Recognition} (pp. 11930-11939).\\
\hangin Krizhevsky, A.\ (2009) \emph{Learning Multiple Layers of Features from Tiny Images}.\ Ph.D. dissertation, University of Toronto.\\
\hangin Konorski, J.\ (1948). Conditioned Reflexes and Neuron Organization. \emph{Cambridge University Press}.\\
\hangin Kumar, S., Marklund, H., \& Van Roy, B.\ (2023). Maintaining Plasticity via Regenerative Regularization. \emph{arXiv preprint arXiv:2308.11958}.\\
\hangin Lan, Q., Pan, Y., Luo, J., \& Mahmood, A.\ R.\ (2023). Memory-efficient reinforcement learning with value-based knowledge consolidation. \emph{Transactions on Machine Learning Research}.\\
\hangin LeCun, Y., Denker, J., \& Solla, S. (1989). Optimal brain damage. \emph{Advances in neural information processing systems}, \emph{2}.\\
\hangin LeCun, Y., Bottou, L., Bengio, Y., \& Haffner, P.\ (1998). Gradient-based learning applied to document recognition. \emph{Proceedings of the IEEE}, \emph{86}(11), 2278-2324.\\
\hangin Lesort, T., Ostapenko, O., Rodriguez, P., Arefin, M.\ R., Misra, D., Charlin, L., \& Rish, I. (2023). Challenging Common Assumptions about Catastrophic Forgetting and Knowledge Accumulation. \emph{Conference on Lifelong Learning Agents}.\\
\hangin Liu, V., Kumaraswamy, R., Le, L., \& White, M.\ (2019). The utility of sparse representations for control in reinforcement learning. \emph{AAAI Conference on Artificial Intelligence} (pp. 4384-4391).\\
\hangin Loshchilov, I., \& Hutter, F.\ (2019). Decoupled weight decay regularization. \emph{International Conference on Learning Representations}.\\
\hangin Lyle, C., Zheng, Z., Nikishin, E., Pires, B.\ A., Pascanu, R., \& Dabney, W.\ (2023). Understanding plasticity in neural networks. \emph{International Conference on Machine Learning} (pp. 23190-23211).\\
\hangin Mahmood, A.\ R., \& Sutton, R.\ S.\ (2013). Representation search through generate and test. \emph{AAAI Workshop on Learning Rich Representations from Low-Level Sensors} (pp. 16-21).\\
\hangin Mahmood, A.\ R.\ (2017). \emph{Incremental Off-policy Reinforcement Learning Algorithms}. PhD thesis, Department of Computing Science, University of Alberta, Edmonton, AB T6G 2E8.\\
\hangin Mozer, M. C., \& Smolensky, P.\ (1988). Skeletonization: A technique for trimming the fat from a network via relevance assessment. Advances in neural information processing systems, 1.\\
\hangin Molchanov, P., Mallya, A., Tyree, S., Frosio, I., \& Kautz, J.\ (2019). Importance estimation for neural network pruning. \emph{IEEE/CVF Conference on Computer Vision and Pattern Recognition} (pp. 11264-11272).\\
\hangin Molchanov, P., Tyree, S., Karras, T., Aila, T., \& Kautz, J.\ (2016). Pruning convolutional neural networks for resource efficient inference. \emph{arXiv preprint arXiv:1611.06440}.\\
\hangin McCloskey, M., \& Cohen, N. J.\ (1989). Catastrophic interference in connectionist networks: The sequential learning problem. \emph{Psychology of Learning and Motivation}, \emph{24}, 109–165.\\
\hangin Nair, V., \& Hinton, G.\ E.\ (2010). Rectified linear units improve restricted boltzmann machines. \emph{International Conference on Machine Learning} (pp. 807-814).\\
\hangin Nikishin, E., Oh, J., Ostrovski, G., Lyle, C., Pascanu, R., Dabney, W., \& Barreto, A.\ (2023). Deep reinforcement learning with plasticity injection. \emph{Advances in Neural Information Processing Systems}, \emph{36}.\\
\hangin Park, S., Lee, J., Mo, S., \& Shin, J. (2020). Lookahead: a far-sighted alternative of magnitude-based pruning. \emph{arXiv preprint arXiv:2002.04809}.\\
\hangin Ratcliff, R.\ (1990). Connectionist models of recognition memory: constraints imposed by learning and forgetting functions. \emph{Psychological review, 97}(2), 285--308.\\
\hangin Rusu, A.\ A., Rabinowitz, N. C., Desjardins, G., Soyer, H., Kirkpatrick, J., Kavukcuoglu, K., ... \& Hadsell, R.\ (2016). Progressive neural networks. \emph{arXiv preprint arXiv:1606.04671}.\\
\hangin Rolnick, D., Ahuja, A., Schwarz, J., Lillicrap, T., \& Wayne, G.\ (2019). Experience replay for continual learning. \emph{Advances in Neural Information Processing Systems}, \emph{32}.\\
\hangin Schulman, J., Wolski, F., Dhariwal, P., Radford, A., \& Klimov, O.\ (2017). Proximal policy optimization algorithms. \emph{arXiv preprint arXiv:1707.06347}.\\
\hangin Schwarz, J., Czarnecki, W., Luketina, J., Grabska-Barwinska, A., Teh, Y.\ W., Pascanu, R., \& Hadsell, R. (2018). Progress \& compress: A scalable framework for continual learning. \emph{International Conference on Machine Learning} (pp. 4528-4537).\\
\hangin Selfridge, O.\ G.\ (1958). Pandemonium: a paradigm for learning. \emph{Mechanism of Thought Processes: Proceedings of a Symposium Held at the National Physical Laboratory}.\\
\hangin Sutton, R.\ S.\ (1986). Two problems with backpropagation and other steepest-descent learning procedures for networks. \emph{Proceedings of the Eighth Annual Conference of the Cognitive Science Society}, pp. 823--831.\\
\hangin Tresp, V., Neuneier, R., \& Zimmermann, H.\ G.\ (1996). Early brain damage. \emph{Advances in Neural Information Processing Systems}, 9.\\
\hangin Todorov, E., Erez, T., \& Tassa, Y.\ (2012). Mujoco: A physics engine for model-based control. \emph{International Conference on Intelligent Robots and Systems} (pp.\ 5026-5033).\\
\hangin Tanaka, H., Kunin, D., Yamins, D. L., \& Ganguli, S. (2020). Pruning neural networks without any data by iteratively conserving synaptic flow. \emph{Advances in Neural Information Processing Systems}, \emph{33}, 6377-6389.\\
\hangin Van de Ven, G.\ M., Siegelmann, H.\ T., \& Tolias, A.\ S. (2020). Brain-inspired replay for continual learning with artificial neural networks. \emph{Nature communications, 11}(1), 4069.\\
\hangin Vinyals, O., Blundell, C., Lillicrap, T., \& Wierstra, D. (2016). Matching networks for one shot learning. \emph{Advances in neural information processing systems}, \emph{29}.\\
\hangin Wang, Y.$^\star$, Vasan, G.$^\star$, \& Mahmood, A. R. (2023). Real-time reinforcement learning for vision-based robotics utilizing local and remote computers. \emph{International Conference on Robotics and Automation}.\\
\hangin Wortsman, M., Ramanujan, V., Liu, R., Kembhavi, A., Rastegari, M., Yosinski, J., \& Farhadi, A.\ (2020). Supermasks in superposition. \emph{Advances in Neural Information Processing Systems}, \emph{33}, 15173-15184.\\
\hangin Zaheer, M., Reddi, S., Sachan, D., Kale, S., \& Kumar, S.\ (2018). Adaptive methods for nonconvex optimization. \emph{Advances in Neural Information Processing systems}, \emph{31}.\\
\hangin Zenke, F., Poole, B., \& Ganguli, S.\ (2017). Continual learning through synaptic intelligence. \emph{International Conference on Machine Learning} (pp. 3987-3995).
\hangin Zhou, M., Liu, T., Li, Y., Lin, D., Zhou, E., \& Zhao, T.\ (2019). Toward understanding the importance of noise in training neural networks. \emph{International Conference on Machine Learning} (pp. 7594-7602).


\newpage
\appendix

\section{Convergence Analysis for UPGD and Non-protecting UPGD}
\label{appendix:convergence-proof}

In this section, we provide convergence analysis for UPGD and Non-protecting UPGD in nonconvex stationary problems. We focus on the stochastic version of these two algorithms since we are interested in continual learners performing updates at every time step. The following proof shows the convergence to a stationary point up to the statistical limit of the variance of the gradients, where $\| \nabla f(\boldsymbol\theta) \|^2 \leq \delta$ represent a $\delta$-accurate solution and is used to measure the stationarity of $\boldsymbol\theta$. Nonconvex optimization problems can be written as:
\begin{align*}
\min_{\boldsymbol\theta \in \mathbb{R}^d} f (\boldsymbol\theta) &\doteq \mathbb{E}_{S \sim P} \left[ \mathcal{L}(\boldsymbol\theta, S)\right],
\end{align*}
where $f$ is the expected loss, $\mathcal{L}$ is the sample loss, $S$ is a random variable for samples and $\boldsymbol\theta$ is a vector of weights parametrizing $\mathcal{L}$. We assume that $\mathcal{L}$ is $L$-smooth, meaning that there exist a constant $L$ that satisfy
\begin{align}
\| \nabla \mathcal{L}(\boldsymbol\theta_1, s) - \nabla \mathcal{L}(\boldsymbol\theta_2, s) \| \leq L \| \boldsymbol\theta_2 - \boldsymbol\theta_1 \|, \; \forall \boldsymbol\theta_1, \boldsymbol\theta_2 \in \mathbb{R}^d, s \in \mathcal{S}.
\end{align}

We further assume that $\mathcal{L}$ has bounded variance in the gradients $\mathbb{E}[\| \nabla\mathcal{L}(\boldsymbol\theta, S) - \nabla f(\boldsymbol\theta)  \|^2] \leq \sigma_g^2, \forall \boldsymbol\theta \in \mathbb{R}^d$. Note that the assumption of L-smoothness on the sample loss result in L-smooth expected loss too, which is given by $\| \nabla f(\boldsymbol\theta_1) -\nabla f(\boldsymbol\theta_2) \| \leq L \| \boldsymbol\theta_1 -\boldsymbol\theta_2 \|$. We assume that the perturbation noise in both UPGD versions has bounded variance $\mathbb{E}[\|\boldsymbol\xi\|^2] \leq \sigma^2_n$. For the simplicity of this proof, we use the true instantaneous weight utility, not an approximated one.
We assume that the utility of any connection in the network is upper bounded by a number close to one $\bar{u} \to 1^-$.


\subsection{Non-protecting Utility-based Perturbed Gradient Descent}

Remember that the update equation of the Non-protecting UPGD can be written as follows when the parameters are stacked in a single vector $\boldsymbol\theta$:
\begin{align*}
\theta_{t+1, i} = \theta_{t, i} - \alpha (g_{t,i} +\xi_{t,i}\rho_{t,i}).
\end{align*}
where $\alpha$ is the step size, $\vg_t$ is the sample gradient vector at time $t$, $\boldsymbol\rho_t=(1- \vu_{t})$ is the opposite utility vector, and $\boldsymbol\xi_{t}$ is the noise perturbation. Since the function $f$ is $L$-smooth, we can write the following:
\begin{align}
f(\boldsymbol\theta_{t+1} ) &\leq  f(\boldsymbol\theta_{t}) + (\nabla f(\boldsymbol\theta_t))^{\top} (\boldsymbol\theta_{t+1} - \boldsymbol\theta_t) + \frac{L}{2} \| \boldsymbol\theta_{t+1} - \boldsymbol\theta_t \|_2^2\\
&= f(\boldsymbol\theta_t) - \alpha \sum_{i=1}^{d} \left( \nabla[f(\boldsymbol\theta_t)]_i (g_{t,i}+\xi_{t,i} \rho_{t,i}) \right) + \frac{L \alpha^2}{2} \sum_{i=1}^{d} (g_{t,i}+\xi_{t,i}\rho_{t,i})^2.
\end{align}

Next, we take the conditional expectation of $f(\boldsymbol\theta_{t+1})$ as follows:
\begin{align*}
\mathbb{E}_t [f(\boldsymbol\theta_{t+1}) | \boldsymbol\theta_t] &\leq  f(\boldsymbol\theta_t) - \alpha \sum_{i=1}^{d} \nabla[f(\boldsymbol\theta_t)]_i \mathbb{E}_t[(g_{t,i}+\xi_{t,i} \rho_{t,i})] + \frac{L \alpha^2}{2} \sum_{i=1}^{d} \mathbb{E}_t[(g_{t,i}+\xi_{t,i}\rho_{t,i})^2]  \\
&= f(\boldsymbol\theta_t) - \alpha \sum_{i=1}^{d} \nabla[f(\boldsymbol\theta_t)]_i \mathbb{E}_t[g_{t,i}] + \frac{L \alpha^2}{2} \sum_{i=1}^{d} \left( \mathbb{E}_t[g_{t,i}^2] + \mathbb{E}_t[(\xi_{t,i}\rho_{t,i})^2] \right)\\
&= f(\boldsymbol\theta_t) - \alpha \sum_{i=1}^{d} \nabla[f(\boldsymbol\theta_t)]_i^2+ \frac{L \alpha^2}{2} \sum_{i=1}^{d} \left( \mathbb{E}_t[g_{t,i}^2] + \mathbb{E}_t[(\xi_{t,i}\rho_{t,i})^2]\right)\\
&= f(\boldsymbol\theta_t) - \alpha \| \nabla f(\boldsymbol\theta_t)\|^2 + \frac{L \alpha^2}{2} \sum_{i=1}^{d} \left( \mathbb{E}_t[g_{t,i}^2] + \mathbb{E}_t[(\xi_{t,i}\rho_{t,i})^2]\right)\\
&\leq f(\boldsymbol\theta_t) - \alpha \| \nabla f(\boldsymbol\theta_t)\|^2 + \frac{L \alpha^2}{2} \sum_{i=1}^{d} \mathbb{E}_t[g_{t,i}^2] + \frac{L \alpha^2}{2} \sigma^2_n.
\end{align*}

Note that $\mathbb{E}_t[g_{t,i}] = [\nabla f(\boldsymbol\theta_t)]_i$, $\mathbb{E}_t[\xi_{t,i}] = 0$, and $\mathbb{E}[(\xi_{t,i} \rho_{t,i})^2] \leq \mathbb{E}[\xi_{t,i}^2],$ since $0 \leq \rho_{t,i} \leq 1 \; \forall t,i$. From the bounded variance assumption, we know that the $\mathbb{E}[\|\vg_{t}\|^2]$ is bounded as follows:
\begin{align*}
\mathbb{E}[\|\vg_{t}\|^2] &\leq \frac{\sigma_g^2}{b_t} + \|\nabla f(\boldsymbol\theta_t)\|^2,
\end{align*}

where $b_t$ is the batch size at time step $t$. We can now bound $\mathbb{E}_t [f(\boldsymbol\theta_{t+1}) | \boldsymbol\theta_t]$ as follows:
\begin{align*}
\mathbb{E}_t [f(\boldsymbol\theta_{t+1}) | \boldsymbol\theta_t] &\leq  f(\boldsymbol\theta_t) - \alpha \| \nabla f(\boldsymbol\theta_t)\|^2 + \frac{L \alpha^2}{2} \left(\sigma^2_n + \frac{\sigma_g^2}{b_t} + \|\nabla f(\boldsymbol\theta_t)\|^2 \right)\\
&=  f(\boldsymbol\theta_t) - \frac{2\alpha - L\alpha^2}{2}\| \nabla f(\boldsymbol\theta_t)\|^2 + \frac{L \alpha^2}{2} \left(\sigma^2_n + \frac{\sigma_g^2}{b_t} \right).
\end{align*}

Rearranging the inequality, taking expectations on both sides, and using the telescopic sum, we can write the following:
\begin{align*}
 \frac{2\alpha - L\alpha^2}{2} \sum_{t=1}^{T}\mathbb{E}\| \nabla f(\boldsymbol\theta_t)\|^2  &\leq f(\boldsymbol\theta_1) - \mathbb{E} [f(\boldsymbol\theta_{T+1})] + \frac{L \alpha^2 T}{2} \left(\sigma^2_n + \frac{\sigma_g^2}{b_t} \right).
\end{align*}

Multiplying both sides by $\frac{2}{T(2\alpha - L\alpha^2)}$ and using the fact that $f$ is the lowest at the global minimum $\boldsymbol\theta^*$: $f(\boldsymbol\theta_{T+1}) \geq f(\boldsymbol\theta^*)$, we can write the following:
\begin{align*}
 \frac{1}{T}\sum_{t=1}^{T} \mathbb{E} \|\nabla f(\boldsymbol\theta_t)\|^2  &\leq 2\frac{f(\boldsymbol\theta_1) - f(\boldsymbol\theta^*)}{T(2\alpha - L\alpha^2)} + \frac{L\alpha^22(\sigma^2_n b_t + \sigma_g^2)}{b_t(2\alpha - L\alpha^2)}.
\end{align*}
Therefore, the algorithm converges to a stationary point. However, in the limit $T\rightarrow \infty$, the algorithm has to have an increasing batch size or a decreasing step size to converge, which is the same requirement for convergence of other stochastic gradient-based methods at the limit (see Zaheer et al.\ 2018, Ammar 2020).

\subsection{Utility-based Perturbed Gradient Descent}
Remember that the update equation of UPGD can be written as follows when the parameters are stacked in a single vector $\boldsymbol\theta$:
\begin{align*}
\theta_{t+1, i} = \theta_{t, i} - \alpha (g_{t,i} + \xi_{t,i})\rho_{t,i}.
\end{align*}
where $\alpha$ is the step size, $\vg_t$ is the sample gradient vector at time $t$, $\boldsymbol\rho_t=(1- \vu_{t})$ is the opposite utility vector, and $\boldsymbol\xi_{t}$ is the noise perturbation. Since the function $f$ is $L$-smooth, we can write the following:
\begin{align}
f(\boldsymbol\theta_{t+1} ) &\leq  f(\boldsymbol\theta_{t}) + (\nabla f(\boldsymbol\theta_t))^{\top} (\boldsymbol\theta_{t+1} - \boldsymbol\theta_t) + \frac{L}{2} \| \boldsymbol\theta_{t+1} - \boldsymbol\theta_t \|_2^2\\
&= f(\boldsymbol\theta_t) - \alpha \sum_{i=1}^{d} \left( \nabla[f(\boldsymbol\theta_t)]_i \rho_{t,i} (g_{t,i}+\xi_{t,i} ) \right) + \frac{L \alpha^2}{2} \sum_{i=1}^{d} (g_{t,i}+\xi_{t,i})^2\rho_{t,i}^2. \end{align}

Next, we take the conditional expectation of $f(\boldsymbol\theta_{t+1})$ as follows:
\begin{align*}
\mathbb{E}_t [f(\boldsymbol\theta_{t+1}) | \boldsymbol\theta_t] &\leq  f(\boldsymbol\theta_t) - \alpha \sum_{i=1}^{d} \nabla[f(\boldsymbol\theta_t)]_i \mathbb{E}_t[\rho_{t,i} (g_{t,i}+\xi_{t,i} )] + \frac{L \alpha^2}{2} \sum_{i=1}^{d} \mathbb{E}_t[(g_{t,i}+\xi_{t,i})^2\rho_{t,i}^2]  \\
&= f(\boldsymbol\theta_t) - \alpha \sum_{i=1}^{d} \nabla[f(\boldsymbol\theta_t)]_i^2 \mathbb{E}_t[\rho_{t,i}] + \frac{L \alpha^2}{2} \sum_{i=1}^{d} \mathbb{E}_t[g_{t,i}^2] \mathbb{E}_t[\rho_{t,i}^2]+\mathbb{E}_t[(\xi_{t,i}\rho_{t,i})^2] \\
&\leq f(\boldsymbol\theta_t) - \alpha \bar{\rho} \sum_{i=1}^{d} \nabla[f(\boldsymbol\theta_t)]_i^2 + \frac{L \alpha^2}{2} \left(\sigma^2_n + \frac{\sigma_g^2}{b_t} + \| \nabla f(\boldsymbol\theta_t)\|^2 \right) \\
&= f(\boldsymbol\theta_t) - \left( \alpha\bar{\rho} - \frac{L \alpha^2}{2} \right) \| \nabla f(\boldsymbol\theta_t)\|^2   + \frac{L \alpha^2}{2} \left(\sigma^2_n + \frac{\sigma_g^2}{b_t}\right).
\end{align*}

Note that $\bar\rho=1-\bar{u}$, $\mathbb{E}_t[g_{t,i}] = [\nabla f(\boldsymbol\theta_t)]_i$, $\mathbb{E}_t[\xi_{t,i}] = 0$, and $\mathbb{E}[(\xi_{t,i} \rho_{t,i})^2] \leq \mathbb{E}[\xi_{t,i}^2],$ since $0 \leq \rho_{t,i} \leq 1 \; \forall t,i$.

Rearranging the inequality, taking expectations on both sides, and using the telescopic sum, we can write the following:
\begin{align*}
 \frac{2\alpha\bar{\rho}-L\alpha^2}{2} \sum_{t=1}^{T}\mathbb{E}\| \nabla f(\boldsymbol\theta_t)\|^2  &\leq f(\boldsymbol\theta_1) - \mathbb{E} [f(\boldsymbol\theta_{T+1})] + \frac{L \alpha^2 T}{2} \left(\sigma^2_n + \frac{\sigma_g^2}{b_t} \right).
\end{align*}

Multiplying both sides by $\frac{2}{T(2\alpha\bar{\rho} - L\alpha^2)}$ and using the fact that $f$ is the lowest at the global minimum $\boldsymbol\theta^*$: $f(\boldsymbol\theta_{T+1}) \geq f(\boldsymbol\theta^*)$, we can write the following:
\begin{align*}
 \frac{1}{T}\sum_{t=1}^{T} \mathbb{E} \|\nabla f(\boldsymbol\theta_t)\|^2  &\leq 2\frac{f(\boldsymbol\theta_1) - f(\boldsymbol\theta^*)}{T(2\alpha\bar{\rho} - L\alpha^2)} + \frac{L\alpha^2(\sigma^2_n b_t + \sigma_g^2)}{b_t(2\alpha\bar{\rho}- L\alpha^2)}.
\end{align*}
Therefore, the algorithm converges to a stationary point. However, in the limit $T\rightarrow \infty$, the algorithm has to have an increasing batch size or a decreasing step size to converge, which is the same requirement for convergence of other stochastic gradient-based methods at the limit (see Zaheer et al.\ 2018, Ammar 2020).


\section{Utility Propagation}
\label{appendix:utility-propagation}
The instantaneous utility measure can be used in a recursive formulation, allowing for backward propagation. We can get a recursive formula for the utility equation for connections in a neural network. This property is a result of Theorem \ref{thm:utility-propagation}.

\begin{theorem}
\label{thm:utility-propagation}
If the second-order off-diagonal terms in all layers in a neural network except for the last one are zero and all higher-order derivatives are zero, the true weight utility for the weight $ij$ at the layer $l$ can be propagated using the following recursive formulation:
\begin{align*}
U_{l,i,j}(Z) &\doteq f_{l,i,j} + s_{l,i,j} 
\end{align*}
where
\begin{align*}
f_{l,i,j} &\doteq \frac{\sigma_a^{\prime}(a_{l,i})}{h_{l, i} } h_{l-1, j} W_{l,i,j}\sum_{k=1}^{|\va_{l+1}|} f_{l+1,k,i},\\
s_{l,i,j} &\doteq  \frac{1}{2} h^2_{l-1, j}  W^2_{l, i, j} \sum_{k=1}^{|\va_{l+1}|} \Bigg(2s_{l+1,k,i} \frac{\sigma_a^{\prime}(a_{l,i})^2}{h^2_{l,i}} -  \frac{\sigma_a^{\prime\prime}(a_{l,i})}{h_{l,i}} f_{l+1,k,i} \Bigg).
\end{align*}
\end{theorem}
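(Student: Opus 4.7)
The plan is to derive both recursive formulas by backward induction through the network, starting from the quadratic Taylor expansion already established earlier in the paper,
\begin{align*}
U_{l,i,j}(Z) \approx f_{l,i,j} + s_{l,i,j} = -\frac{\partial \mathcal{L}}{\partial W_{l,i,j}} W_{l,i,j} + \frac{1}{2}\frac{\partial^2 \mathcal{L}}{\partial W_{l,i,j}^2} W_{l,i,j}^2,
\end{align*}
where the approximation is in fact exact because third and higher derivatives vanish by hypothesis. The strategy is to re-express the partial derivatives of $\mathcal{L}$ with respect to $W_{l,i,j}$ in terms of partial derivatives with respect to the next layer's pre-activations $a_{l+1,k}$, and then to invert the definitions of $f_{l+1,k,i}$ and $s_{l+1,k,i}$ in order to swap those derivatives out for utilities at layer $l+1$.

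For the first-order term, I would begin with $\partial \mathcal{L}/\partial W_{l,i,j} = h_{l-1,j}\,\partial \mathcal{L}/\partial a_{l,i}$ and a standard backpropagation step to obtain $\partial \mathcal{L}/\partial a_{l,i} = \sigma_a'(a_{l,i})\sum_k W_{l+1,k,i}\,\partial \mathcal{L}/\partial a_{l+1,k}$. The key inversion is the identity $f_{l+1,k,i} = -W_{l+1,k,i}\, h_{l,i}\,\partial \mathcal{L}/\partial a_{l+1,k}$, which rearranges to $\sum_k W_{l+1,k,i}\,\partial \mathcal{L}/\partial a_{l+1,k} = -h_{l,i}^{-1}\sum_k f_{l+1,k,i}$. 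Substituting and multiplying by $-W_{l,i,j}$ yields exactly the claimed formula for $f_{l,i,j}$.

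For the second-order term, I would use $\partial^2 \mathcal{L}/\partial W_{l,i,j}^2 = h_{l-1,j}^2\,\partial^2 \mathcal{L}/\partial a_{l,i}^2$ and differentiate the expression for $\partial \mathcal{L}/\partial a_{l,i}$ above once more with respect to $a_{l,i}$. This produces a $\sigma_a''(a_{l,i})$ term that reuses the same weighted sum of first derivatives handled in the previous paragraph, plus a $\sigma_a'(a_{l,i})^2$ term containing the double sum $\sum_{k,k'} W_{l+1,k,i} W_{l+1,k',i}\,\partial^2 \mathcal{L}/\partial a_{l+1,k}\partial a_{l+1,k'}$. Here is where the hypothesis that off-diagonal second-order terms vanish at the hidden pre-activations enters: it collapses the double sum to the diagonal $\sum_k W_{l+1,k,i}^2\,\partial^2 \mathcal{L}/\partial a_{l+1,k}^2$. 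Inverting $s_{l+1,k,i} = \tfrac{1}{2} h_{l,i}^2 W_{l+1,k,i}^2\,\partial^2 \mathcal{L}/\partial a_{l+1,k}^2$ to swap diagonal Hessian entries for utilities, then multiplying by $\tfrac{1}{2} h_{l-1,j}^2 W_{l,i,j}^2$, gives back the stated recursion for $s_{l,i,j}$.

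The main obstacle is bookkeeping in the second-order step, where the $h_{l,i}^{-1}$ and $h_{l,i}^{-2}$ factors introduced by inverting $f_{l+1,\cdot,i}$ and $s_{l+1,\cdot,i}$ must align with the explicit $h_{l-1,j}$ and $h_{l-1,j}^2$ prefactors at layer $l$, and the signs arising from the leading minus in $f$ must be tracked carefully against the $\sigma_a''$ contribution. Once that accounting is done, the theorem follows from just two applications of the chain rule together with the two stated assumptions: vanishing off-diagonal pre-activation Hessian (to drop cross terms) and vanishing higher derivatives (to make the Taylor expansion exact rather than approximate).
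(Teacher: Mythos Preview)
Your proposal is correct and follows essentially the same route as the paper: both arguments start from the quadratic Taylor expression $U_{l,i,j} = -\partial_{W_{l,i,j}}\mathcal{L}\,W_{l,i,j} + \tfrac{1}{2}\partial^2_{W_{l,i,j}}\mathcal{L}\,W_{l,i,j}^2$, apply the chain rule to push the derivatives to the next-layer pre-activations $a_{l+1,k}$, invoke the off-diagonal hypothesis to keep only diagonal Hessian entries, and then multiply and divide by $h_{l,i}$ (resp.\ $h_{l,i}^2$) to recognize $f_{l+1,k,i}$ and $s_{l+1,k,i}$ inside the sums. The inversions you spell out, $f_{l+1,k,i} = -W_{l+1,k,i}h_{l,i}\,\partial_{a_{l+1,k}}\mathcal{L}$ and $s_{l+1,k,i} = \tfrac{1}{2}h_{l,i}^2 W_{l+1,k,i}^2\,\partial^2_{a_{l+1,k}}\mathcal{L}$, are exactly the substitutions the paper performs.
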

\begin{proof} 

First, we start by writing the partial derivative of the loss with respect to each weight in terms of earlier partial derivatives in the next layers as follows:
\begin{align}
\frac{\partial \mathcal{L}}{\partial a_{l,i}} &=  \sum_{k=1}^{|\va_{l+1}|} \frac{\partial \mathcal{L}}{\partial a_{l+1,k}} \frac{\partial a_{l+1,k}}{\partial h_{l,i}} \frac{\partial h_{l,i}}{\partial a_{l,i}} = \sigma_a^{\prime}(a_{l,i})\sum_{k=1}^{|\va_{l+1}|} \frac{\partial \mathcal{L}}{\partial a_{l+1,k}} W_{l+1, k,i} ,\label{equ:gradient-a}\\
\frac{\partial \mathcal{L}}{\partial W_{l,i,j}} &= \frac{\partial \mathcal{L}}{\partial a_{l,i}} \frac{\partial a_{l,i}}{\partial W_{l,i,j}} = \frac{\partial \mathcal{L}}{\partial a_{l,i}} h_{l-1,j}. \label{equ:gradient-W}
\end{align}
Next, we do the same with second-order partial derivatives. We use the hat notation for approximated second-order information (off-diagonal terms are dropped) as follows:
\begin{align}
\widehat{\frac{\partial^2 \mathcal{L}}{\partial {a^2_{l,i}}}} &\doteq \sum_{k=1}^{|\va_{l+1}|} \Bigg[ \widehat{\frac{\partial^2 \mathcal{L}}{\partial {a^2_{l+1,k}}}} W^2_{l+1,k,i} {\sigma_a^{\prime}}(a_{l,i})^2 + \frac{\partial \mathcal{L}}{\partial a_{l+1,k}} W_{l+1,k,i} \sigma_a^{\prime\prime}(a_{l,i}) \Bigg] \label{equ:hessian-approx-a},\\
\widehat{\frac{\partial^2 \mathcal{L}}{\partial {W^2_{l,i, j}}}} &\doteq \widehat{\frac{\partial^2 \mathcal{L}}{\partial {a^2_{l,i}}}}h^2_{l-1,j}. \label{equ:hessian-approx-W}
\end{align}

Now, we derive the utility propagation formulation as the sum of two recursive quantities, $f_{l, ij}$ and $s_{l,ij}$. These two quantities represent the first and second-order terms in the Taylor approximation. Using Eq. \ref{equ:gradient-a}, Eq. \ref{equ:gradient-W}, Eq. \ref{equ:hessian-approx-a}, and Eq. \ref{equ:hessian-approx-W}, we can derive the recursive formulation as follows:
\begin{align}
U_{l,i,j}(Z) &\doteq -\frac{\partial \mathcal{L}(\mathcal{W}, Z)}{\partial W_{l,i,j}} W_{l,i,j} + \frac{1}{2}\frac{\partial^2 \mathcal{L}(\mathcal{W}, Z)}{\partial W^2_{l,ij}} W^2_{l,ij} \nonumber\\
&\approx -\frac{\partial \mathcal{L}(\mathcal{W}, Z)}{\partial W_{l,i,j}} W_{l,i,j} + \frac{1}{2}\widehat{\frac{\partial^2 \mathcal{L}(\mathcal{W}, Z)}{\partial W^2_{l,ij}}} W^2_{l,ij} \nonumber\\
&= -\frac{\partial \mathcal{L}}{\partial a_{l,i}} h_{l-1,j} W_{l,i,j}  + \frac{1}{2}\widehat{\frac{\partial^2 \mathcal{L}(\mathcal{W}, Z)}{\partial a^2_{l,i,j}}} h^2_{l-1, j} W^2_{l,ij} \nonumber \\
&= f_{l, i, j} + s_{l, i,j}.
\end{align}

From here, we can write the first-order part $f_{l, i,j}$ and the second-order part $s_{l, i,j}$ as follows:
\begin{align}
f_{l, i,j} &= -\sigma_a^{\prime}(a_{l,i}) h_{l-1, j} W_{l,i,j}\sum_{k=1}^{|\va_{l+1}|} \Bigg(\frac{\partial \mathcal{L}}{\partial a_{l+1,k}} W_{l+1, k,i} \Bigg) \label{equ:first-matrix}\\
s_{l, i,j} &=\frac{1}{2} h^2_{l-1, j}  W^2_{l, i, j} \sum_{k=1}^{|\va_{l+1}|} \Bigg(\widehat{\frac{\partial^2 \mathcal{L}}{\partial {a^2_{l+1,k}}}} W^2_{l+1,k,i} {\sigma_a^{\prime}}(a_{l,i})^2 + \frac{\partial \mathcal{L}}{\partial a_{l+1,k}} W_{l+1,k,i} \sigma_a^{\prime\prime}(a_{l,i}) \Bigg)\label{equ:sec-matrix}
\end{align}
Using Eq. \ref{equ:first-matrix} and Eq. \ref{equ:sec-matrix}, we can write the recursive formulation for $f_{l,ij}$ and $s_{l,ij}$ as follows:
\begin{align}
f_{l, ij} &= -\sigma_a^{\prime}(a_{l,i}) h_{l-1, j} W_{l,i,j}\sum_{k=1}^{|\va_{l+1}|} \Bigg(\frac{\partial \mathcal{L}}{\partial a_{l+1,k}} W_{l+1, k,i} \Bigg) \nonumber \\
&= \frac{\sigma_a^{\prime}(a_{l,i})}{h_{l, i} } h_{l-1, j} W_{l,i,j}\sum_{k=1}^{|\va_{l+1}|} \Bigg(-\frac{\partial \mathcal{L}}{\partial a_{l+1,k}} h_{l, i} W_{l+1, k,i} \Bigg) \nonumber \\
&= \frac{\sigma_a^{\prime}(a_{l,i})}{h_{l, i} } h_{l-1, j} W_{l,i,j}\sum_{k=1}^{|\va_{l+1}|} f_{l+1,k,i},
\end{align}
\begin{align}
s_{l, i,j} &=\frac{1}{2} h^2_{l-1, j}  W^2_{l, i, j} \sum_{k=1}^{|\va_{l+1}|} \Bigg(\widehat{\frac{\partial^2 \mathcal{L}}{\partial {a^2_{l+1,k}}}} W^2_{l+1,k,i} {\sigma_a^{\prime}}(a_{l,i})^2 + \frac{\partial \mathcal{L}}{\partial a_{l+1,k}} W_{l+1,k,i} \sigma_a^{\prime\prime}(a_{l,i}) \Bigg) \nonumber \\
&=\frac{1}{2} h^2_{l-1, j}  W^2_{l, i, j} \sum_{k=1}^{|\va_{l+1}|} \Bigg(\widehat{\frac{\partial^2 \mathcal{L}}{\partial {a^2_{l+1,k}}}} h^2_{l,i} W^2_{l+1,k,i} \frac{\sigma_a^{\prime}(a_{l,i})^2}{h^2_{l,i}} -  \frac{\sigma_a^{\prime\prime}(a_{l,i})}{h_{l,i}} f_{l+1,k,i} \Bigg)  \nonumber \\
&=\frac{1}{2} h^2_{l-1, j}  W^2_{l, i, j} \sum_{k=1}^{|\va_{l+1}|} \Bigg(2s_{l+1,k,i} \frac{\sigma_a^{\prime}(a_{l,i})^2}{h^2_{l,i}} -  \frac{\sigma_a^{\prime\prime}(a_{l,i})}{h_{l,i}} f_{l+1,k,i} \Bigg).
\end{align}
\end{proof}


\section{Approximated Feature Utility}
\label{appendix:approximated-feature-utility}
We define the true utility of a feature as the change in the loss after the feature is removed. The utility of the feature $i$ in the $l$-th layer and sample $Z$ is given by
\begin{align}
u_{l,j}(Z) & \doteq \mathcal{L}(\mathcal{W}, Z | h_{l,j} = 0) - \mathcal{L}(\mathcal{W} , Z),
\label{equ:feature-utility}
\end{align}
where $h_{l,j} = 0$ denotes setting the feature output to zero (e.g., by adding a mask set to zero).

We refer to it as the \emph{true feature utility} to distinguish it from its approximations, which are referred to as either approximated utilities or simply utilities. Note that this utility is a global measure, and it provides a total ordering for features according to their importance. However, computing it is prohibitive since it requires additional $N_f$ forward passes, where $N_f$ is the total number of features.

Since the computation of the true feature utility is prohibitive, we aim to approximate it such that no additional forward passes are needed. We approximate the true utility of features by a second-order Taylor approximation. We expand the true utility $u_i$ around the current feature $i$ at layer $l$ and evaluate it at the value of that feature output set to zero. The quadratic approximation of $u_{l,j}(Z)$ can be written as
\begin{align}
u_{l,j}(Z) &= \mathcal{L}(\mathcal{W}, Z | h_{l,j} = 0) - \mathcal{L}(\mathcal{W} , Z) \nonumber \\
&\approx  -\frac{\partial \mathcal{L}}{\partial h_{l,i}} h_{l,i} + \frac{1}{2}\frac{\partial^2 \mathcal{L}}{\partial h^2_{l,i}} h^2_{l,i}.
\end{align}
We refer to the utility measure containing the first term as the \emph{first-order feature utility}, and the utility measure containing both terms as the \emph{second-order feature utility}. 
We use the approximation by Elsayed and Mahmood (2022) that provides a Hessian diagonal approximation in linear complexity.

When we use an \emph{origin-passing} activation function, $\sigma_a(0)=0$, we can instead compute the feature utility by setting its input to zero. Thus, the feature utility can be approximated by expanding the loss around the current activation input $a_{l,i}$ as follows:
\begin{align}
u_{l,j}(Z) &= \mathcal{L}(\mathcal{W}, Z | a_{l,j} = 0) - \mathcal{L}(\mathcal{W} , Z) \nonumber \\
&\approx  -\frac{\partial \mathcal{L}}{\partial a_{l,i}} a_{l,i} + \frac{1}{2}\frac{\partial^2 \mathcal{L}}{\partial a^2_{l,i}} a^2_{l,i}.
\end{align}
The gradient of the loss with respect to pre-activations or activations is not readily available using most available deep learning frameworks. To have an easily implemented algorithm, we create an equivalent rule to Eq.\ \ref{equ:feature-utility} for general activations by adding a mask on top of the activation at each layer. This mask acts as a gate on top of the activation output: $\bar{\vh}_{l} = \vg_{l} \circ \vh_{l}$. Note that the weights of such gates are set to ones and never change throughout learning. The quadratic approximation of $u_{l,j}(Z)$ can be written as
\begin{align}
u_{l,j}(Z) &= \mathcal{L}(\mathcal{W}, Z | g_{l,j} = 0) - \mathcal{L}(\mathcal{W} , Z) \nonumber \\
&\approx \mathcal{L}(\mathcal{W} , Z) + \frac{\partial \mathcal{L}}{\partial g_{l,i}}(0 - g_{l,j})  + \frac{1}{2}\frac{\partial^2 \mathcal{L}}{\partial g^2_{l,i}} (0 - g_{l,j})^2 \nonumber - \mathcal{L}(\mathcal{W} , Z) \nonumber\\
&= -\frac{\partial \mathcal{L}}{\partial g_{l,i}}+ \frac{1}{2}\frac{\partial^2 \mathcal{L}}{\partial g^2_{l,i}}.
\end{align}
For feature-wise UPGD, the global scaled utility is given by $\Bar{u}_{l,j} = \phi({u}_{l,j} / \eta)$, where $\eta$ is the maximum utility of the features and $\phi$ is the scaling function, for which we use sigmoid, with its corresponding Algorithm \ref{alg:upgd-feature-global}, whereas UPGD with the local scaled utility is given by $\Bar{u}_{l,j} = \phi\big({u}_{l,j}/\sqrt{\sum_{j}{u}^2_{l,j}}\big)$ in Algorithm \ref{alg:upgd-feature-layerwise}.

The approximated feature utility can be computed using the approximated weight utility, which gives rise to the \emph{conservation of utility} property. We show this relationship in Appendix \ref{appendix:feature-weight-connection}.

\section{Feature Utility Approximation using Weight utility}
\label{appendix:feature-weight-connection}

Instead of deriving feature utility directly, we derive the utility of a feature based on setting its output weights to zero. Equivalently to Eq.\ \ref{equ:feature-utility}, the utility of a feature $i$ at layer $l$ is given by
\begin{align}
u_{l,i}(Z) = \mathcal{L}(\mathcal{W}_{\neg[l,i]}, Z) - \mathcal{L}(\mathcal{W} , Z),
\end{align}
where $\mathcal{W}$ is the set of all weights, $\mathcal{L}(\mathcal{W}, Z)$ is the sample loss of a network parameterized by $\mathcal{W}$ on sample $Z$, and $\mathcal{W}_{\neg[l,i]}$ is the same as $\mathcal{W}$ except the weight $\mW_{l+1,i,j}$ is set to 0 for all values of $i$.

Note that the second-order Talyor's approximation of this utility depends on the off-diagonal elements of the Hessian matrix at each layer, since more than one weight is removed at once. For our analysis, we drop these elements and derive our feature utility. We expand the difference around the current output weights of the feature $i$ at layer $l$ and evaluate it by setting the weights to zero. The quadratic approximation of $u_{l,i}(Z)$ can be written as
\begin{align}
u_{l,j}(Z) &= \mathcal{L}(\mathcal{W}_{\neg[l,i]}, Z) - \mathcal{L}(\mathcal{W} , Z) \nonumber \\
&= \sum_{i=1}^{|\va_{l+1}|}\left( -\frac{\partial \mathcal{L}}{\partial W_{l+1,i,j}} W_{l+1,i,j} + \frac{1}{2}\frac{\partial^2 \mathcal{L}}{\partial W^2_{l+1,ij}} W^2_{l+1,i, j} \right) \nonumber \\
& \quad\; + \sum_{i=1}^{|\va_{l+1}|} \left(2 \sum_{j\neq i}\frac{\partial^2 \mathcal{L}}{\partial W_{l+1,i,j}\partial W_{l+1,i,k}} W_{l+1,i,j}W_{l+1,i,k} \right)  \nonumber \\
&\approx \sum_{i=1}^{|\va_{l+1}|} \Big( -\frac{\partial \mathcal{L}}{\partial W_{l+1,ij}} W_{l+1,ij} + \frac{1}{2}\frac{\partial^2 \mathcal{L}}{\partial W^2_{l+1,ij}} W^2_{l+1,ij} \Big) \nonumber \\
&= \sum_{i=1}^{|\va_{l+1}|} U_{l+1,i,j}.
\label{equ:approx-feature-utility}
\end{align}
Alternatively, we can derive the utility of feature $i$ at layer $l$ by dropping the input weights when the activation function passes through the origin (zero input leads to zero output). This gives rise to the property of \emph{conservation of utility} shown by Therom \ref{thm:conservation-utility}.

\begin{theorem}
\label{thm:conservation-utility}
If the second-order off-diagonals in all layers in a neural network except for the last one are zero, all higher-order derivatives are zero, and an origin-passing activation function is used, the sum of output-weight utilities to a feature equals the sum of its input-weight utilities.
\begin{align*}
\sum_{i=1}^{|\va_{l+1}|} U_{l+1,i,j} &= \sum_{i=1}^{|\va_{l}|} U_{l,j,i}.
\end{align*}
\end{theorem}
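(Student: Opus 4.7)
\textbf{Proof proposal for Theorem~\ref{thm:conservation-utility}.}
The plan is to show that both the sum of output-weight utilities and the sum of input-weight utilities are quadratic Taylor approximations of the same quantity, namely the true feature utility $u_{l,j}(Z)$, and must therefore agree. The excerpt already contains one half of this equivalence: equation~(\ref{equ:approx-feature-utility}) derives $u_{l,j}(Z) \approx \sum_{i=1}^{|\va_{l+1}|} U_{l+1,i,j}$ by expanding the counterfactual loss that results from zeroing out all output weights of feature~$j$, using exactly the off-diagonal and higher-order vanishing assumptions that appear in the hypothesis. So the remaining work is to produce an analogous expansion for the counterfactual loss obtained by zeroing all input weights $W_{l,j,k}$, $k=1,\dots,d_{l-1}$, of feature~$j$, and to show that this second counterfactual loss equals $\mathcal{L}(\mathcal{W},Z\mid h_{l,j}=0)$.

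First I would verify the equivalence of the two counterfactual perturbations. Since the activation function is origin-passing, $\sigma_a(0)=0$, setting every $W_{l,j,k}=0$ forces $a_{l,j}=\sum_k W_{l,j,k}h_{l-1,k}=0$, and hence $h_{l,j}=\sigma_a(a_{l,j})=0$. Conversely, directly setting $h_{l,j}=0$ produces exactly the same downstream signal as zeroing the input weights, because $h_{l,j}$ appears in the network only as an input to layer $l+1$. Thus
\[
\mathcal{L}(\mathcal{W},Z\mid h_{l,j}=0) \;=\; \mathcal{L}(\mathcal{W}_{\neg[l,j,\mathrm{in}]}, Z),
\]
where $\mathcal{W}_{\neg[l,j,\mathrm{in}]}$ denotes $\mathcal{W}$ with all input weights to feature $(l,j)$ set to zero. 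Subtracting $\mathcal{L}(\mathcal{W},Z)$ from both sides identifies the true feature utility with the loss change under the input-weight perturbation.

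Next I would Taylor-expand this second counterfactual loss around the current input weights $\{W_{l,j,k}\}_{k=1}^{d_{l-1}}$ jointly, evaluated at the all-zeros point. The expansion produces first-order terms $-\sum_k (\partial \mathcal{L}/\partial W_{l,j,k})W_{l,j,k}$, diagonal second-order terms $\tfrac{1}{2}\sum_k (\partial^2 \mathcal{L}/\partial W_{l,j,k}^2) W_{l,j,k}^2$, off-diagonal cross terms $\sum_{k\neq k'}(\partial^2 \mathcal{L}/\partial W_{l,j,k}\partial W_{l,j,k'}) W_{l,j,k}W_{l,j,k'}$, and higher-order terms. By hypothesis the higher-order terms vanish, and because layer~$l$ is not the last layer the within-layer off-diagonal Hessian terms vanish as well. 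The surviving diagonal terms regroup exactly as $\sum_{k=1}^{d_{l-1}} U_{l,j,k}$, matching the right-hand side of the claimed identity (modulo the indexing convention used in the statement).

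Combining the two expansions gives $\sum_{i=1}^{|\va_{l+1}|} U_{l+1,i,j} \approx u_{l,j}(Z) \approx \sum_{k=1}^{d_{l-1}} U_{l,j,k}$, and under the stated vanishing assumptions both Taylor approximations are exact, yielding the conservation identity. The main obstacle I anticipate is being careful with the off-diagonal structure in the input-weight expansion: the cross-derivatives $\partial^2\mathcal{L}/\partial W_{l,j,k}\partial W_{l,j,k'}$ are a priori nonzero (they factor as $h_{l-1,k}h_{l-1,k'}\,\partial^2\mathcal{L}/\partial a_{l,j}^2$), so the proof relies essentially on the theorem's explicit premise that such non-last-layer off-diagonals are zero; without this the identity would hold only up to the cross-term residual and the conservation statement would become approximate rather than exact.
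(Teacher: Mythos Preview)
Your proposal is correct and follows essentially the same approach as the paper: express the feature utility $u_{l,j}(Z)$ in two ways---once via a Taylor expansion in the output weights (already given in Eq.~\ref{equ:approx-feature-utility}) and once via a Taylor expansion in the input weights using the origin-passing property---and equate the two under the off-diagonal and higher-order vanishing hypotheses. Your write-up is in fact more explicit than the paper's, which simply says ``similarly'' for the input-weight side, whereas you spell out the cross-term structure $h_{l-1,k}h_{l-1,k'}\,\partial^2\mathcal{L}/\partial a_{l,j}^2$ and correctly identify that the conservation law rests squarely on the assumed vanishing of these non-last-layer off-diagonals.
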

\begin{proof} 
From Eq.\ \ref{equ:approx-feature-utility}, we can write the utility of the feature $j$ at layer $l$ by dropping the output weights. The sample feature utility of is given by
\begin{align}
u_{l,j}(Z) &= \sum_{i=1}^{|\va_{l+1}|} U_{l+1,i,j}(Z). \nonumber
\end{align}
Similarly, we can write the utility of the feature $j$ at layer $l$ by dropping the input weights when the activation function passes through the origin (zero input leads to zero output) as follows:
\begin{align}
u_{l,j}(Z) &= \sum_{i=1}^{|\va_{l}|} U_{l,j,i}(Z). \nonumber
\end{align}

Therefore, we can write the following equality:
\begin{align}
 \sum_{i=1}^{|\va_{l+1}|} U_{l+1,i,j} &= \sum_{i=1}^{|\va_{l}|} U_{l,j,i}. \nonumber
\end{align}
\end{proof}
This theorem shows the property of the conservation of instantaneous utility. The sum of utilities of the outgoing weights to a feature equals the sum of utilities of the incoming weights to the same feature when origin-passing activation functions are used and the off-diagonal elements are dropped. This conservation law resembles the one introduced by Tanaka et al.\ (2020).


\clearpage
\section{Weight-wise UPGD, Feature-wise UPGD, and Adaptive UPGD}
\label{appendix:upgd-algorithms-others}
\vspace{-0.2cm}
\begin{algorithm}[ht]
\caption{Weight-wise UPGD with local utility}\label{alg:upgd-weight-layerwise}
\begin{algorithmic}
\State Given a stream of data $\mathcal{D}$ and a neural network $f$ with weights $\{\mW_1,...,\mW_L\}$.
\State Initialize step size $\alpha$, utility decay rate $\beta$, and noise standard deviation $\sigma$.
\State Initialize $\{\mW_1,...,\mW_L\}$.
\State Initialize $\mU_{l}, \forall l$ and time step $t$ to zero.
\For{$(\vx, \vy)$ in $\mathcal{D}$}
\State $t\leftarrow t+1$
\For{$l$ in $\{L,L-1,...,1\}$}
\State $\mF_{l},\mS_{l} \leftarrow$\texttt{GetDerivatives}($f, \vx, \vy, l)$
\State $\mM_{l} \leftarrow \sfrac{1}{2}\mS_{l}\circ\mW^2_{l} - \mF_{l}\circ\mW $
\State $\mU_{l} \leftarrow \beta\mU_{l} + (1-\beta)\mM_l $
\State $\hat{\mU}_{l} \leftarrow \mU_{l}/(1-\beta^{t})$
\State Sample $\boldsymbol\xi$ elements from $\mathcal{N}(0, \sigma^2)$
\State $\Bar{\mU}_{l} \leftarrow \phi{(\mD\hat{\mU_{l}}})$ \Comment{$D_{ii}=1/\|\mU_{l,i,:}\|$ and $D_{ij}=0, \forall i\neq j$}
\State $\mW_{l} \leftarrow \mW_{l} - \alpha (\mF_{l} + \boldsymbol{\xi})\circ(1-\Bar{\mU}_{l})$
\EndFor
\EndFor
\end{algorithmic}
\end{algorithm}
\vspace{-0.75cm}
\begin{minipage}{.5\linewidth}
\begin{algorithm}[H]
\caption{Feature-wise UPGD w/ global utility}\label{alg:upgd-feature-global}
\begin{algorithmic}[ht]
\State Given a stream of data $\mathcal{D}$ and a neural network $f$ with weights $\{\mW_1,...,\mW_L\}$.
\State Initialize step size $\alpha$, utility decay rate $\beta$, and noise standard deviation $\sigma$.
\State Initialize $\{\mW_1,...,\mW_L\}$ randomly.
\State Initialize $\{\vg_1,...,\vg_{L-1}\}$ to ones.
\State Initialize $\vu_{l},\forall l$ and time step $t$ to zero.
\State $t\leftarrow t+1$
\For{$(\vx, \vy)$ in $\mathcal{D}$}
\State $t\leftarrow t+1$
\For{$l$ in $\{L-1,...,1\}$}
\State $\eta\leftarrow-\infty$
\State $\mF_{l},\_ \leftarrow$\texttt{GetDerivatives}($f, \vx, \vy, l)$
\State $\vf_{l},\vs_l \leftarrow$\texttt{GateDerivatives}($f, \vx, \vy, l)$
\State $\vm_{l} \leftarrow \sfrac{1}{2}\vs_{l} - \vf_{l}$
\State $\vu_{l} \leftarrow \beta\vu_{l} + (1-\beta)\vm_l $
\State $\hat{\vu}_{l} \leftarrow \vu_{l}/(1-\beta^{t})$
\If{$\eta < \texttt{max}(\hat{\vu}_{l})$} $\eta \leftarrow \texttt{max}(\hat{\vu}_{l})$
\EndIf
\EndFor
\For{$l$ in $\{L,L-1,...,1\}$}
\If{$l=L$}
\State $\mW_{L} \leftarrow \mW_{L} - \alpha\mF_{L}$
\Else
\State $\Bar{\vu}_{l} \leftarrow \phi{(\hat{\vu}_{l}/\eta)}$
\State Sample $\boldsymbol\xi$ elements from $\mathcal{N}(0, \sigma^2)$
\State $\mW_{l} \leftarrow \mW_{l} - \alpha (\mF_{l} + \boldsymbol{\xi})\circ(1-\mathbf{1} \Bar{\vu}_{l}^{\top})$
\EndIf
\EndFor
\EndFor
\end{algorithmic}
\end{algorithm}
\end{minipage}
\begin{minipage}{.5\linewidth}
\begin{algorithm}[H]
\caption{Feature-wise UPGD w/ local utility}\label{alg:upgd-feature-layerwise}
\begin{algorithmic}[ht]
\State Given a stream of data $\mathcal{D}$ and a neural network $f$ with weights $\{\mW_1,...,\mW_L\}$.
\State Initialize step size $\alpha$, utility decay rate $\beta$, and noise standard deviation $\sigma$.
\State Initialize $\{\mW_1,...,\mW_L\}$ randomly.
\State Initialize $\{\vg_1,...,\vg_{L-1}\}$ to ones.
\State Initialize $\vu_{l},\forall l$ and time step $t$ to zero.
\State $t\leftarrow t+1$
\For{$(\vx, \vy)$ in $\mathcal{D}$}
\State $t\leftarrow t+1$
\For{$l$ in $\{L,L-1,...,1\}$}
\State $\mF_{l},\_ \leftarrow$\texttt{GetDerivatives}($f, \vx, \vy, l)$
\If{$l=L$}
\State $\mW_{L} \leftarrow \mW_{L} - \alpha\mF_{L}$
\Else
\State $\vf_{l},\vs_l \leftarrow$\texttt{GateDerivatives}($f, \vx, \vy, l)$
\State $\vm_{l} \leftarrow \sfrac{1}{2}\vs_{l} - \vf_{l}$
\State $\vu_{l} \leftarrow \beta\vu_{l} + (1-\beta)\vm_l $
\State $\hat{\vu}_{l} \leftarrow \vu_{l}/(1-\beta^{t})$
\State $\Bar{\vu}_{l} \leftarrow \phi{(\hat{\vu}_{l}/\|\hat{\vu}_{l}\|)}$
\State Sample $\boldsymbol\xi$ elements from $\mathcal{N}(0, \sigma^2)$
\State $\mW_{l} \leftarrow \mW_{l} - \alpha (\mF_{l} + \boldsymbol{\xi})\circ(1-\mathbf{1} \Bar{\vu}_{l}^{\top})$
\State
\State
\State
\vspace{0.175cm}
\EndIf
\EndFor
\EndFor
\end{algorithmic}
\end{algorithm}
\end{minipage}
\begin{algorithm}[ht]
\caption{AdaUPGD: Adaptive Utility-based Gradient Descent}\label{alg:adaupgd}
\begin{algorithmic}
\State Given a stream of data $\mathcal{D}$, a network $f$ with weights $\{\mW_1,...,\mW_L\}$.
\State Initialize utility decay rate $\beta_u$, momentum decay rate $\beta_1$, and RMSprop decay rate $\beta_2$, where $\beta_u, \beta_1, \beta_2\in[0,1)$ and set a small number $\epsilon$ for numerical stability (e.g., $10^{-8})$.
\State Initialize step size $\alpha$ and noise standard deviation $\sigma$.
\State Initialize $\{\mW_1,...,\mW_L\}$.
\State Initialize $\mU_{l}, \mM_{l}, \mV_{l}, \forall l$ and time step $t$ to zero.
\For{$(\vx, \vy)$ in $\mathcal{D}$}
\State $t\leftarrow t+1$
\For{$l$ in $\{L,L-1,...,1\}$}
\State $\eta\leftarrow-\infty$
\State $\mF_{l},\mS_{l} \leftarrow$\texttt{GetDerivatives}($f, \vx, \vy, l)$
\State $\mR_{l} \leftarrow \sfrac{1}{2}\mS_{l}\circ\mW^2_{l} - \mF_{l}\circ\mW_l $
\State $\mU_{l} \leftarrow \beta_u\mU_{l} + (1-\beta_u)\mR_l $
\State $\mM_{l} \leftarrow \beta_1\mM_{l} + (1-\beta_1)\mF_l $
\State $\mV_{l} \leftarrow \beta_2\mV_{l} + (1-\beta_2)\mF_l^{\circ 2} $
\State $\hat{\mU}_{l} \leftarrow \mU_{l}/(1-\beta_u^{t})$
\State $\hat{\mM}_{l} \leftarrow \mM_{l}/(1-\beta_1^{t})$
\State $\hat{\mV}_{l} \leftarrow \mV_{l}/(1-\beta_2^{t})$
\If{$\eta < \texttt{max}(\hat{\mU}_{l})$} \State $\eta \leftarrow \texttt{max}(\hat{\mU}_{l})$
\EndIf
\EndFor
\For{$l$ in $\{L,L-1,...,1\}$}
\State Sample $\boldsymbol\xi$ elements from $\mathcal{N}(0, \sigma^2)$
\State $\Bar{\mU}_{l} \leftarrow \phi{(\hat{\mU_{l}}/\eta)}$
\State $\mW_{l} \leftarrow \mW_{l} - \alpha \left(\hat{\mM_{l}}\oslash \left(\hat{\mV}^{\circ\frac{1}{2}}+\epsilon\right) + \boldsymbol{\xi}\right)\circ(1-\Bar{\mU}_{l})$
\EndFor
\EndFor
\end{algorithmic}
\end{algorithm}

\vspace{-0.4cm}
\section{The \texttt{GetDerivatives} and \texttt{GateDerivatives} Functions}
\label{appendix:get-derivative}
Here, we describe \texttt{\underline{Get}Derivatives} used in UPGD and \texttt{\underline{Gate}Derivatives} used in the feature-wise variation of UPGD. Each function takes four arguments: the neural network $f$, the input $\vx$, the target $\vy$, and the layer number $l$. The \texttt{GetDerivatives} function returns the gradient of the loss with respect to the weight matrix $\mW_l$ given by $\mF_l$ and the approximated second-order information of the loss with respect to the matrix $\mW_l$ given by $\mS_l$. The \texttt{GateDerivatives} function is similar, which returns the gradient of the loss with respect to the gate $\vg_l$ given by $\vf_l$ and the approximated second-order information of the loss with respect to the gate $\vg_l$ given by $\vs_l$. 
The matrix $\mS_l$ and the vector $\vs_l$ store the diagonal approximation of $\text{Diag}(\nabla^2_{\mW_l} \mathcal{L})$ and $\text{Diag}(\nabla^2_{\vg_l} \mathcal{L})$ reshaped to be of the same size as $\mW_l$ and $\vg_l$, respectively.
While different methods can be used to get the derivatives, UPGD uses HesScale for it, which is given by Algorithm \ref{alg:hesscale}.

\begin{algorithm}[ht]
\caption{The HesScale Algorithm in Classification (Elsayed \& Mahmood 2022)}\label{alg:hesscale}
\begin{algorithmic}
\State {\bfseries Require:} Neural network $f$ and a layer number $l$
\State {\bfseries Require:} $\widehat{\frac{\partial \mathcal{L}}{\partial \va_{l+1}}}$ and $\widehat{\frac{\partial^2 \mathcal{L}}{\partial \va_{l+1}^2}}$, unless $l=L$
\State {\bfseries Require:} Input-output pair $(\vx, y)$.
\State Compute preference vector $\va_L \leftarrow f(\vx)$ and target one-hot-encoded vector $\vp\leftarrow\texttt{onehot}(y)$.
\State Compute the predicted probability vector $\vq\leftarrow\boldsymbol\sigma_a(\va_L)$ and Compute the loss  $\mathcal{L}(\vp, \vq)$.
\If{$l=L$} 
    \State Compute $\frac{\partial \mathcal{L}}{\partial \va_{L}} \leftarrow \vq -\vp$
    \State Compute $\frac{\partial \mathcal{L}}{\partial \mW_{L}}$ using Eq.\ \ref{equ:gradient-W} 
    \State $\widehat{\frac{\partial^2 \mathcal{L}}{\partial \va_{L}^2}} \leftarrow \vq-\vq\circ\vq$
    \State Compute $\widehat{\frac{\partial^2 \mathcal{L}}{\partial \mW_{L}^2}}$ using Eq.\ \ref{equ:hessian-approx-W}
\Else
    \State Compute $\frac{\partial \mathcal{L}}{\partial \va_{l}}$ and $\sfrac{\partial \mathcal{L}}{\partial \mW_{l}}$ using Eq. \ref{equ:gradient-a} and Eq. \ref{equ:gradient-W}
    \State Compute $\widehat{\frac{\partial^2 \mathcal{L}}{\partial \va_{l}^2}}$ and $\widehat{\frac{\partial^2 \mathcal{L}}{\partial \mW_{l}^2}}$ using Eq.\ \ref{equ:hessian-approx-a} and Eq.\ \ref{equ:hessian-approx-W}
\EndIf
\State {\bfseries Return} $\frac{\partial \mathcal{L}}{\partial \mW_{l}}$, $\widehat{\frac{\partial^2 \mathcal{L}}{\partial \mW_{l}^2}}$, $\frac{\partial \mathcal{L}}{\partial \va_{l}}$, and $\widehat{\frac{\partial^2 \mathcal{L}}{\partial \va_{l}^2}}$
\end{algorithmic}
\end{algorithm}

\section{UPGD on Stationary MNIST}
\label{appendix:details-stationary-mnist}
We use the MNIST dataset to assess the performance of UPGD under stationarity. A desirable property of continual learning systems is that they should not asymptotically impose any extra performance reduction, which can be studied in a stationary task such as MNIST. We report the results in Fig.\ \ref{fig:mnist-upgd-all}. We notice that UPGD improves performance over SGD.
Each point in the stationary MNIST plots represents an average accuracy over a non-overlapping window of $10000$ samples. The learners use a network of $300\times 150$ units with ReLU activations. We used the hyperparameter search space shown in Table \ref{tab:hyperparameters}.
The utility traces are computed using exponential moving averages given by $\tilde{U}_t = \beta_u \tilde{U}_{t-1} + (1-\beta_u) U_{t}$, where $\tilde{U}_t$ is the utility trace at time $t$ and $U_{t}$ is the instantaneous utility at time $t$.
\begin{figure}[H]
\centering
\subfigure[Weight Global Utility]{
    \includegraphics[width=0.23\textwidth]{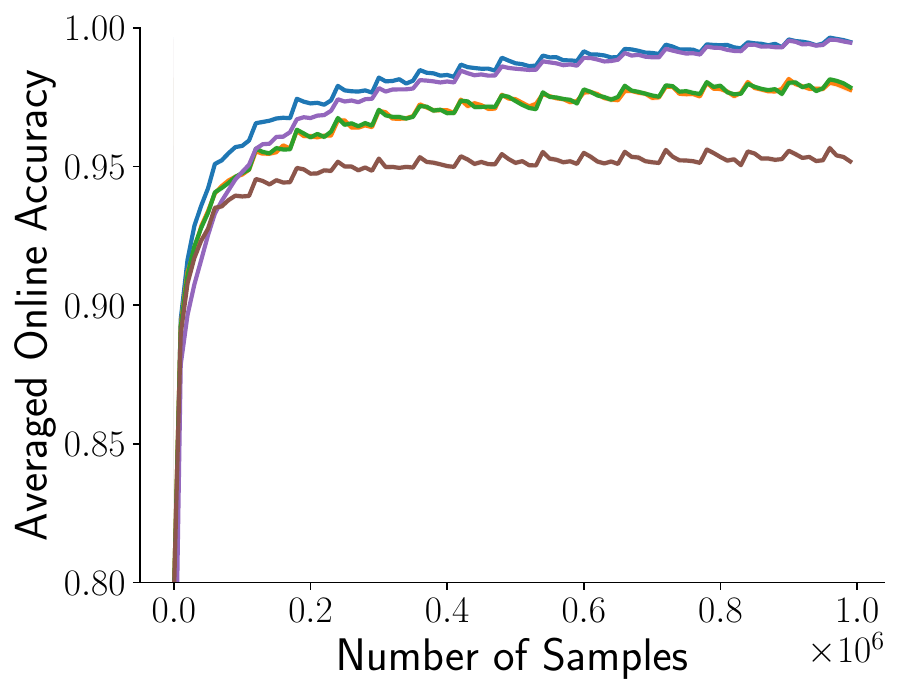}
}
\subfigure[Feature Global Utility]{
    \includegraphics[width=0.23\textwidth]{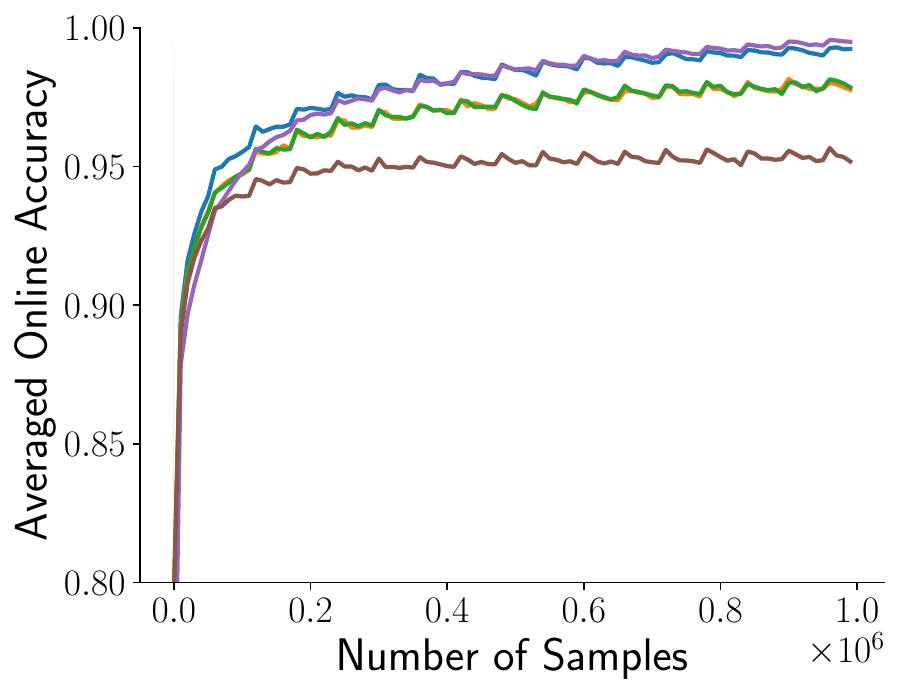}
}
\subfigure[Weight Local Utility]{
    \includegraphics[width=0.23\textwidth]{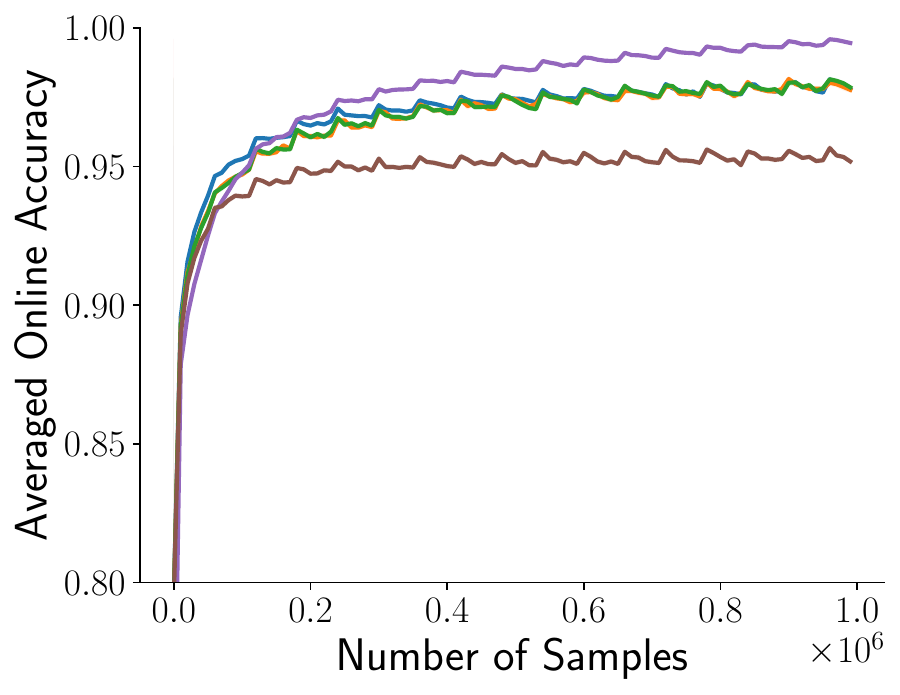}
}
\subfigure[Feature Local Utility]{
    \includegraphics[width=0.23\textwidth]{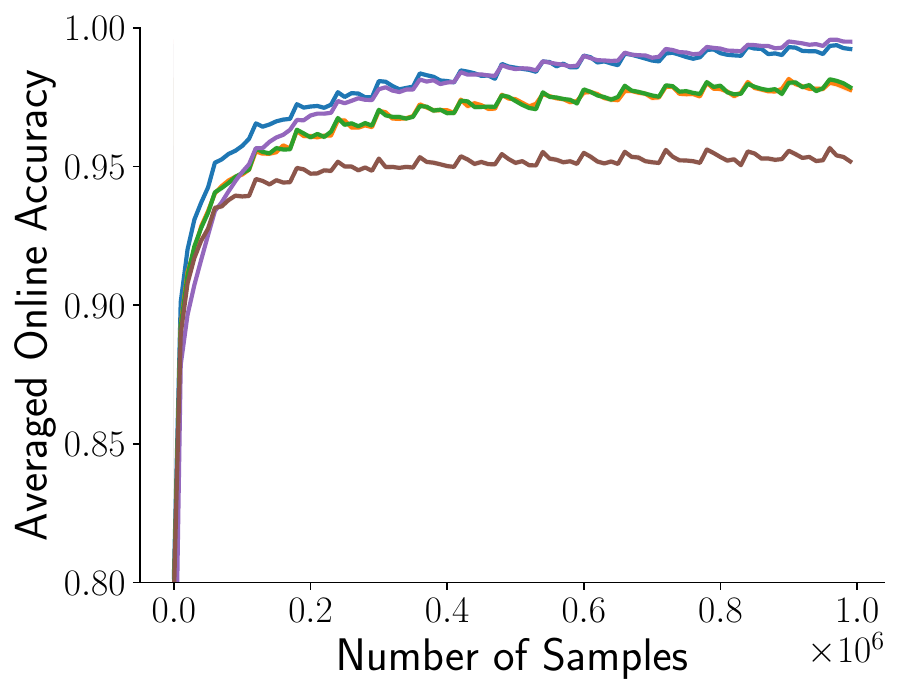}
}
\vspace{0.2cm}
\includegraphics[width=0.7\columnwidth]{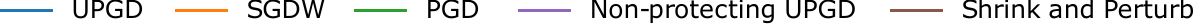}
\vspace{-0.2cm}
\caption{Performance of Utility-based Perturbed Gradient Descent with first-order approximated utilities on stationary MNIST. The results are averaged over $20$ independent runs.}
\label{fig:mnist-upgd-all}
\end{figure}


\section{UPGD on Non-Stationary Toy Regression Problem}
\label{appendix:toy-experiment}
We study UPGD's effectiveness in mitigating catastrophic forgetting and loss of plasticity in a simple toy regression problem that is easy to analyze and understand.
The target at time $t$ is given by $y_t = \frac{a}{|\mathcal{S}|}\sum_{i\in \mathcal{S}} x_{t,i}$, where $x_{t,i}$ is the $i$-th entry of input vector at time $t$, $\mathcal{S}$ is a set of some input indices, and $a\in \{-1,1\}$. The inputs are sampled from $\mathcal{N}(0,1)$. 

In this problem, the task is to calculate the average of two inputs or its negative out of 16 inputs. We introduce non-stationarity using two ways: changing the multiplier $a$ or changing the input-index set $\mathcal{S}$. 
The learner is required to match the targets by minimizing the online squared error.
The learner uses a multi-layer ($300\times150$) linear network,
where the activation used is the identity activation ($\boldsymbol\sigma_a(\vx)=\vx$). 
We use linear activations to see if catastrophic forgetting and loss of plasticity may occur even in such simple networks.

The first variation of the problem focuses solely on loss of plasticity. 
We can study plasticity when the learner is presented with sequential tasks requiring little transfer between them.
Here, $|\mathcal{S}|=2$ and the input indices change every $200$ time steps by a shift of two. For instance, if the first task has $\mathcal{S}=\{1, 2\}$, the next would be $\{3, 4\}$ and so on. Since the tasks share little similarity between them, we expect the continual learners to learn as quickly as possible by discarding old features when needed to maintain their plasticity. We compare UPGD against SGD, PGD, S\&P, and Non-protecting UPGD. We also use a baseline with one linear layer mapping the input to the output.

The second variation of the problem focuses on catastrophic forgetting. Here, the sign of the target sum is flipped every $200$ time steps by changing $a$ from $1$ to $-1$ and vice versa. Since the two tasks share high similarities, we expect continual learners to initially learn some features during the first $200$ steps. Then, after the target sign flip, we expect the learner to change the sign of only the output weights and keep the features intact since the previously learned features are fully useful. 
The frequency of changing $a$ is high to penalize learners for re-learning features from scratch.

For the input-index changing problem, Fig.\ \ref{fig:toy-problem-changing-inputs} shows that the performance of SGD degrades with changing targets even when using linear neural networks, indicating SGD loses plasticity over time. 
Each point in the toy problem figures represents an average squared error of $20$ tasks. The results are averaged over $20$ independent runs.
Empirically, we found that the outgoing weights of the last layer get smaller, hindering the ability to change the features' input weights. S\&P outperforms the linear-layer baseline. However, PGD and Non-protecting UPGD perform better than S\&P, indicating that weight decay is not helpful in this problem, and it is better to just inject noise without shrinking the parameters. UPGD cannot only maintain its plasticity but also improve its performance rapidly with changing targets compared to other methods.

For the output-sign changing problem, Fig.\ \ref{fig:toy-problem-changing-outputs} shows that the performance of SGD degrades with changing targets, indicating that it does not utilize learned features and re-learn them every time the targets change.
S\&P, Non-protecting UPGD, and PGD do not lose plasticity over time but
perform worse than UPGD, indicating that they are ineffective in protecting useful weights. 
UPGD is likely protecting and utilizing useful weights in subsequent tasks. Moreover, the performance keeps improving with changing targets compared to the other methods. 

In both variations, we use utility traces (e.g., using exponential moving average) instead of instantaneous utility, as we empirically found utility traces to perform better. We found that the second-order approximated utility improves performance over the first-order approximated utility in these problems. A hyperparameter search, given in Table \ref{tab:hyperparameters}, was conducted. The best-performing hyperparameter set was used for plotting (see Table \ref{tab:best-hp-set} for weight-wise UPGD alongside other algorithms).

\begin{figure}[ht]
\centering
\subfigure[Weight Global Utility]{
    \includegraphics[width=0.23\textwidth]{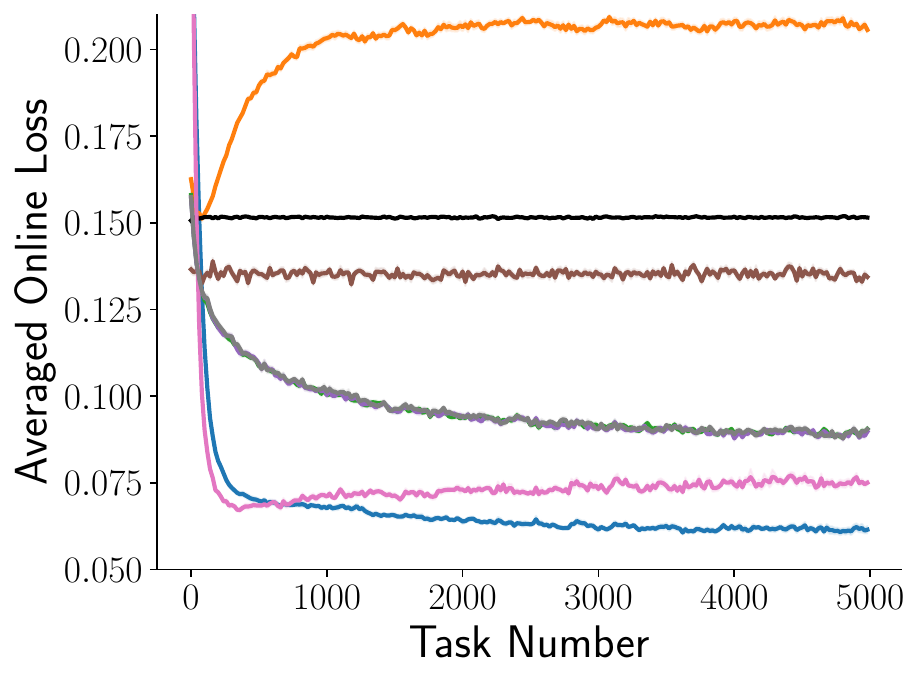}
}
\subfigure[Feature Global Utility]{
    \includegraphics[width=0.23\textwidth]{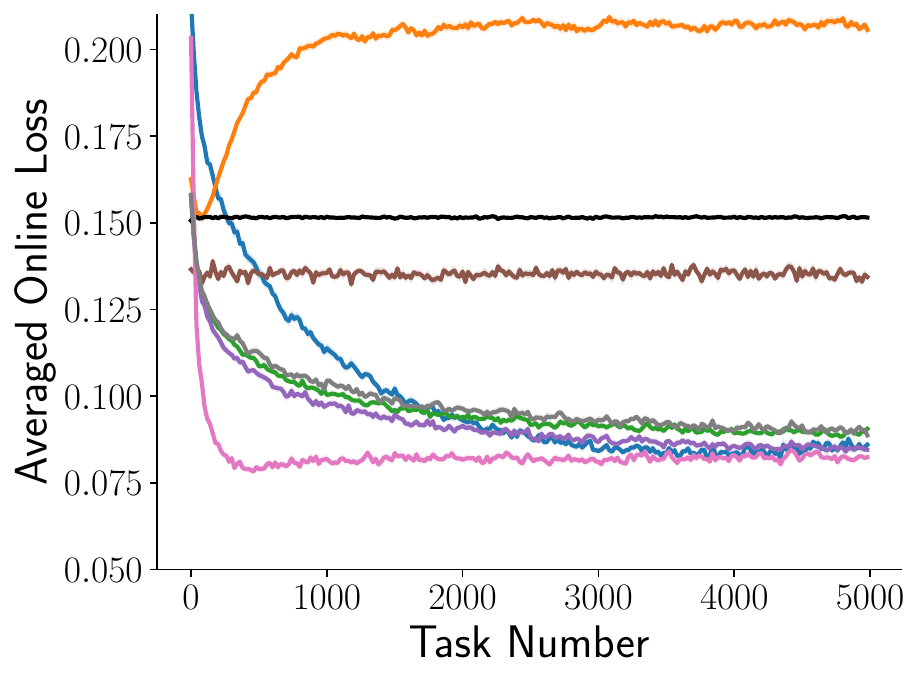}
}
\subfigure[Weight Local Utility]{
    \includegraphics[width=0.23\textwidth]{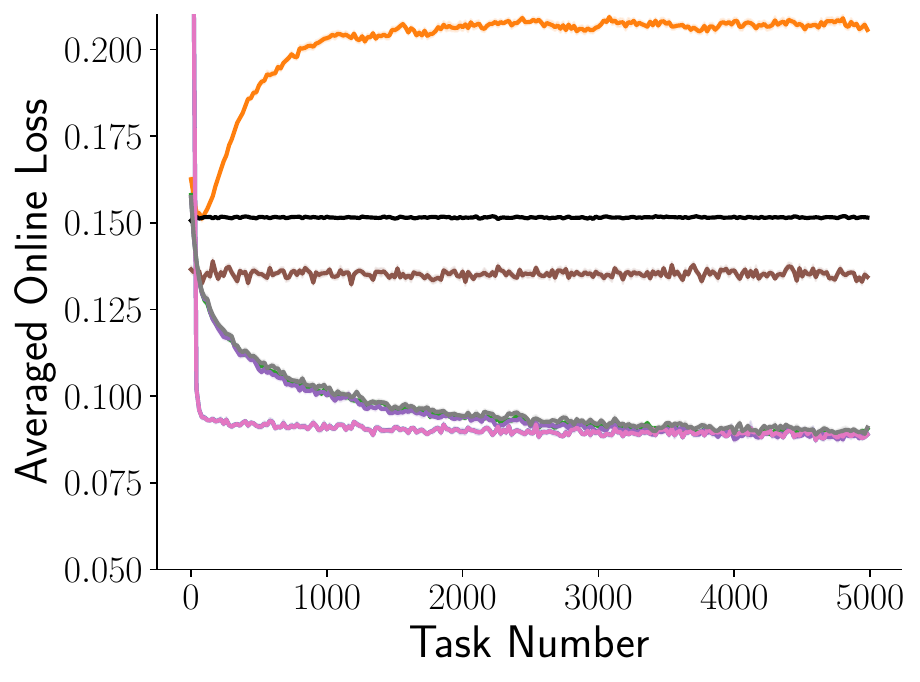}
}
\subfigure[Feature Local Utility]{
    \includegraphics[width=0.23\textwidth]{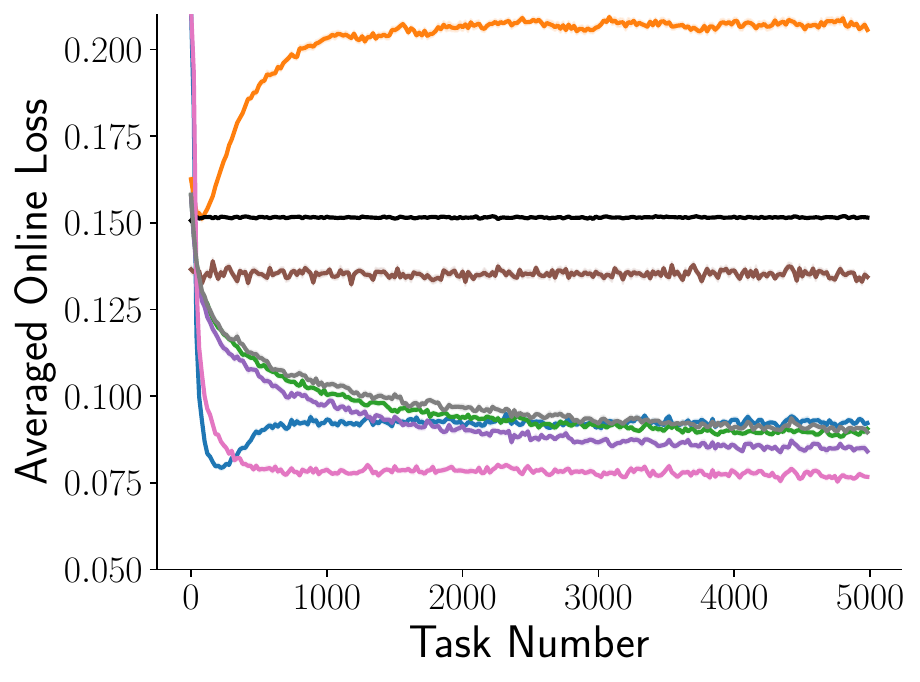}
}
\includegraphics[width=0.8\columnwidth]{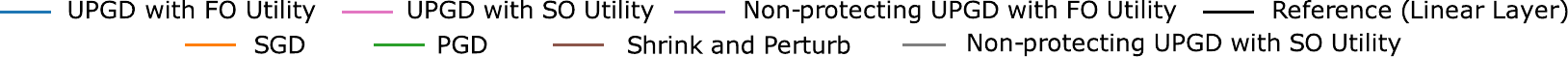}
\vspace{-0.2cm}
\caption{Performance of UPGD on the toy problem with a changing input-index set against SGD, PGD, and S\&P. First-order and second-order approximated utilities are used.}
\label{fig:toy-problem-changing-inputs}
\end{figure}

\begin{figure}[ht]
\centering
\subfigure[Weight Global Utility]{
    \includegraphics[width=0.23\textwidth]{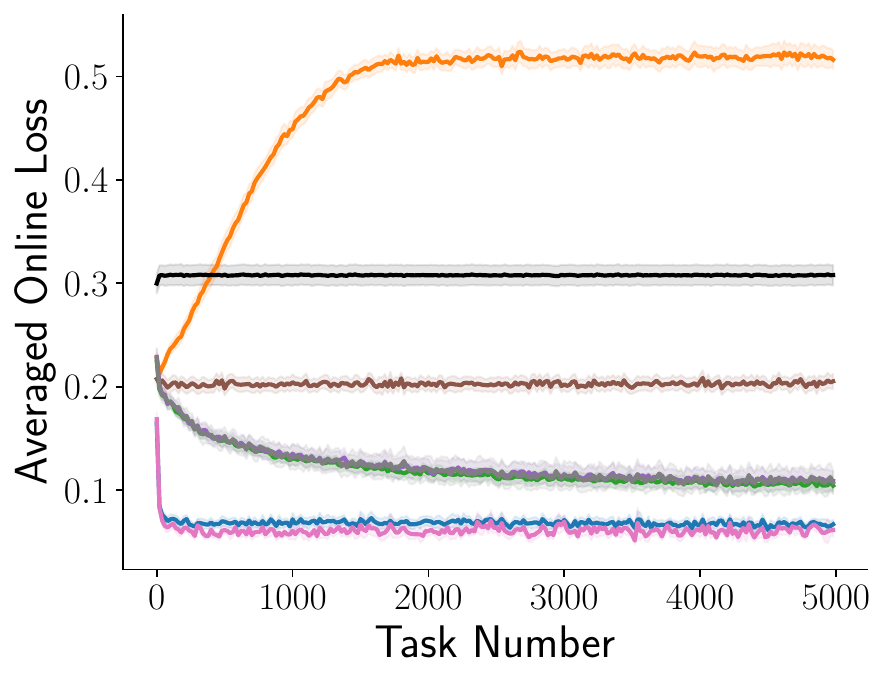}
}
\subfigure[Feature Global Utility]{
    \includegraphics[width=0.23\textwidth]{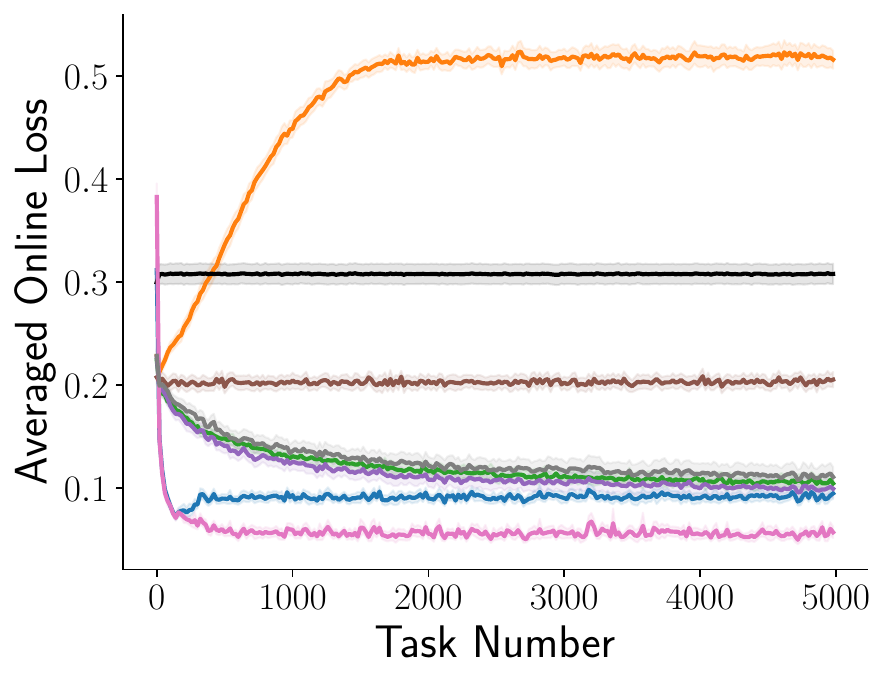}
}
\subfigure[Weight Local Utility]{
    \includegraphics[width=0.23\textwidth]{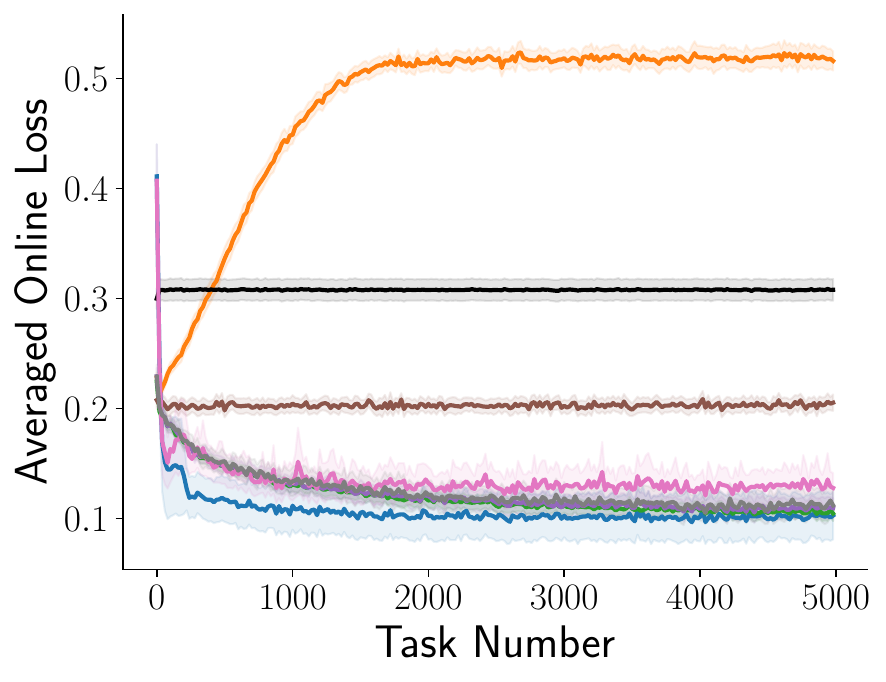}
}
\subfigure[Feature Local Utility]{
    \includegraphics[width=0.23\textwidth]{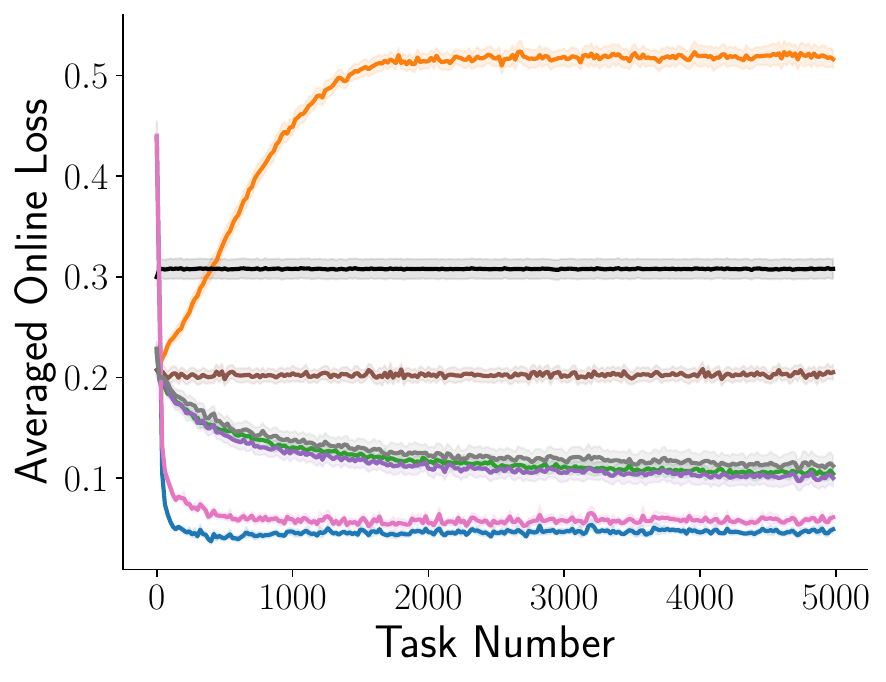}
}
\includegraphics[width=0.8\columnwidth]{figures/Legends/toy_problem.pdf}
\vspace{-0.2cm}
\caption{Performance of UPGD on the toy problem with changing output sign against SGD, PGD, and S\&P. First-order and second-order approximated utilities are used.}
\label{fig:toy-problem-changing-outputs}
\end{figure}


\section{Experimental Details}
\label{appendix:experimental-details}
In each experiment, the utility traces are computed using exponential moving averages given by $\tilde{U}_t = \beta_u \tilde{U}_{t-1} + (1-\beta_u) U_{t}$, where $\tilde{U}_t$ is the utility trace at time $t$ and $U_{t}$ is the instantaneous utility at time $t$. Each learner is trained for $1$ million time steps, except for the experiment in Fig.\ \ref{fig:story-lop} and its closer look in Fig.\ \ref{fig:story-lop-closer-look}, which we run for $15$ million time steps.

\subsection{Experiments in Fig.\ \ref{fig:story}}
\label{appendix:story-experiment}
In the problem of Label-permuted EMNIST, the labels are permuted every $2500$ sample. For the offline variation, we form a held-out set comprised of $250$ samples of the test set sampled after the end of each task. The performance is measured based on the learner's prediction on this held-out set after the end of each task in an offline manner. We permute the labels of the held-out set with the same permutation in the task presented to the learner to have consistent evaluation. We observe the same phenomenon indicated by Lesort et al.\ (2023), in which the performance of SGD decreases on the first few tasks, matching the vast majority of works on catastrophic forgetting. However, when scaling the number of tasks, SGD recovers and maintains its performance. The online variation uses the same experimental details in Section \ref{section:lop-cf-experiments}. The hyper-parameters used are given in Table \ref{tab:best-hp-set}.

In the problem of input-permuted MNIST, the inputs are permuted every $60000$ sample. In Fig.\ \ref{fig:story-lop-closer-look}, we show the online accuracy in 4 tasks where each point represents an average of over $600$ samples. The step-size $\alpha$ used for Adam is $0.0001$ using the default $\beta_1=0.9$, $\beta_2=0.9999$, and $\epsilon=10^{-8}$. We used for UPGD-W a step size $\alpha$ of $0.01$, a utility trace decay factor $\beta_u$ of $0.999$, a weight decay $\lambda$ of $0.001$, and a noise standard deviation $\sigma$ of $0.01$.

\subsection{Input-permuted MNIST and Label-permuted EMNIST/mini-ImageNet}
\label{appendix:details-input-permuted-mnist}
\label{appendix:details-label-permuted-emnist}
Each point in the Input-permuted MNIST and Label-permuted EMNIST figures represents an average accuracy of one task. The learners use a network of two hidden layers containing $300$ and $150$ units with ReLU activations, respectively. The results are averaged over $20$ independent runs, and the shaded area represents the standard error.

\subsection{Label-permuted CIFAR-10}
\label{appendix:details-label-permuted-cifar10}
The learners use a network with two convolutional layers with max-pooling followed by two fully connected layers with ReLU activations. The first convolutional layer uses a kernel of $5$ and outputs $6$ filters, whereas the second convolutional layer uses a kernel of $5$ and outputs $16$ filters. The max-pooling layers use a kernel of $2$. The data is flattened after the second max-pooling layer and fed to a fully connected network with two hidden layers containing $120$ and $84$ units, respectively. The results are averaged over $10$ independent runs, and the shaded area represents the standard error.

\subsection{S-SI and S-RWalk Details}
\label{appendix:details-si-rwalk}
We compute weight importance $\Omega_{l,i,j}$ in S-SI by first maintaining a trace of the gradient multiplied by the weight change from its previous value (with a decay factor $\beta_I$). We then maintain a trace of the weight change from its previous value (with a decay factor $\beta_I$). The weight importance for S-SI is defined as the ratio of the first trace and the squared value of the second trace as follows:
\begin{align*}
    \Omega_{l,i,j} &= \frac{\tilde{\omega}_{l,i,j}}{(\tilde{\Delta}_{l,i,j})^2 +\epsilon},
\end{align*}
where $\tilde{\omega}_{l,i,j}$ is a moving average of the gradient $g_{l,i,j}$ multiplied by the weight change from its previous weight $w^{-}_{l,i,k}$ as: $g_{l,i,j} (w^{-}_{l,i,k}-w_{l,i,k})$, $\tilde{\Delta}_{l,i,j}$ is a moving average of the weight change from its previous weight as $(w^{-}_{l,i,k}-w_{l,i,k})$, and $\epsilon$ is a small number for numerical stability which we set to $10^{-3}$. Note that the update rule uses a trace of weights with a decay rate $\beta_w$.

On the other hand, the weight importance estimation in S-RWalk is computed as:
\begin{align*}
    s_{l,i,j} &= \frac{\tilde{\omega}_{l,i,j}}{\frac{1}{2}\tilde{F}_{l,i,j}(w_{l,i,k}-w^{-}_{l,i,k})^2 +\epsilon}, \\
    \Omega_{l,i,j} &= \tilde{s}_{l,i,j} + F_{l,i,j},
\end{align*}
where $\tilde{F}_{l,i,j}$ is a moving average of past squared gradients (with a decay factor $\beta_I$). Note that $\tilde{s}_{l,i,j}$ is estimated with a fast-moving average with a decay factor of $0.5$ based on $s_{l,i,j}$.

\subsection{Hyperparameter Search Space}
\label{appendix:details-hp-table}
In this section, we present the hyperparameter search space we conduct for each method in each problem in Table \ref{tab:hyperparameters}. Our grid search is quite exhaustive and can reach up to $100$ configurations per algorithm for a single run. Table \ref{tab:best-hp-set} and Table \ref{tab:best-hp-set-cifar10} show the best hyperparameter configuration for each method on each problem.

\subsection{Policy Collapse Experiment}
\label{appendix:details-policy-collapse}
We evaluate AdaUPGD using various MuJoCo (Todorov et al.\ 2012) environments in comparison to Adam. We use the CleanRL (Huang et al.\ 2022) implementation for PPO with its default best hyper-parameters. For AdaUPGD, we used the same values of $\beta_1$, $\beta_2$, and $\epsilon$ used in Adam, which are $0.9$, $0.999$, and $10^{-5}$, respectively. We used noise with a standard deviation of $0.001$ and a utility trace with a decay rate of $0.999$ in all environments except for \emph{Humanoid-v4} and \emph{HumanoidStandup-v4}, in which we use a utility trace with a decay rate of $0.9$.

\clearpage

\begin{table}[H]
\begin{center}
\caption{Hyperparameter search spaces. N-UPGD is short for non-protecting UPGD.}
\label{tab:hyperparameters}
\vspace{0.4cm}
\begin{tabular}{c|c|c|c}
\textbf{Problem} &  & \textbf{Space} & \textbf{Method} \\ \hline
 & & & \\
 & $\alpha$ & $\{0.1, 0.01, 0.001,0.0001\}$ & All \\
 & & & \\
 \textbf{Toy Problems} & $\beta_u$ & $\{0.0, 0.9, 0.99, 0.999\}$  & UPGD, N-UPGD \\
 & & & \\
\textbf{Stationary MNIST} & $\sigma$ & $\{0.001, 0.01, 0.1, 1.0\}$ & S\&P, UPGD, N-UPGD \\ 
 & & & \\
 & $\lambda$ & $\{0.1, 0.01, 0.001, 0.0001\}$  & S\&P \\ 
  & & & \\ \hline 
 & & & \\
 & $\alpha$ & $\{0.1, 0.01, 0.001, 0.0001\}$ & All \\
\textbf{Input-permuted}  & & & \\
\textbf{MNIST} & $\beta_u$ & $\{0.9, 0.99, 0.999, 0.9999\}$  & UPGD, N-UPGD \\
 & & & \\
 & $\sigma$ & $\{0.001, 0.01, 0.1, 1.0\}$ & S\&P, UPGD, N-UPGD  \\ 
 & & & \\
\textbf{Label-permuted} & $\lambda$ & $\{0.0, 0.1, 0.01, 0.001, 0.0001\}$  &  S\&P, UPGD, N-UPGD \\ 
\textbf{EMNIST} & & & \\
 & $\beta_1$ & $\{0.0, 0.9\}$ & Adam \\
 & & & \\
 & $\beta_2$ & $\{0.99, 0.999, 0.9999\}$ & Adam \\
\textbf{Label-permuted} & & & \\
\textbf{CIFAR-10} & $\epsilon$ & $\{10^{-4}, 10^{-8}, 10^{-16}\}$ & Adam \\
 & & & \\
 & $\kappa$ & $\{100, 10.0, 1.0, 0.1, 0.01\}$ & S-EWC, S-SI, S-MAS, S-RWalk \\
 & & & \\
\textbf{Label-permuted} & $\beta_I$ & $\{0.9, 0.99, 0.999, 0.9999\}$ & S-EWC, S-SI, S-MAS, S-RWalk \\
\textbf{\emph{mini}-ImageNet}  & & & \\
 & $\beta_w$ & $\{0.9, 0.99, 0.999\}$ for Input-permuted MNIST &  S-EWC, S-SI, S-MAS, S-RWalk \\
& & and $\{0.9, 0.99, 0.999, 0.9999\}$ for the rest & \\
& & & \\
\end{tabular}
\end{center}
\end{table}

\begin{table}[H]
\begin{center}
\caption{Best hyperparameter set of each method on Label-permuted CIFAR10.}
\label{tab:best-hp-set-cifar10}
\vspace{0.25cm}
\begin{tabular}{c|c|c}
\textbf{Problem} & \textbf{Method} & \textbf{Best Set} \\ \hline
  & &  \\
 & SGDW & $\alpha=0.01, \lambda=0.001$\\
 & S\&P & $\alpha=0.01, \sigma=0.01, \lambda=0.001$\\
 & PGD & $\alpha=0.001, \sigma=0.01$\\
 & AdamW & $\alpha=0.001, \beta_1=0.0, \beta_2=0.9999, \epsilon=10^{-8}, \lambda=0.01$\\
\textbf{Label-permuted} & S-EWC & $\alpha=0.01, \beta_I=0.9999, \beta_w=0.999, \kappa=10.0$\\
\textbf{CIFAR-10} & S-MAS & $\alpha=0.01, \beta_I=0.9999, \beta_w=0.999, \kappa=10.0$\\
 & S-SI & $\alpha=0.001, \beta_I=0.99, \beta_w=0.99, \kappa=0.01$\\
 & S-RWalk & $\alpha=0.001, \beta_I=0.9, \beta_w=0.999, \kappa=10.0$\\
 & UPGD-W & $\alpha=0.01, \sigma=0.0, \beta_u=0.999, \lambda=0.0001$\\
 & N-UPGD-W & $\alpha=0.01, \sigma=0.0, \beta_u=0.999, \lambda=0.001$\\
  & &  \\
\end{tabular}
\end{center}
\end{table}
\clearpage

\begin{table}[H]
\begin{center}
\caption{Best hyperparameter set of each method in the toy problems, Input-permuted MNIST, Label-permuted EMNIST, and Label-permuted \emph{mini}-ImageNet.}
\label{tab:best-hp-set}
\vspace{0.25cm}
\begin{tabular}{c|c|c}
\textbf{Problem} & \textbf{Method} & \textbf{Best Set} \\
 \hline
 & &  \\
\multirow{6}{*}{\textbf{Toy Problem 1}} & SGD & $\alpha=0.01$\\
 & PGD & $\alpha=0.01, \sigma=0.1$\\
 & S\&P & $\alpha=0.01, \sigma=1.0, \lambda=0.1$\\
 & UPGD & $\alpha=0.01, \sigma=0.0001, \beta_u=0.9$\\
 & N-UPGD & $\alpha=0.01, \sigma=0.1, \beta_u=0.999$\\
 & Reference & $\alpha=0.01$\\
 & &  \\
 \multirow{6}{*}{\textbf{Toy Problem 2}} & SGD & $\alpha=0.01$\\
 & PGD & $\alpha=0.01, \sigma=0.1$\\
 & S\&P & $\alpha=0.01, \sigma=1.0, \lambda=0.1$\\
 & UPGD & $\alpha=0.01, \sigma=0.1, \beta_u=0.9$\\
 & N-UPGD & $\alpha=0.01, \sigma=0.1, \beta_u=0.999$\\
 & Reference & $\alpha=0.01$\\
 & &  \\
 \hline
 & &  \\
 & SGDW & $\alpha=0.001, \lambda=0.001$\\
 & S\&P & $\alpha=0.001, \sigma=0.1, \lambda=0.01$\\
 & PGD & $\alpha=0.001, \sigma=0.1$\\
 & AdamW & $\alpha=0.0001, \beta_1 = 0.0, \beta_2 = 0.99, \epsilon=10^{-8},\lambda=0.0$\\
\textbf{Input-permuted} & S-EWC & $\alpha=0.001, \beta_I=0.9999, \beta_w=0.99, \kappa=0.001$\\
\textbf{MNIST}  & S-MAS & $\alpha=0.001, \beta_I=0.9999, \beta_w=0.999, \kappa=0.1$\\
& S-SI & $\alpha=0.001, \beta_I=0.9999, \beta_w=0.999, \kappa=0.1$\\
 & S-RWalk & $\alpha=0.001, \beta_I=0.99, \beta_w=0.999, \kappa=10.0$\\
 & UPGD-W & $\alpha=0.01, \sigma=0.1, \beta_u=0.9999, \lambda=0.01$\\
 & N-UPGD-W & $\alpha=0.001, \sigma=0.1, \beta_u=0.9, \lambda=0.01$\\ 
 & &  \\
 \hline
 & &  \\
 & SGDW & $\alpha=0.01, \lambda=0.0001$\\
 & S\&P & $\alpha=0.01, \sigma=0.01, \lambda=0.001$\\
 & PGD & $\alpha=0.01, \sigma=0.01$\\
 & AdamW & $\alpha=0.0001, \beta_1 = 0.0, \beta_2 = 0.9999, \epsilon=10^{-8}, \lambda=0.1$\\
\textbf{Label-permuted} & S-EWC & $\alpha=0.01, \beta_I=0.9999, \beta_w=0.9999, \kappa=1.0$\\
\textbf{EMNIST} & S-MAS & $\alpha=0.01, \beta_I=0.999, \beta_w=0.9999, \kappa=1.0$\\
 & S-SI & $\alpha=0.01, \beta_I=0.9, \beta_w=0.9, \kappa=0.1$\\
 & S-RWalk & $\alpha=0.01, \beta_I=0.99, \beta_w=0.9999, \kappa=0.1$\\
 & UPGD-W & $\alpha=0.01, \sigma=0.001, \beta_u=0.9, \lambda=0.0$\\
 & N-UPGD-W & $\alpha=0.01, \sigma=0.01, \beta_u=0.9999, \lambda=0.001$\\
  & &  \\
 \hline
 & &  \\
 & SGDW & $\alpha=0.01, \lambda=0.001$\\
 & S\&P & $\alpha=0.01, \sigma=0.01, \lambda=0.001$\\
 & PGD & $\alpha=0.01, \sigma=0.01$\\
 & AdamW & $\alpha=0.0001, \beta_1 = 0.9, \beta_2 = 0.9999, \epsilon=10^{-8}, \lambda=0.1$\\
\textbf{Label-permuted}  & S-EWC & $\alpha=0.01, \beta_I=0.999, \beta_w=0.9999, \kappa=1.0$\\
\textbf{\emph{mini}-ImageNet} & S-MAS & $\alpha=0.01, \beta_I=0.999, \beta_w=0.9999, \kappa=1.0$\\
 & S-SI & $\alpha=0.001, \beta_I=0.9, \beta_w=0.99, \kappa=0.01$\\
& S-RWalk & $\alpha=0.01, \beta_I=0.9, \beta_w=0.9999, \kappa=0.1$\\
 & UPGD-W & $\alpha=0.01, \sigma=0.001, \beta_u=0.9, \lambda=0.0$\\
 & N-UPGD-W & $\alpha=0.01, \sigma=0.01, \beta_u=0.999, \lambda=0.0001$\\
  & &  \\
\end{tabular}
  \end{center}
\end{table}


\section{The Quality of the Approximated Utility}
\label{appendix:utility-additional}
Fig.\ \ref{fig:utility-additional} shows the Spearman correlation at every time step with different activation functions.
An SGD learner with a step size of $0.01$ used a single hidden layer network containing $50$ units with ReLU activations (Nair \& Hinton 2010) and Kaiming initialization (He et al.\ 2015). The network has five inputs and a single output. The target of an input vector is the sum of two inputs out of the five inputs, where the inputs are sampled from $U[-0.5, 0.5]$.

\begin{figure}[ht]
\centering
\subfigure[Weight Global Utility]{
\centering
    \includegraphics[width=0.23\textwidth]{figures/ApproxUtility/weight/relu/global_correlations.pdf}
}
\subfigure[Weight Local Utility]{
\centering
    \includegraphics[width=0.23\textwidth]{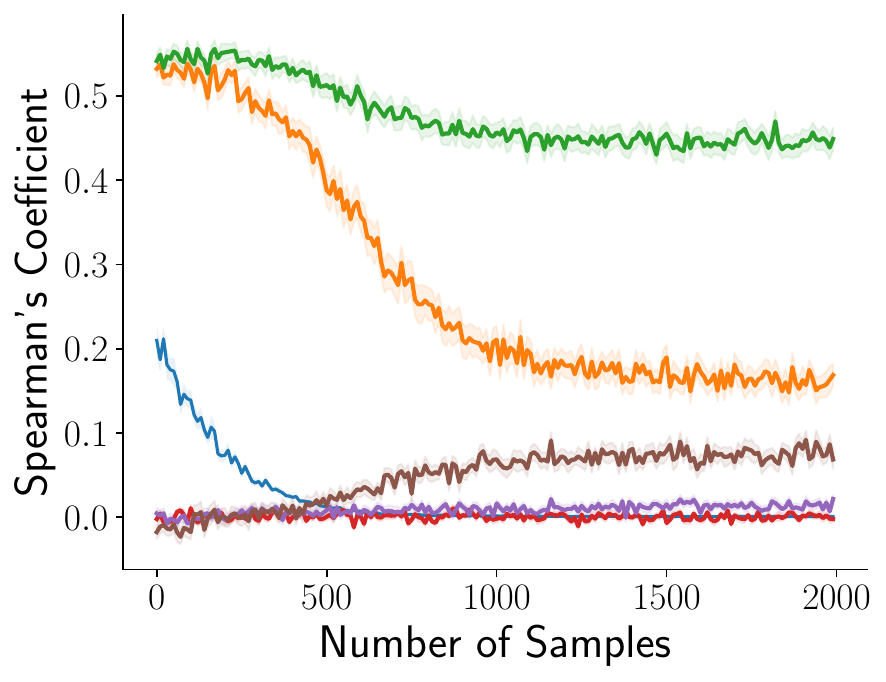}
}
\subfigure[Feature Global Utility]{
\centering
    \includegraphics[width=0.23\textwidth]{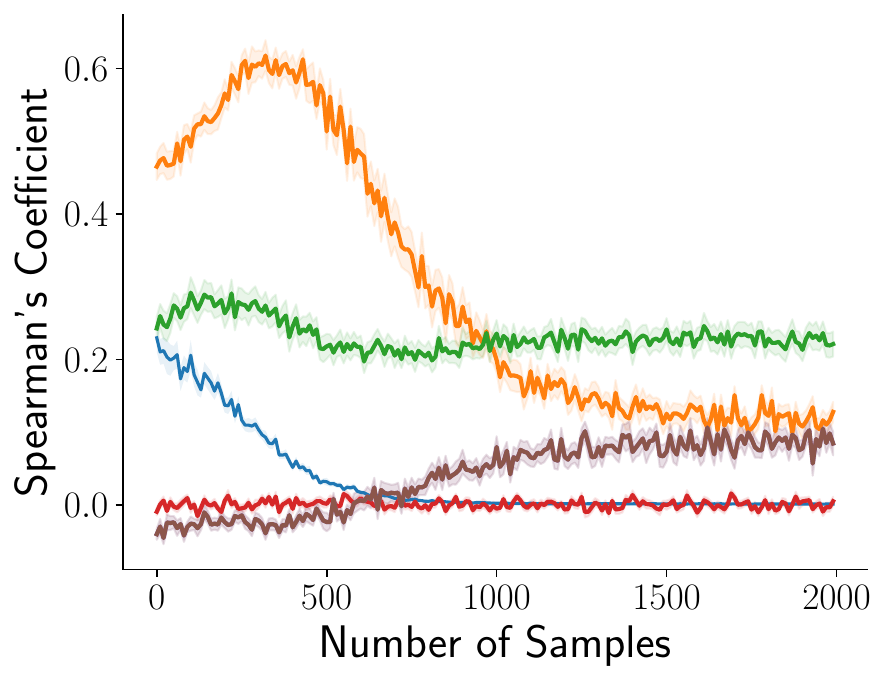}
}
\subfigure[Feature Local Utility]{
\centering
    \includegraphics[width=0.23\textwidth]{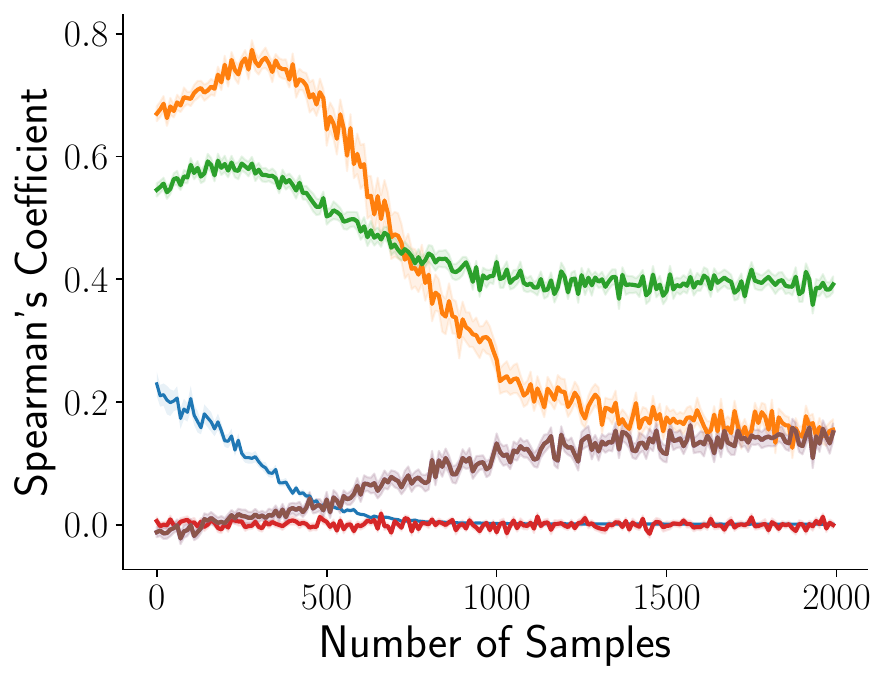}
}

\hfill

\subfigure[Weight Global Utility]{
\centering
    \includegraphics[width=0.23\textwidth]{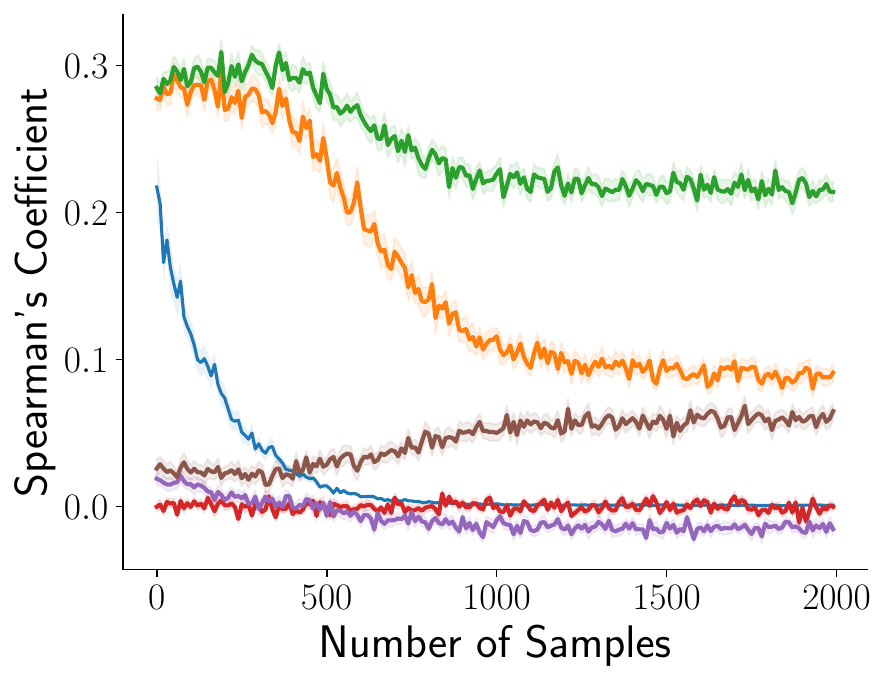}
}
\subfigure[Weight Local Utility]{
\centering
    \includegraphics[width=0.23\textwidth]{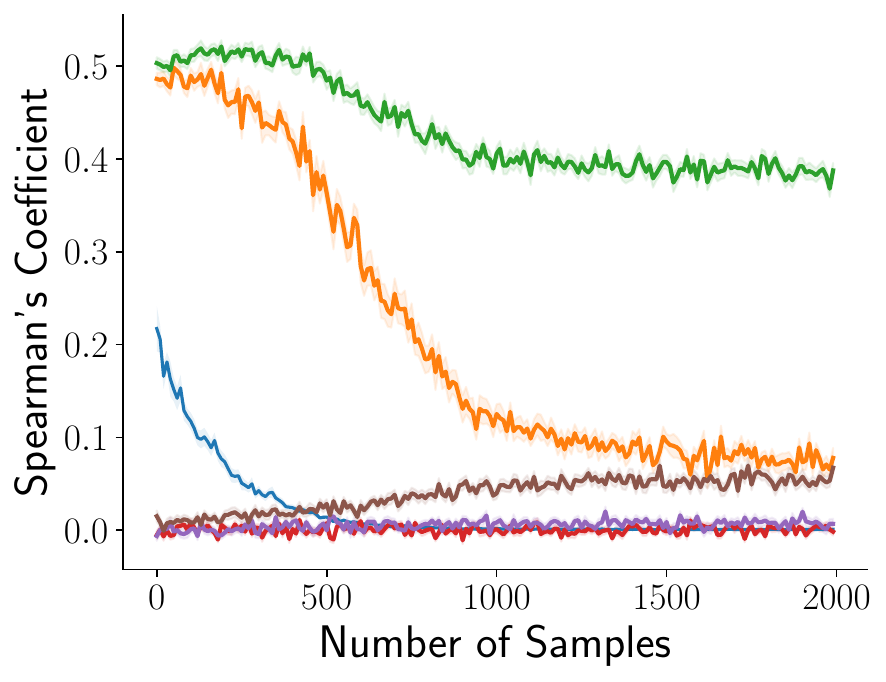}
}
\subfigure[Feature Global Utility]{
    \includegraphics[width=0.23\textwidth]{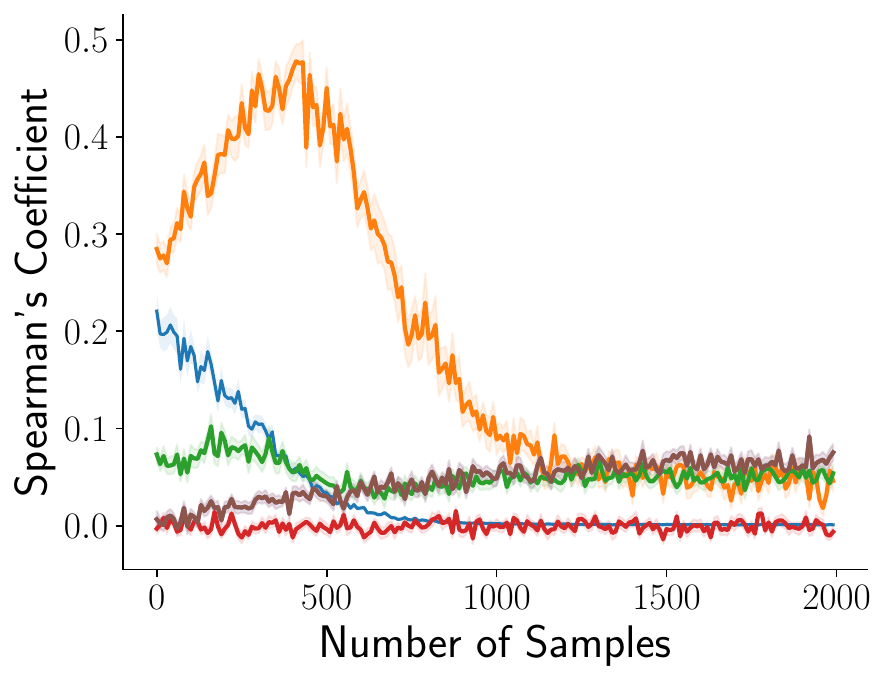}
}
\subfigure[Feature Local Utility]{
    \includegraphics[width=0.23\textwidth]{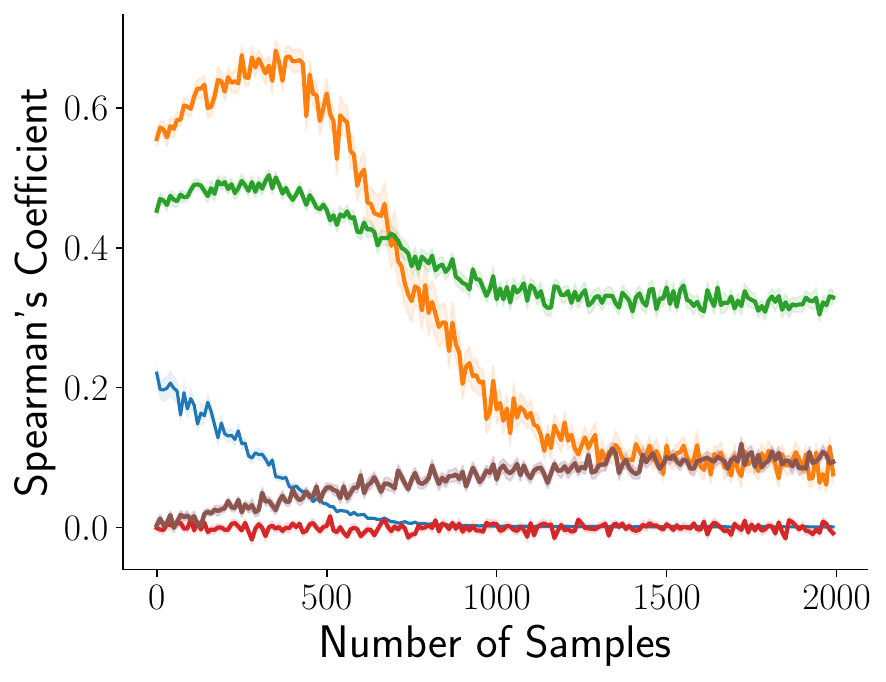}
}

\hfill

\subfigure[Weight Global Utility]{
    \includegraphics[width=0.23\textwidth]{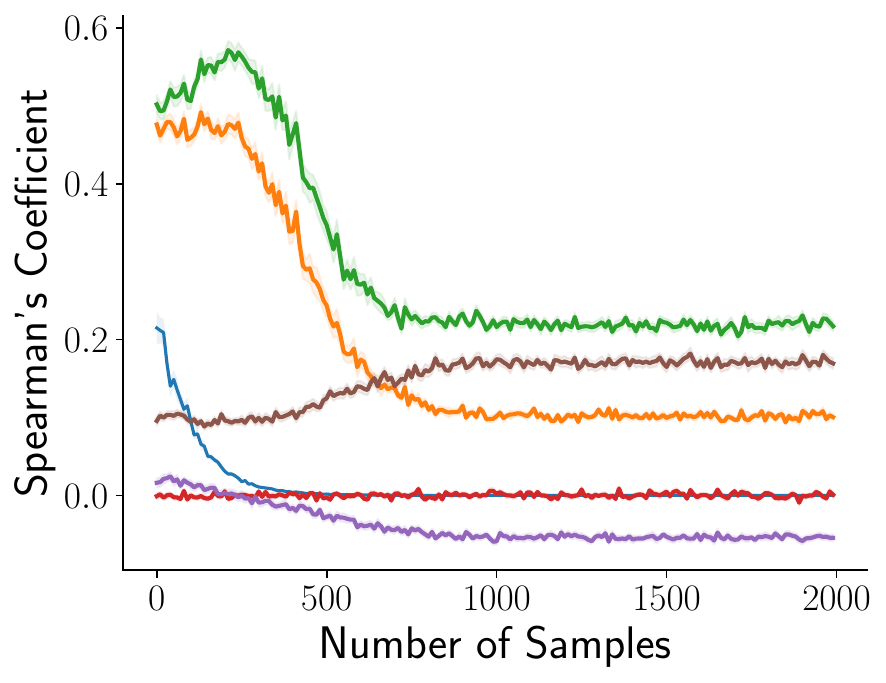}
}
\subfigure[Weight Local Utility]{
    \includegraphics[width=0.23\textwidth]{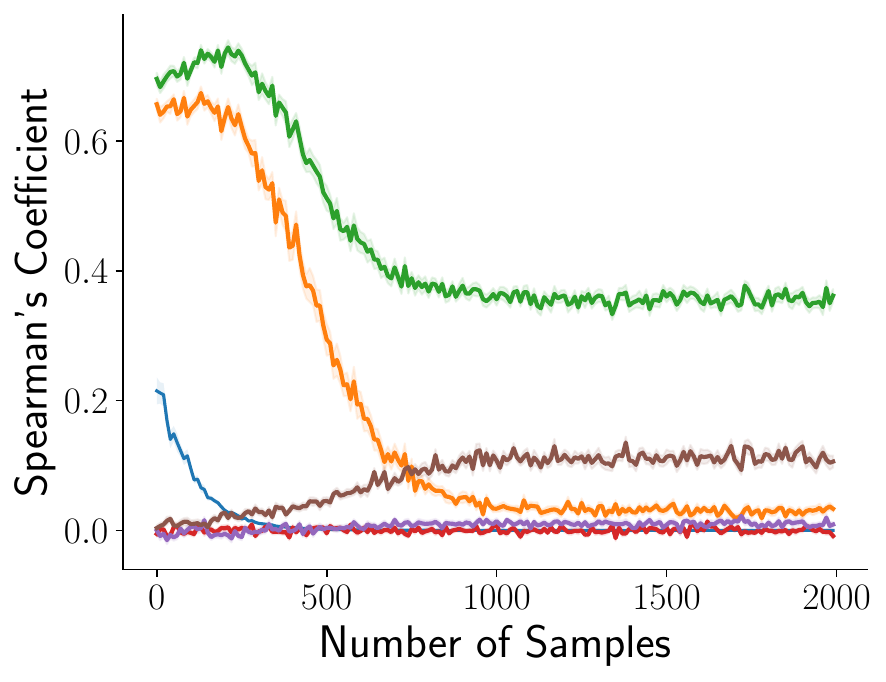}
}
\subfigure[Feature Global Utility]{
    \includegraphics[width=0.23\textwidth]{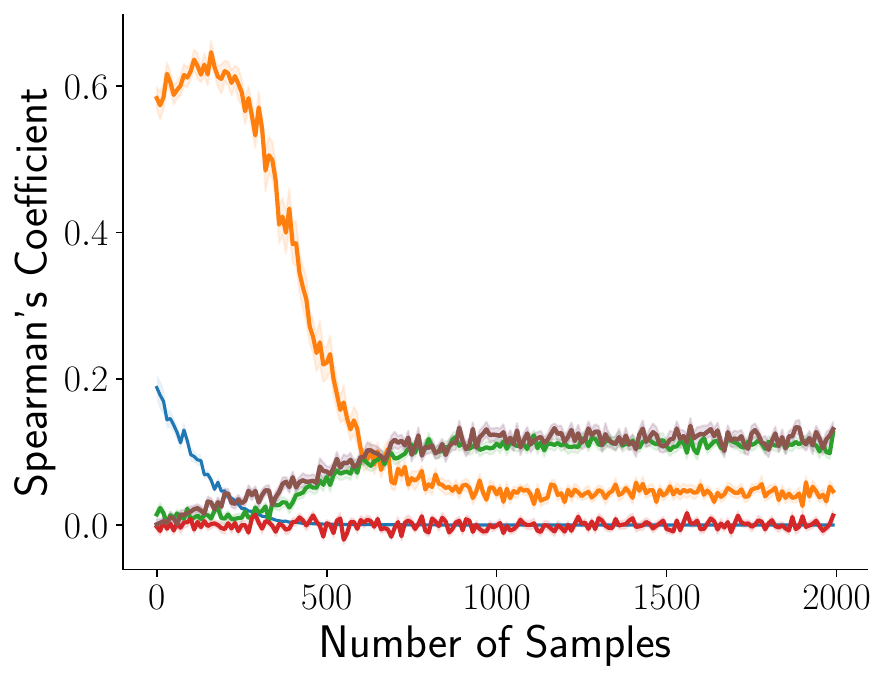}
}
\subfigure[Feature Local Utility]{
    \includegraphics[width=0.23\textwidth]{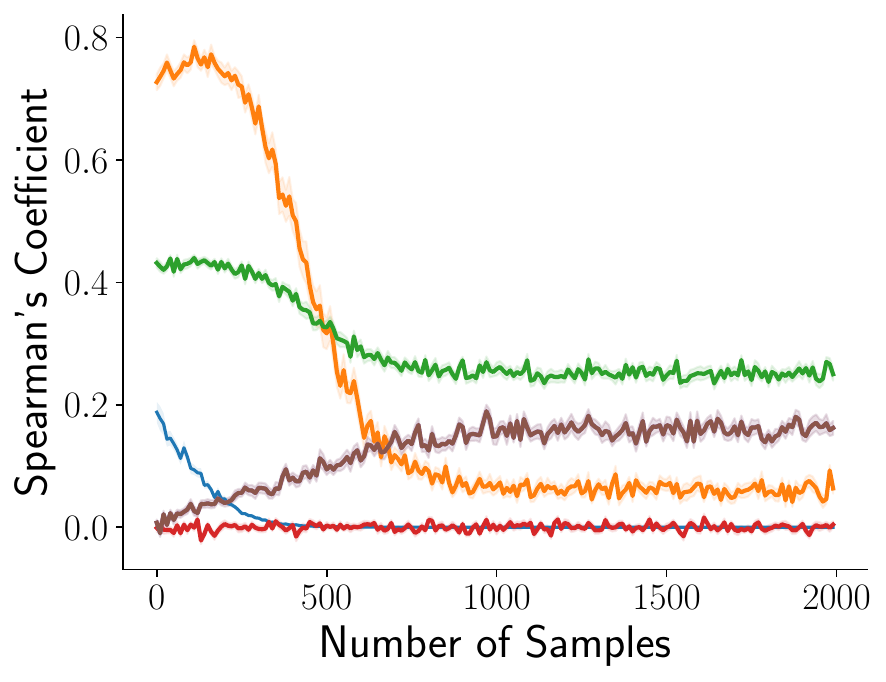}
}
\vspace{0.2cm}
\includegraphics[width=0.95\columnwidth]{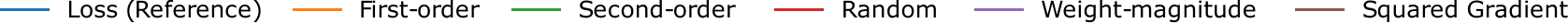}
\vspace{-0.2cm}
\caption{Spearman correlation between the true utility and approximated utilities. The activations used in the first, second, and third rows are ReLU, LeakyReLU, and Tanh. The shaded area represents the standard error.}
\label{fig:utility-additional}
\end{figure}


\section{Additional Experiments}

\subsection{More Diagnostic Statistics Characterizing Solution Methods}
\label{appendix:stats-additional}
Here, we provide more diagnostic statistics for our methods Fig.\ \ref{fig:add-stats-mnist}, Fig.\ \ref{fig:add-stats-cifar10}, Fig.\ \ref{fig:add-stats-emnist}, and Fig.\ \ref{fig:add-stats-imagenet}.
\begin{figure}[ht]
\centering
\includegraphics[width=0.24\textwidth]{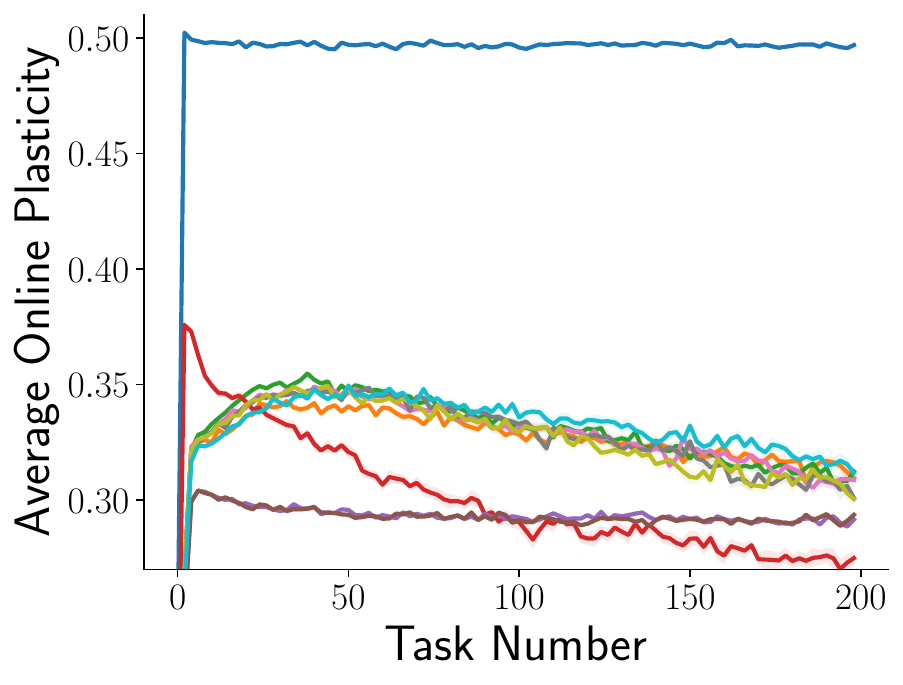}
\includegraphics[width=0.24\textwidth]{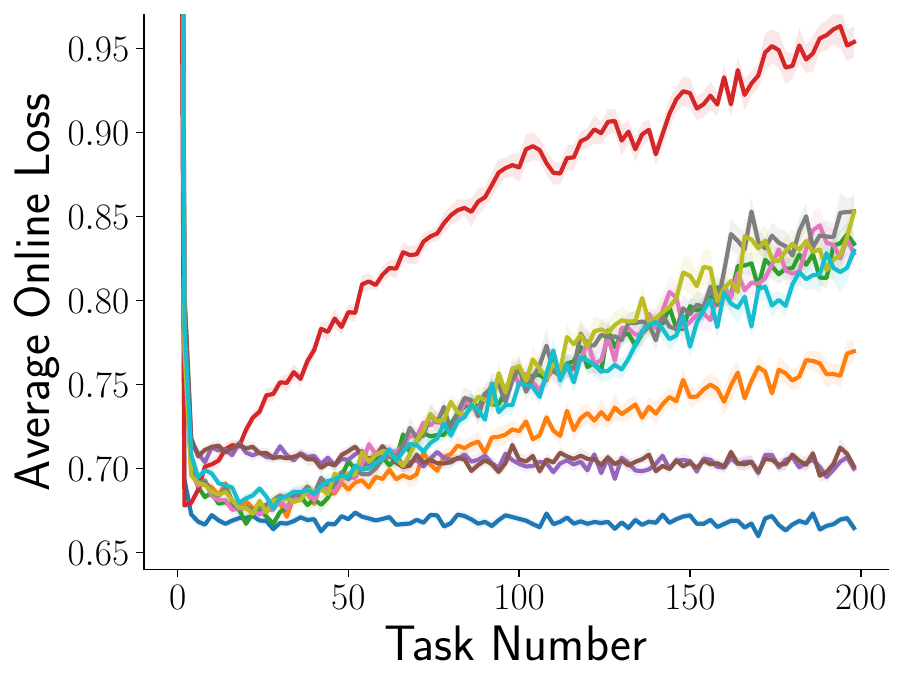}
\includegraphics[width=0.24\textwidth]{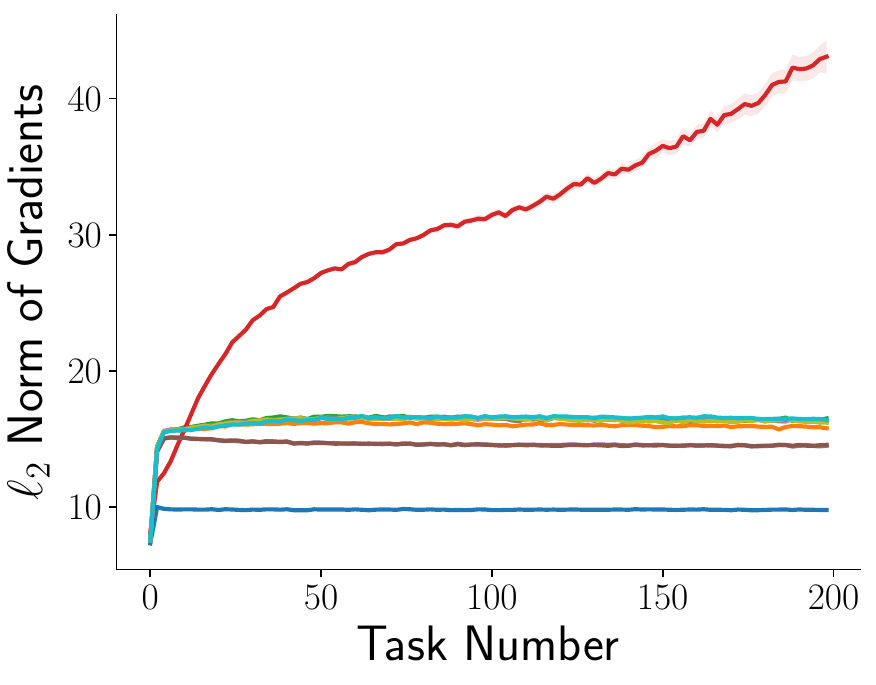}
\includegraphics[width=0.24\textwidth]{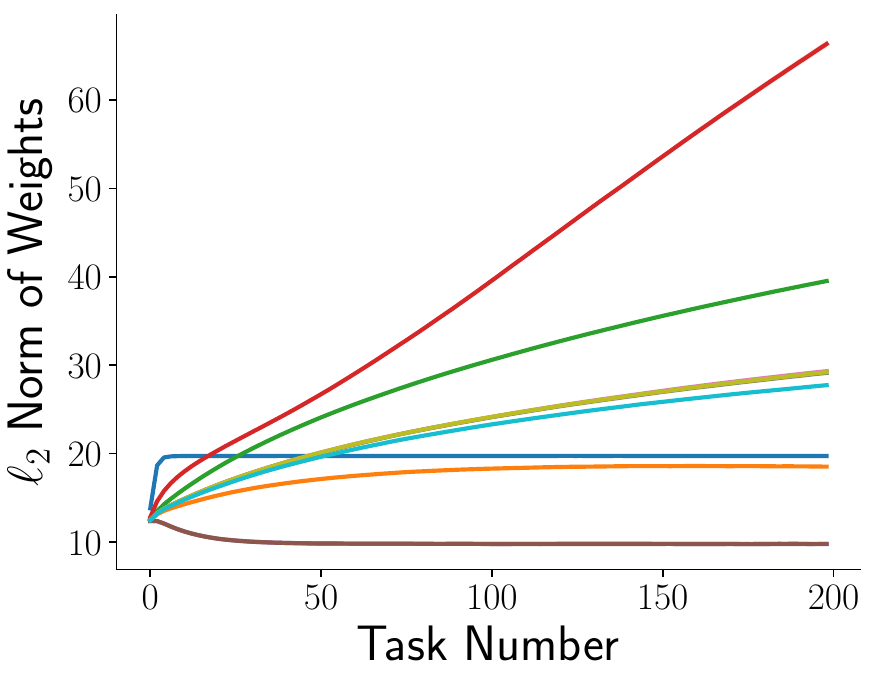}
\vspace{-0.1cm}
\includegraphics[width=0.95\textwidth]{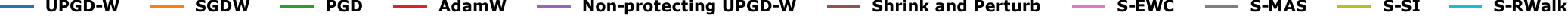}
\caption{Additional diagnostic statistics of methods on Input-permuted MNIST. The average plasticity, the average online loss, the $\ell_1$-norm of gradients, and the $\ell_2$-norm of weights are shown.}
\label{fig:add-stats-mnist}
\end{figure}
\begin{figure}[ht]
\centering
\includegraphics[width=0.24\textwidth]{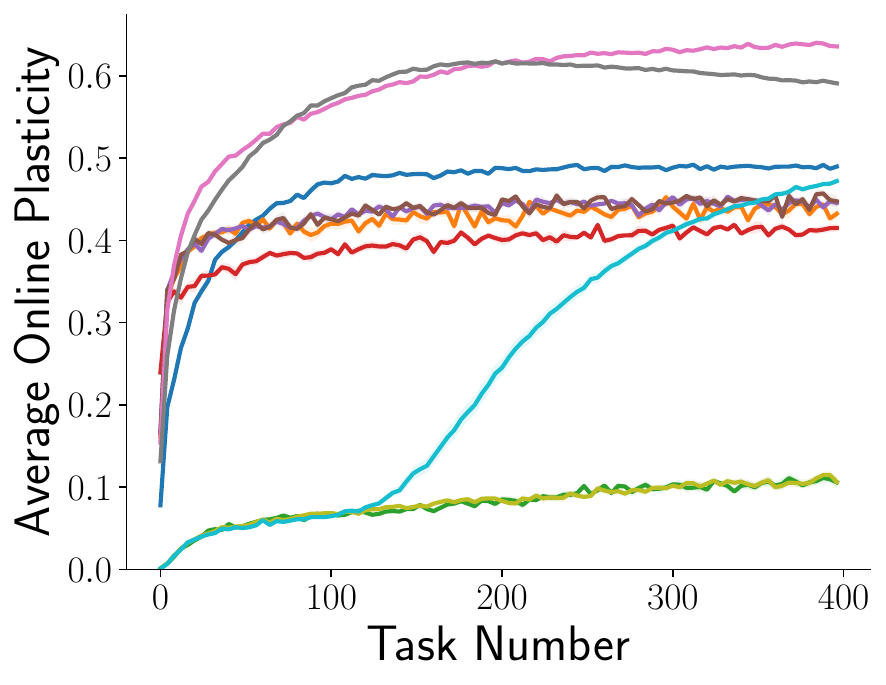}
\includegraphics[width=0.24\textwidth]{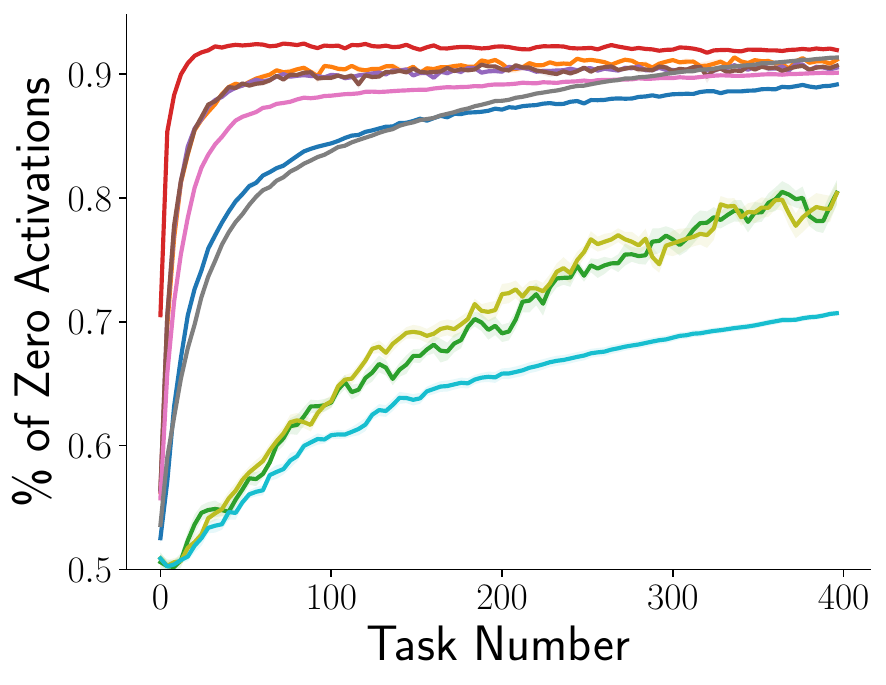}
\includegraphics[width=0.24\textwidth]{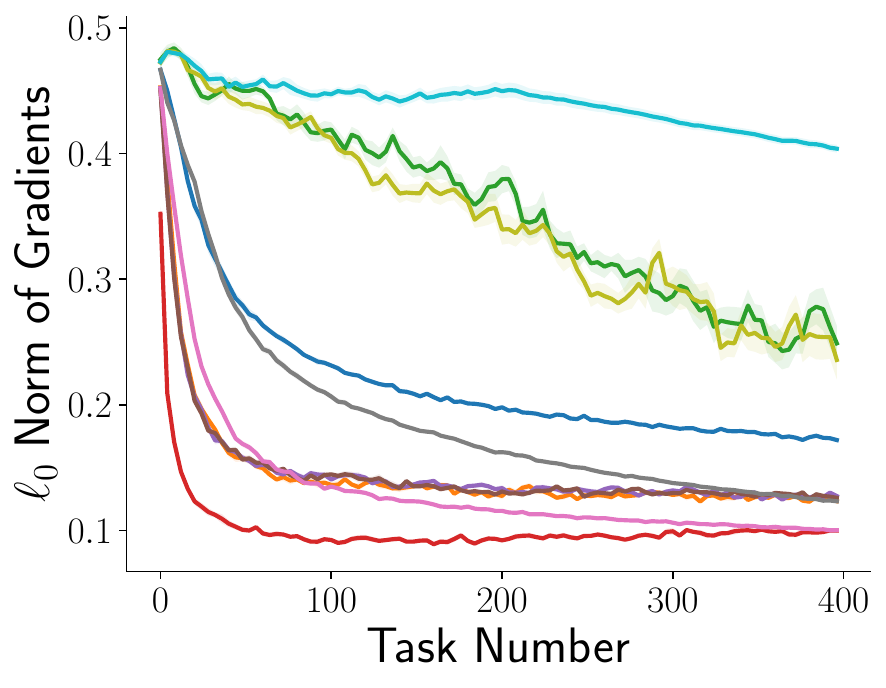}
\includegraphics[width=0.24\textwidth]{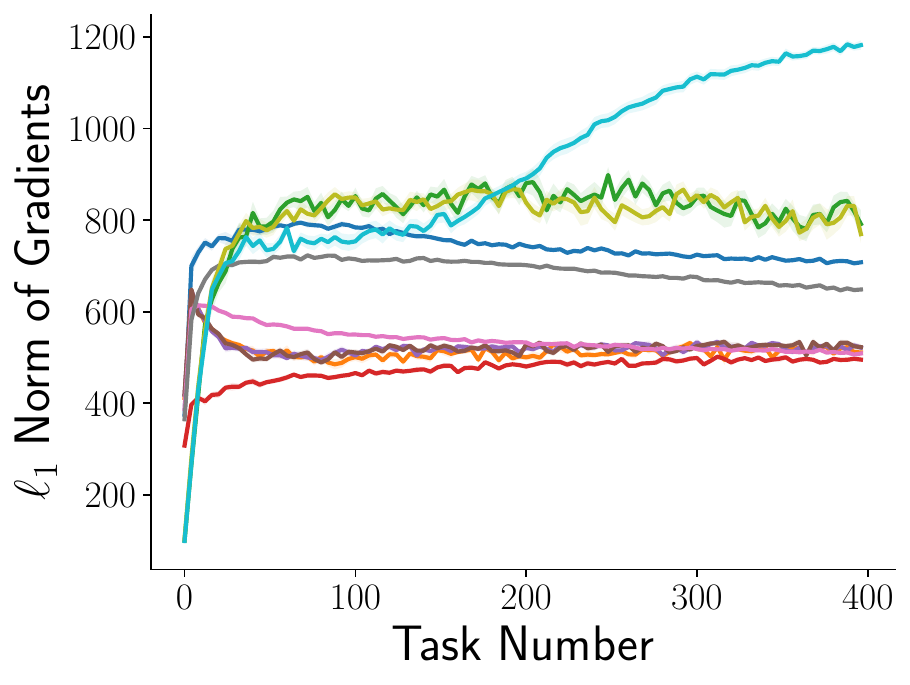}
\includegraphics[width=0.24\textwidth]{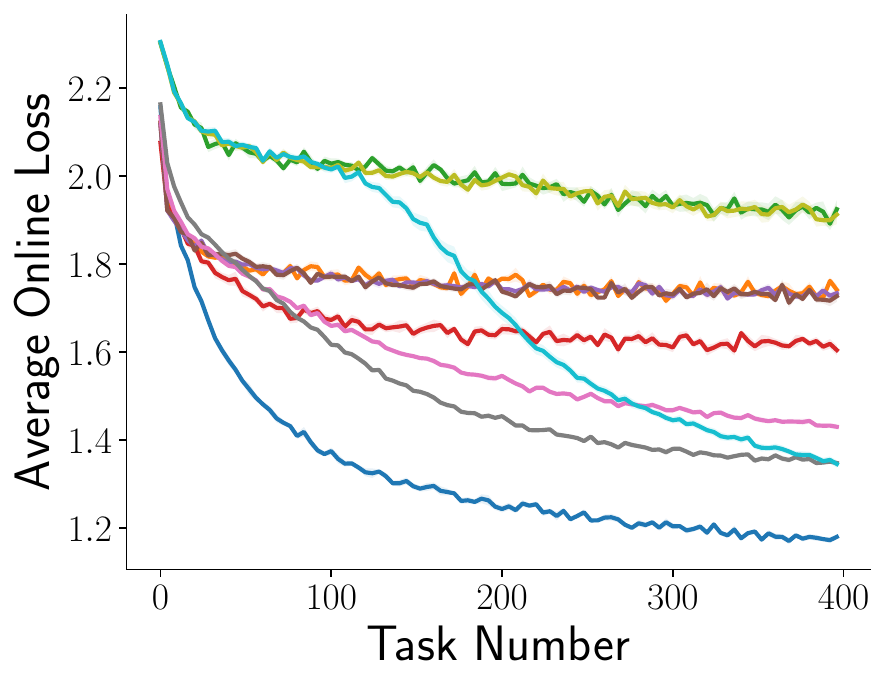}
\includegraphics[width=0.24\textwidth]{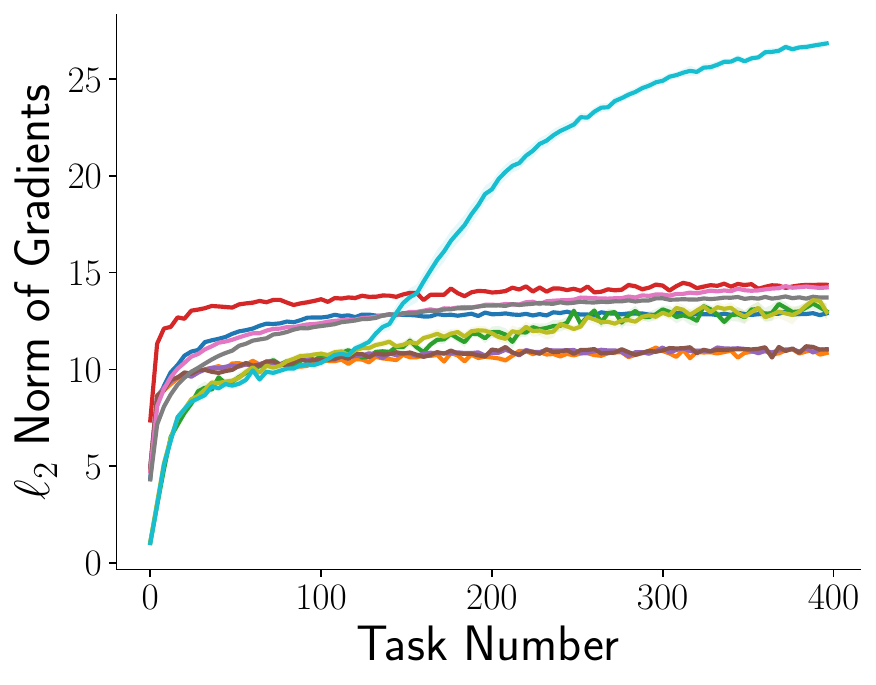}
\includegraphics[width=0.24\textwidth]{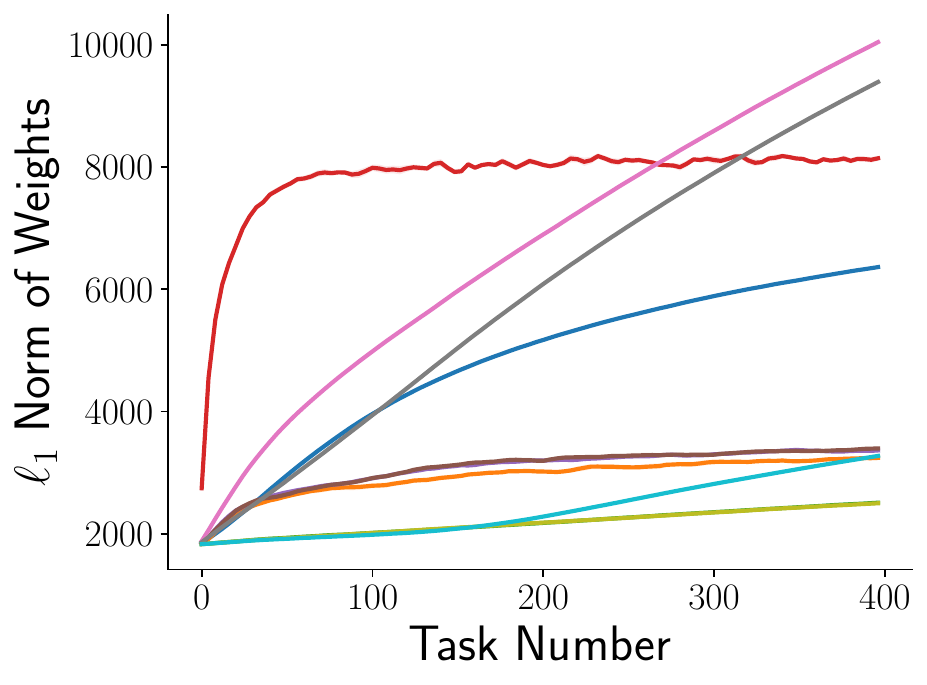}
\includegraphics[width=0.24\textwidth]{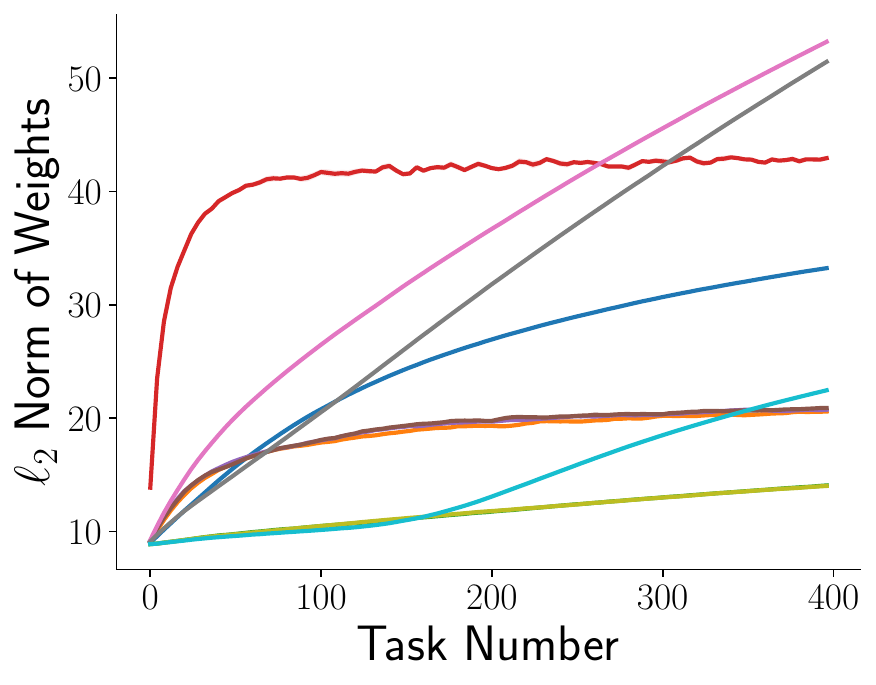}
\vspace{-0.1cm}
\includegraphics[width=0.95\textwidth]{figures/Legends/wide_updated_legend.pdf}
\caption{Diagnostic statistics of methods on Label-permuted CIFAR10. The average plasticity, percentage of zero activations, $\ell_0$, $\ell_1$ and $\ell_2$-norm of gradients, the average online loss, the $\ell_1$-norm and $\ell_2$-norm of the weights are shown. The shaded area represents the standard error.}
\label{fig:add-stats-cifar10}
\end{figure}
\begin{figure}[ht]
\centering
\includegraphics[width=0.24\textwidth]{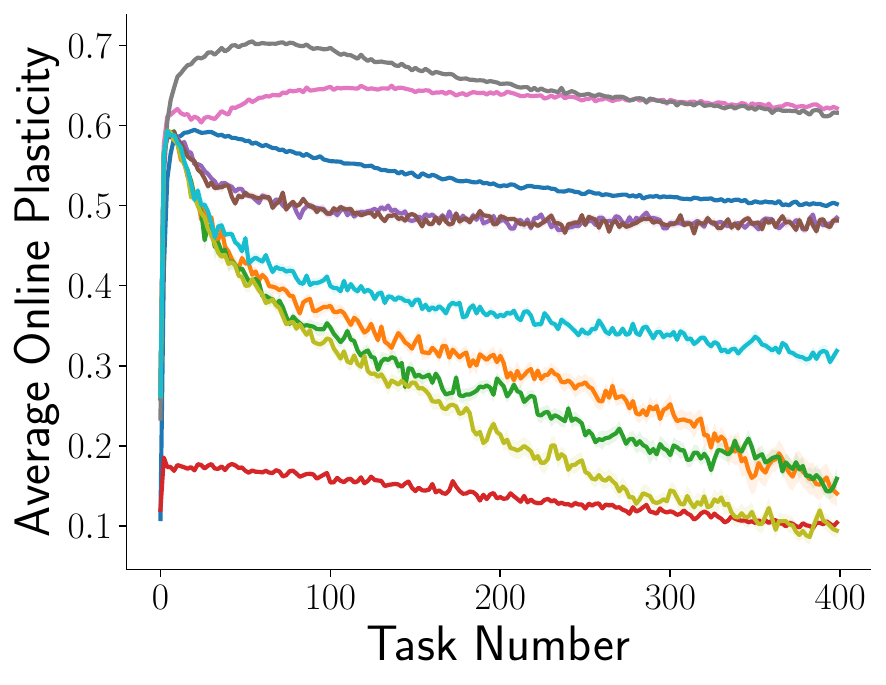}
\includegraphics[width=0.24\textwidth]{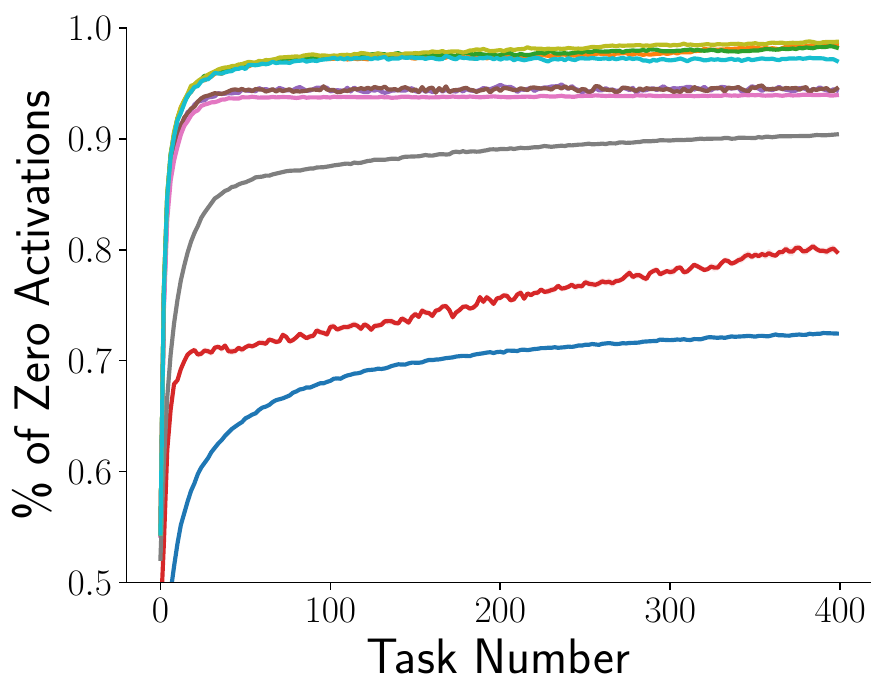}
\includegraphics[width=0.24\textwidth]{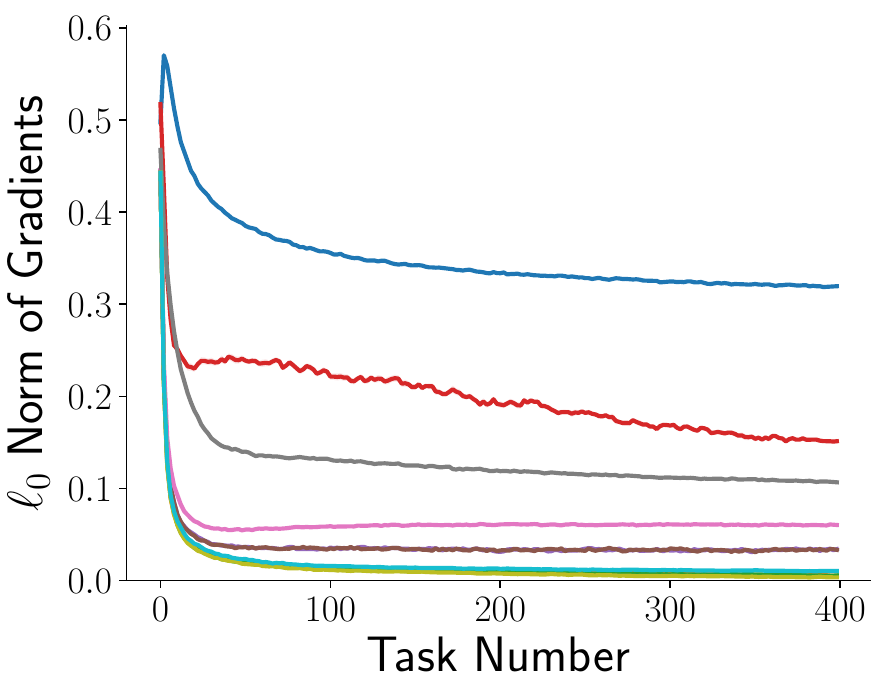}
\includegraphics[width=0.24\textwidth]{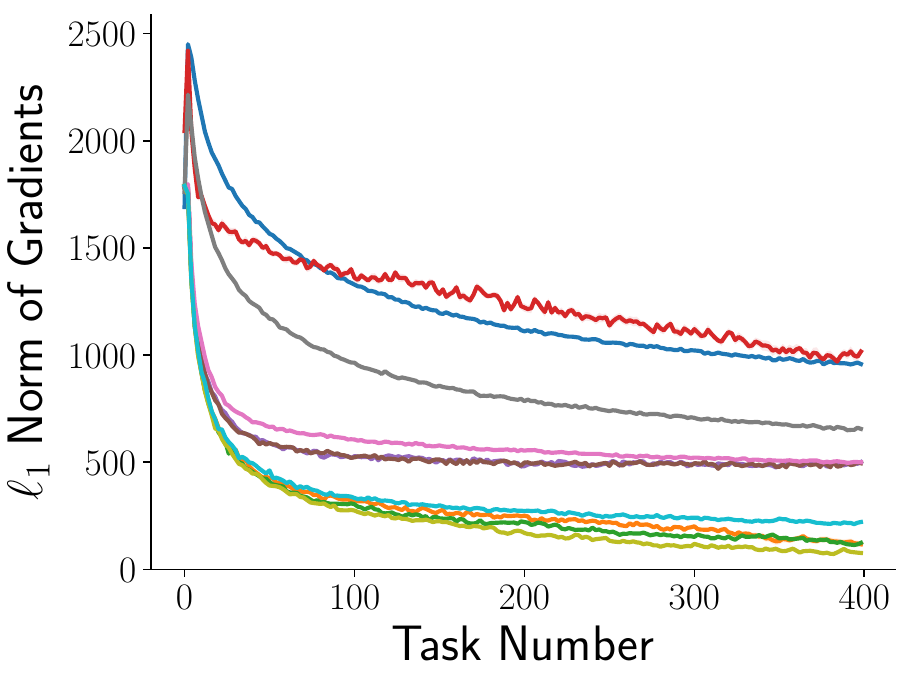}
\includegraphics[width=0.24\textwidth]{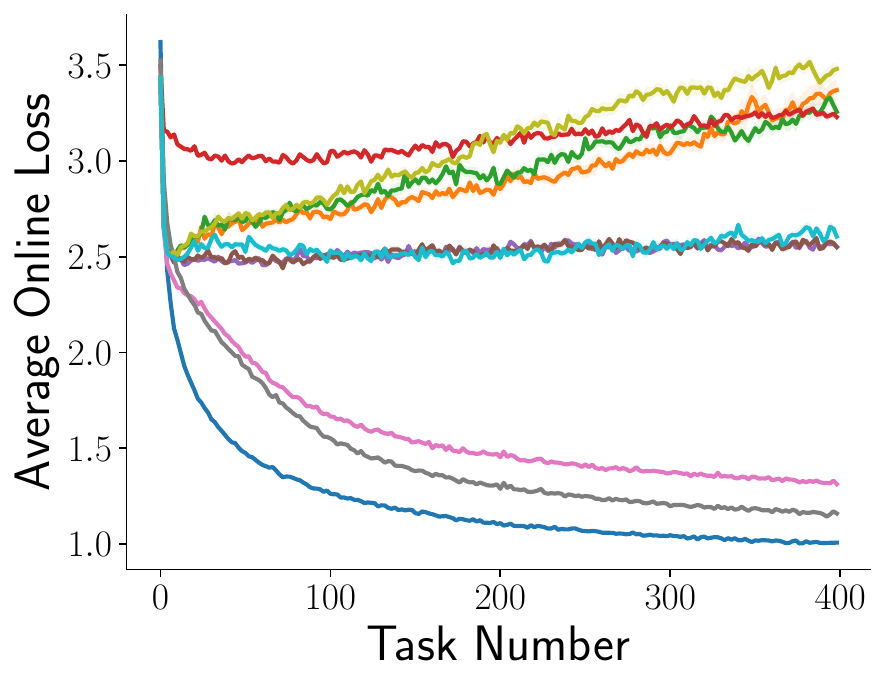}
\includegraphics[width=0.24\textwidth]{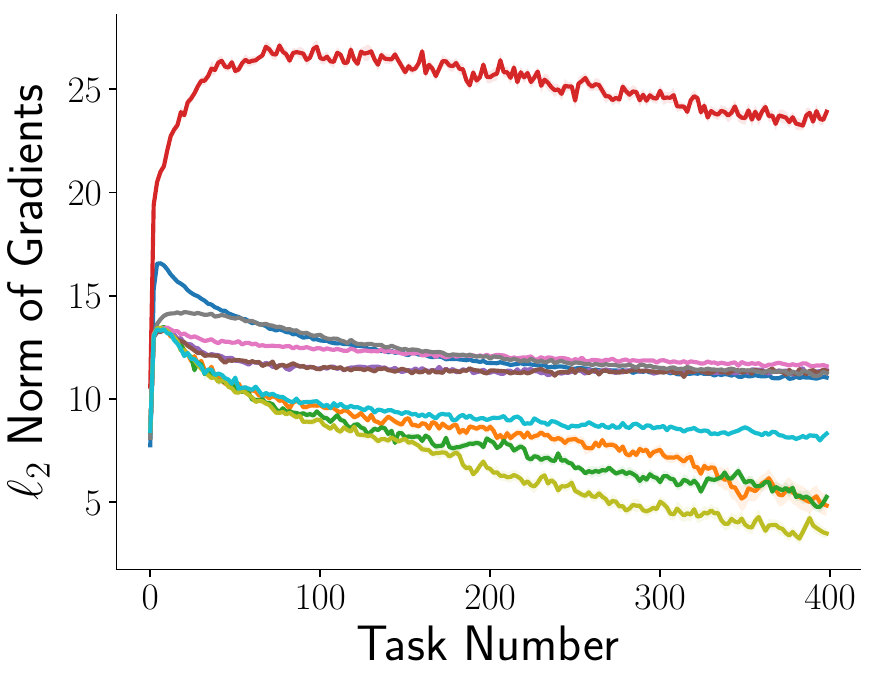}
\includegraphics[width=0.24\textwidth]{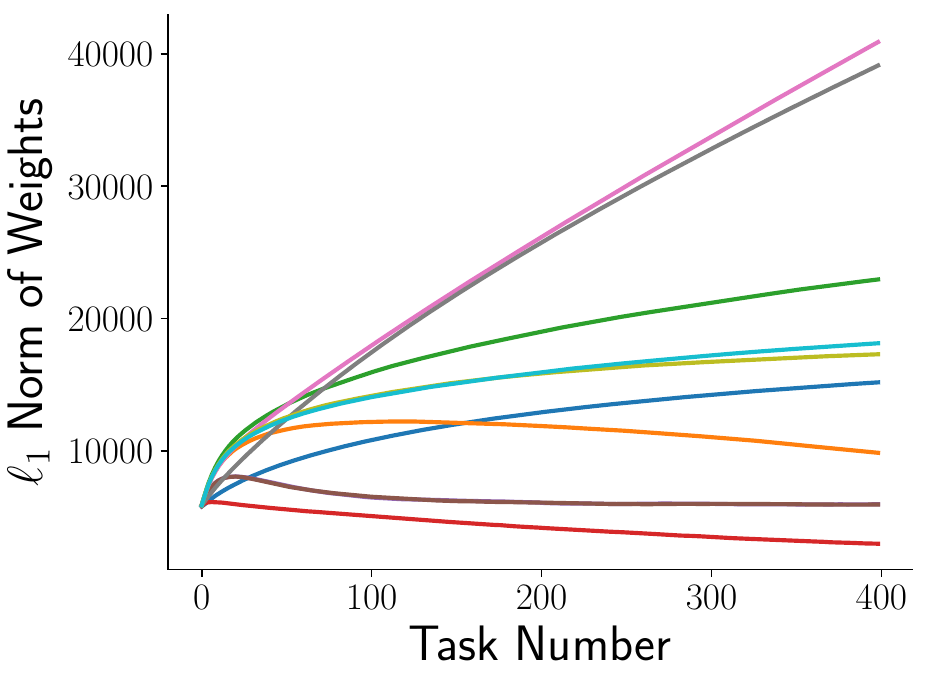}
\includegraphics[width=0.24\textwidth]{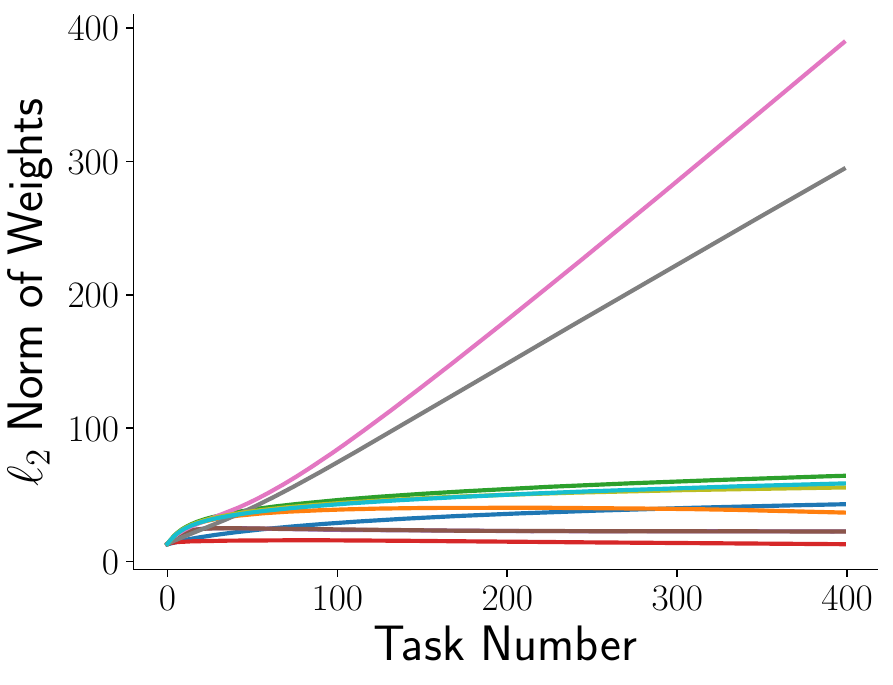}
\vspace{-0.1cm}
\includegraphics[width=0.95\textwidth]{figures/Legends/wide_updated_legend.pdf}
\caption{Diagnostic statistics of methods on Label-permuted EMNIST. The average plasticity, percentage of zero activations, $\ell_0$, $\ell_1$ and $\ell_2$-norm of gradients, the average online loss, the $\ell_1$-norm and $\ell_2$-norm of the weights are shown. The shaded area represents the standard error.}
\label{fig:add-stats-emnist}
\end{figure}
\begin{figure}[H]
\centering
\includegraphics[width=0.24\textwidth]{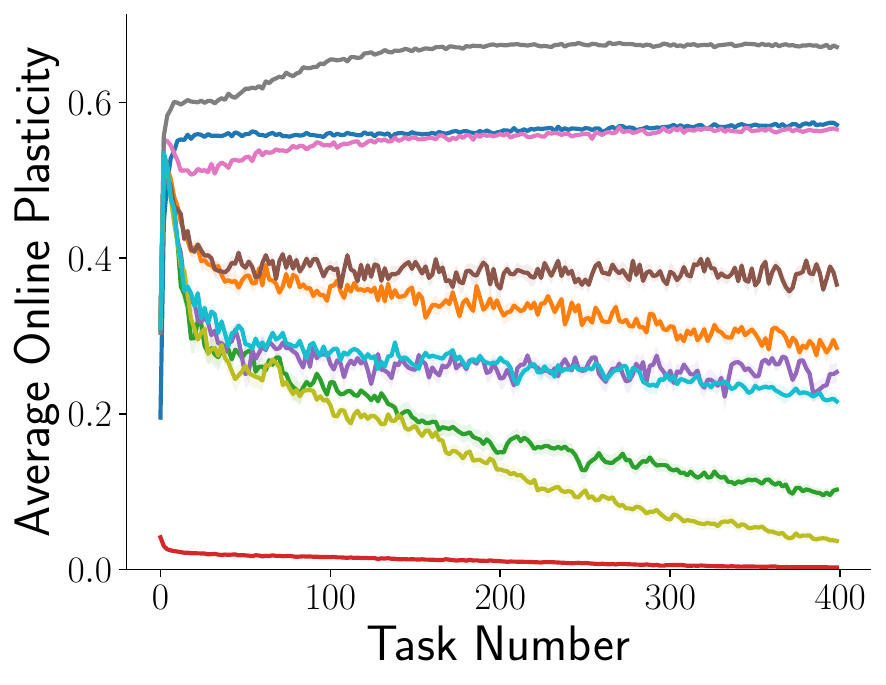}
\includegraphics[width=0.24\textwidth]{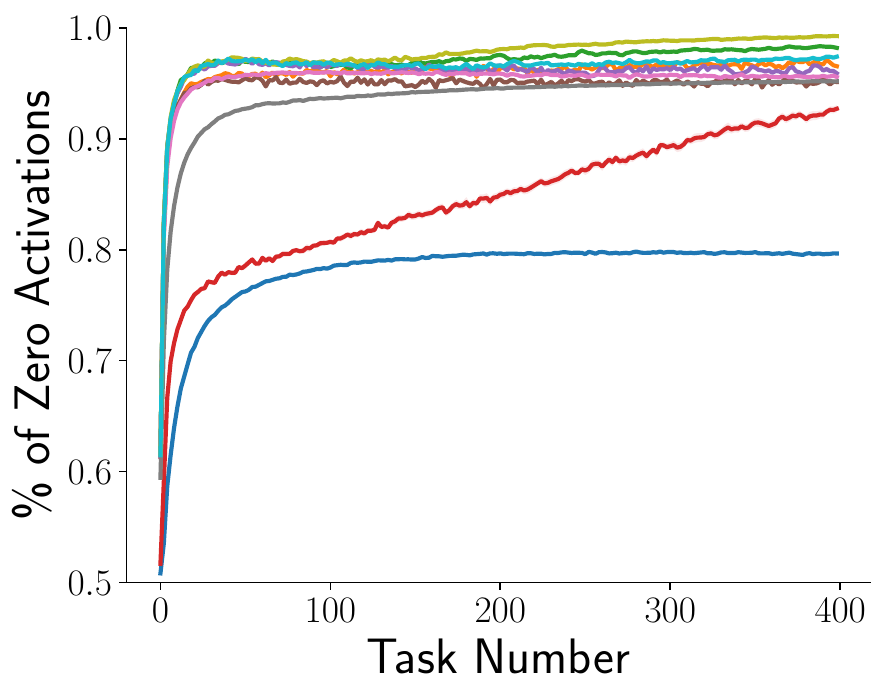}
\includegraphics[width=0.24\textwidth]{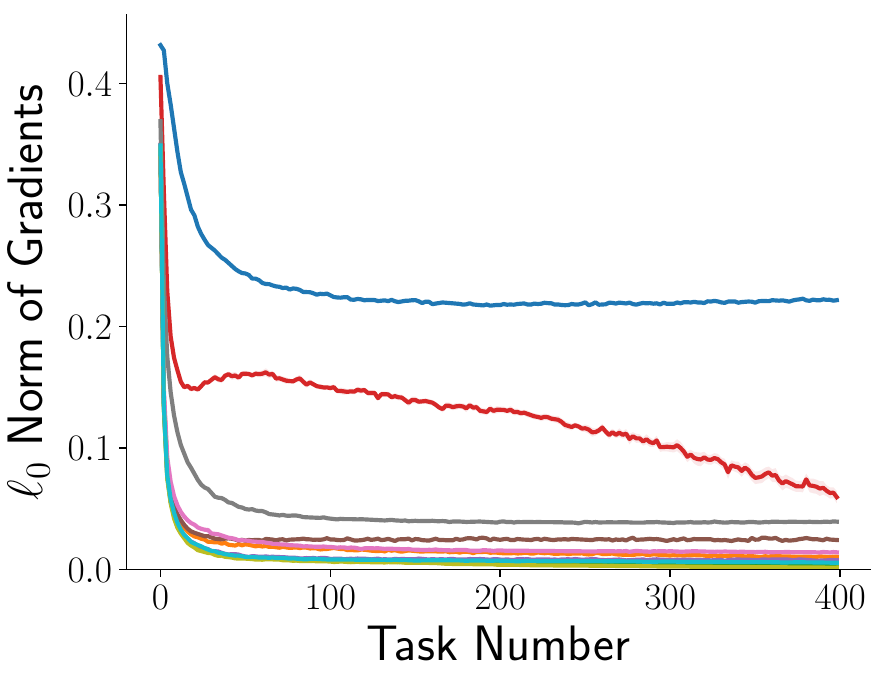}
\includegraphics[width=0.24\textwidth]{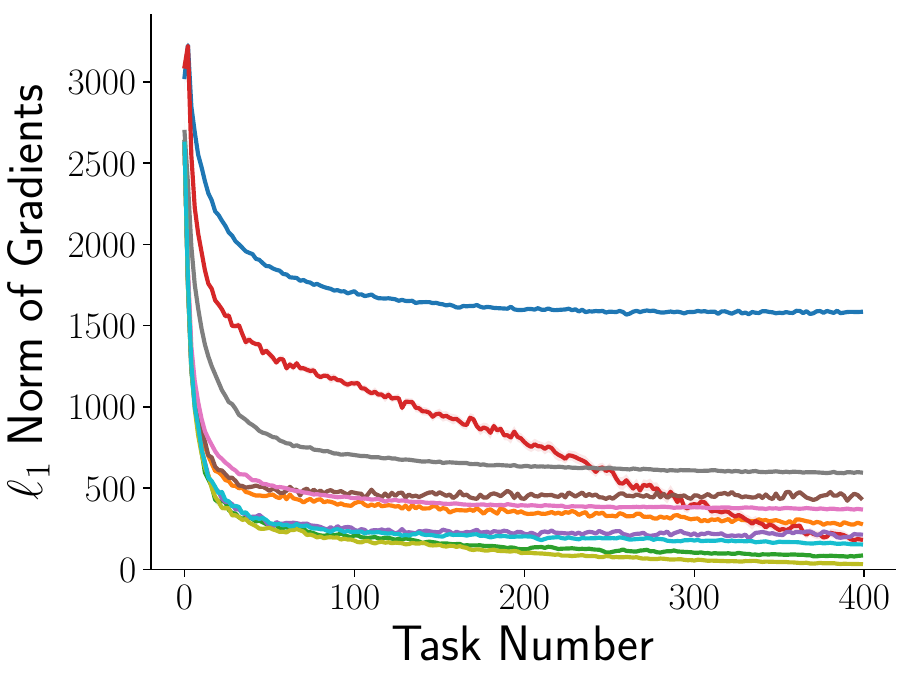}
\includegraphics[width=0.24\textwidth]{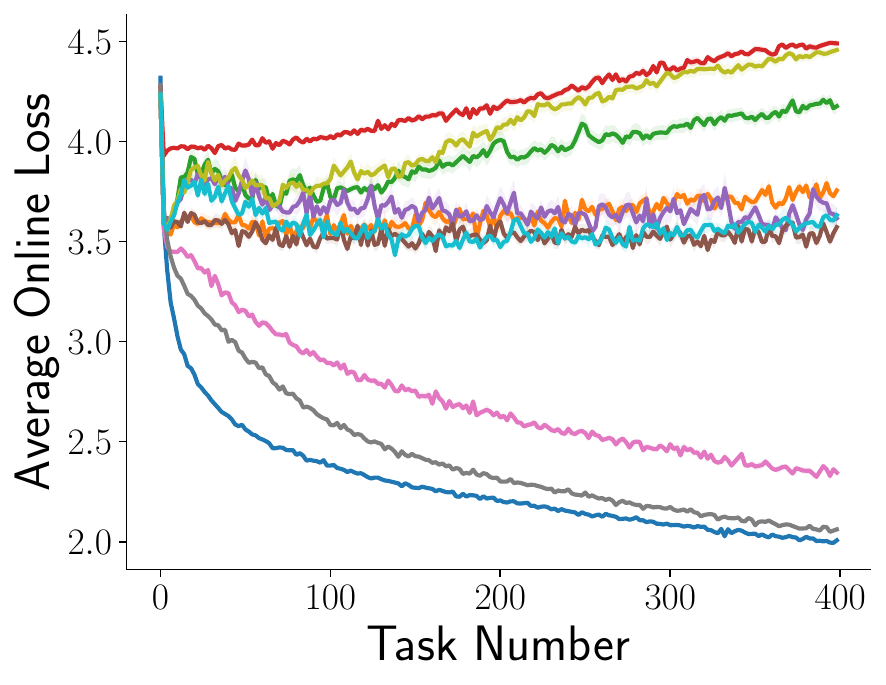}
\includegraphics[width=0.24\textwidth]{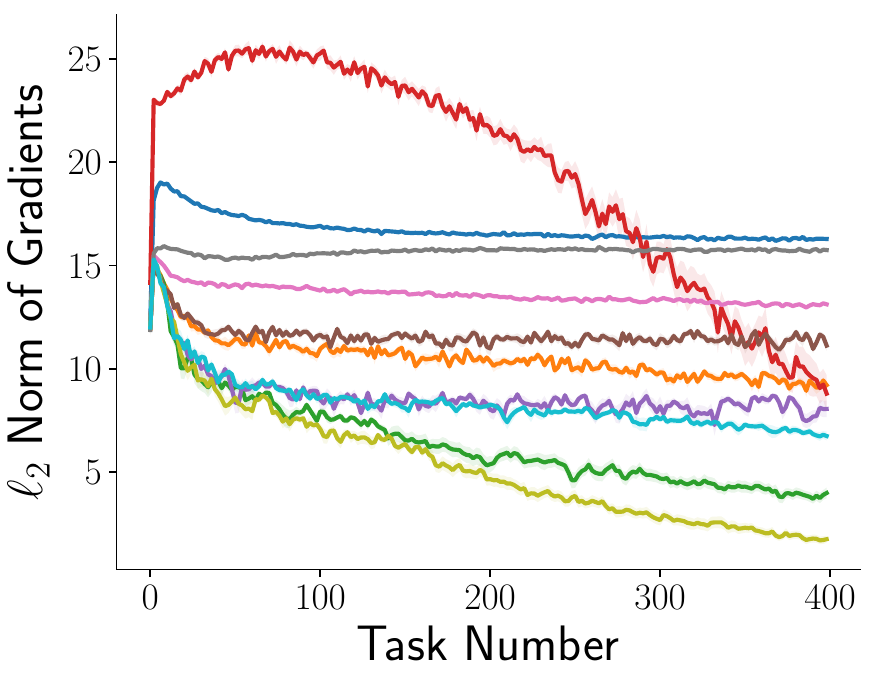}
\includegraphics[width=0.24\textwidth]{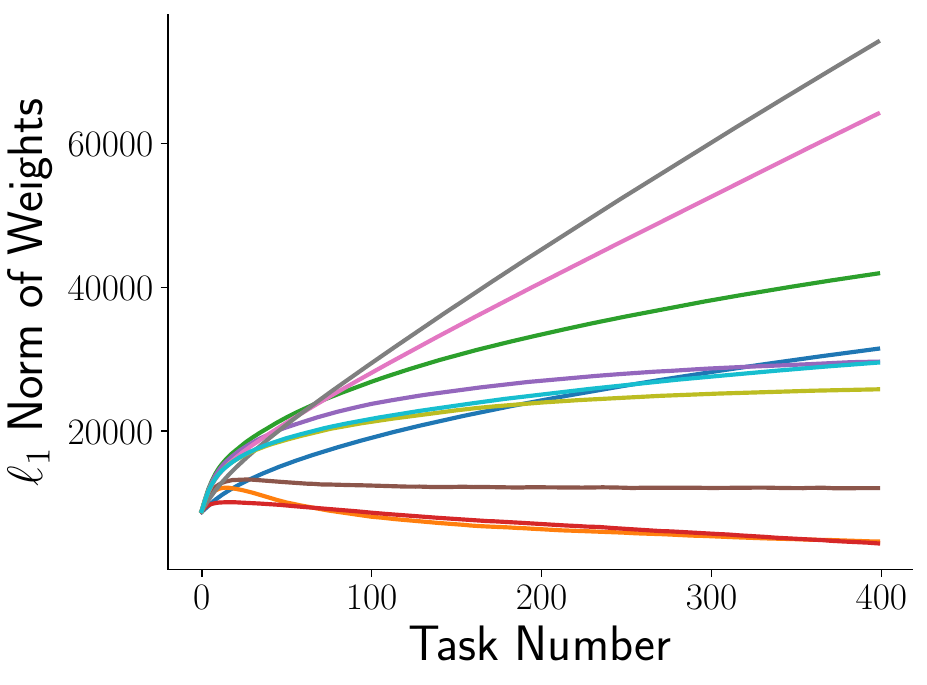}
\includegraphics[width=0.24\textwidth]{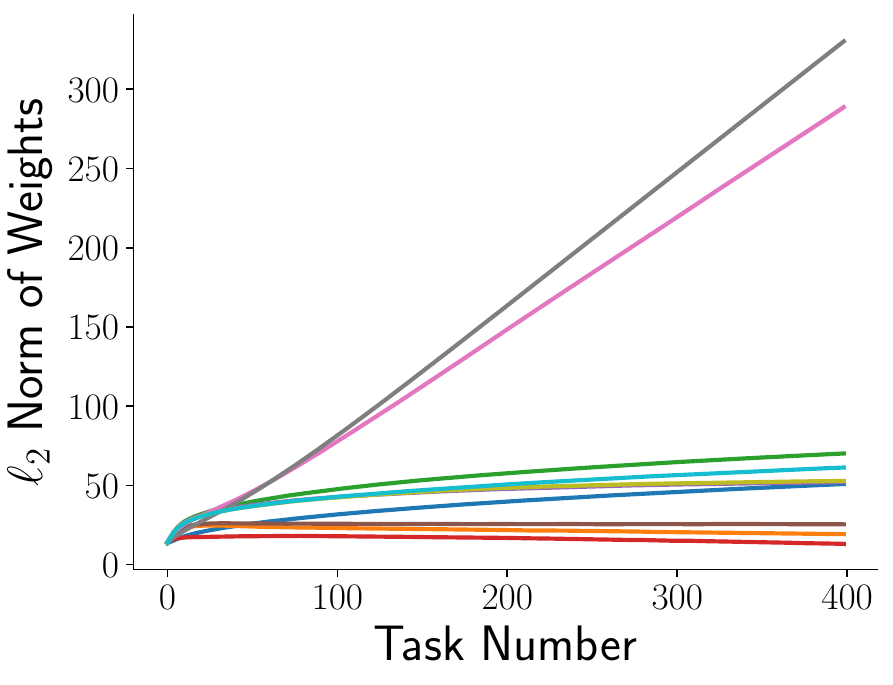}
\vspace{-0.1cm}
\includegraphics[width=0.95\textwidth]{figures/Legends/wide_updated_legend.pdf}
\caption{Diagnostic Statistics of methods on Label-permuted \emph{mini}-ImageNet. The average plasticity, percentage of zero activations, $\ell_0$, $\ell_1$ and $\ell_2$-norm of gradients, the average online loss, the $\ell_1$-norm and $\ell_2$-norm of the weights are shown. The shaded area represents the standard error.}
\label{fig:add-stats-imagenet}
\end{figure}


\subsection{Computational Time for Making Updates}
Here, we conduct a small experiment to determine the computation overhead by each method. We are interested in the computational time each algorithm needs to perform a single update. Fig.\ \ref{fig:computational-time} shows the computational time needed to make a single update using a batch of $100$ MNIST samples. The results are averaged over $10000$ updates. Each learner used a multi-layered network of $1024\times 512$ units. The error bars represent one standard error.
\begin{figure}[ht]
\centering
\includegraphics[width=0.6\textwidth]{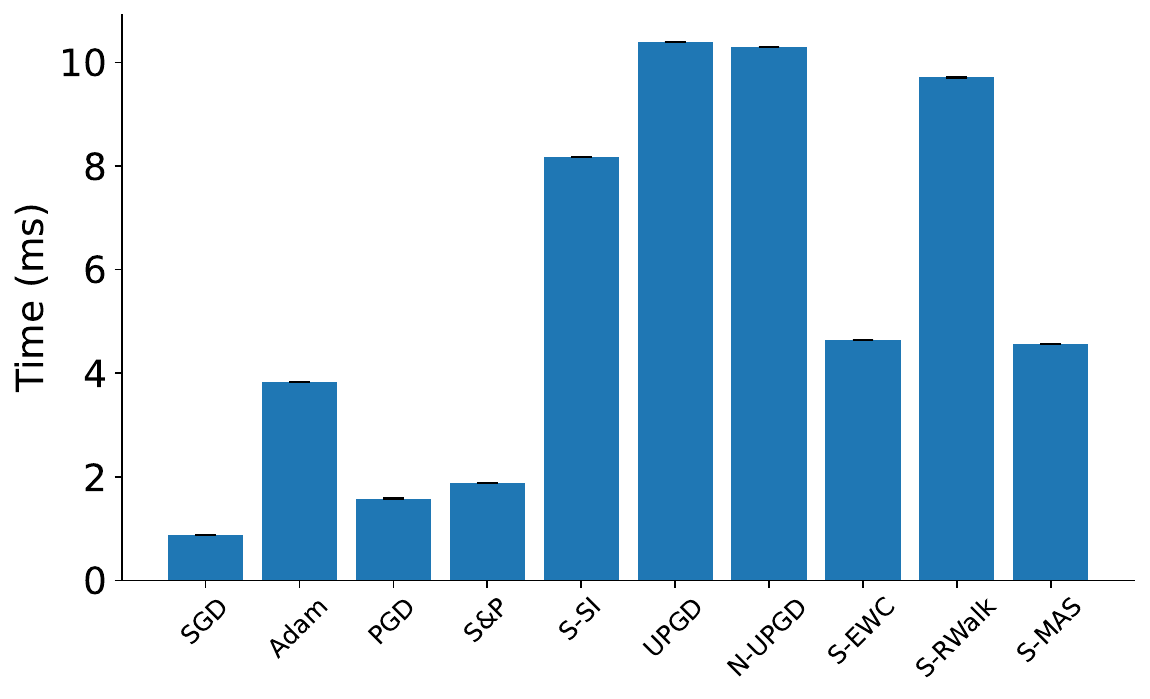}
\caption{Computational time in milliseconds needed to make a single update. N-UPGD is short for Non-protecting UPGD.}
\label{fig:computational-time}
\vspace{-0.3cm}
\end{figure}


\subsection{Feature-wise and Weight-wise UPGD on the Input-permuted MNIST}
\label{appendix:additional-input-permuted-mnist}
We repeat the experiment on Input-permuted MNIST but with feature-wise approximated utility. In addition, we show the results using the local approximated utility for both weight-wise and feature-wise UPGD in Fig.\ \ref{fig:input-permuted-all}. The weights are initialized by Kaiming initialization (He et al.\ 2015). A hyperparameter search was conducted (see Table \ref{tab:hyperparameters}), and the best-performing hyperparameter set was used for plotting.
\begin{figure}[ht]
\centering
\subfigure[Weight Global Utility]{
    \includegraphics[width=0.23\textwidth]{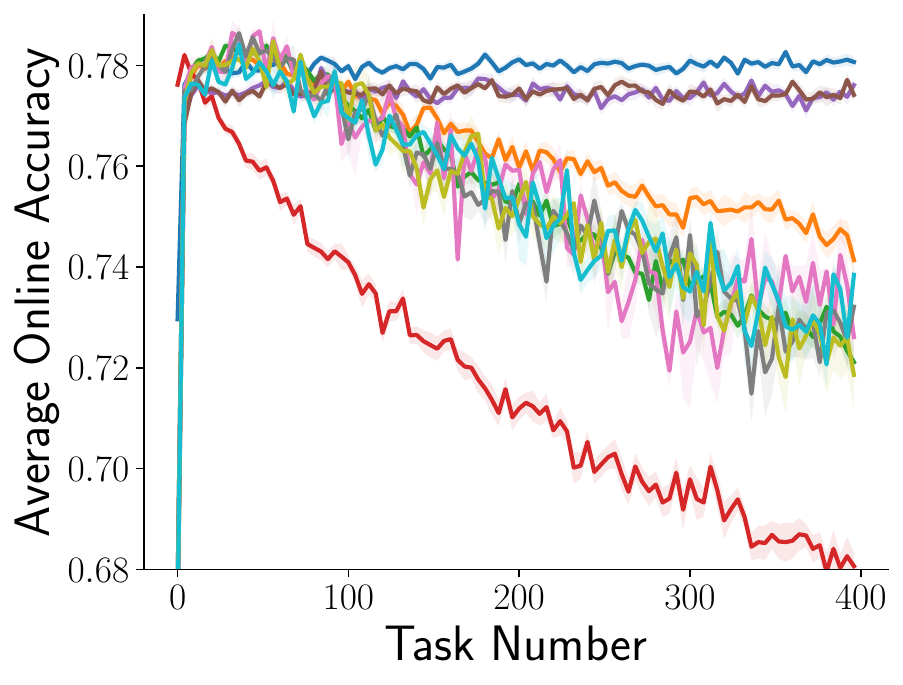}
}
\subfigure[Feature Global Utility]{
    \includegraphics[width=0.23\textwidth]{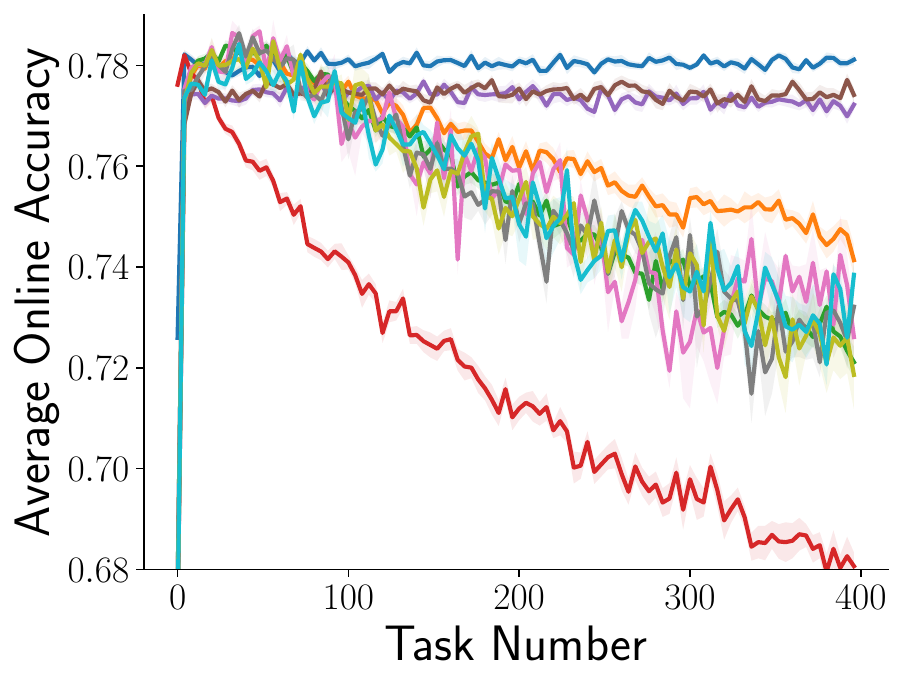}
}
\subfigure[Weight Local Utility]{
    \includegraphics[width=0.23\textwidth]{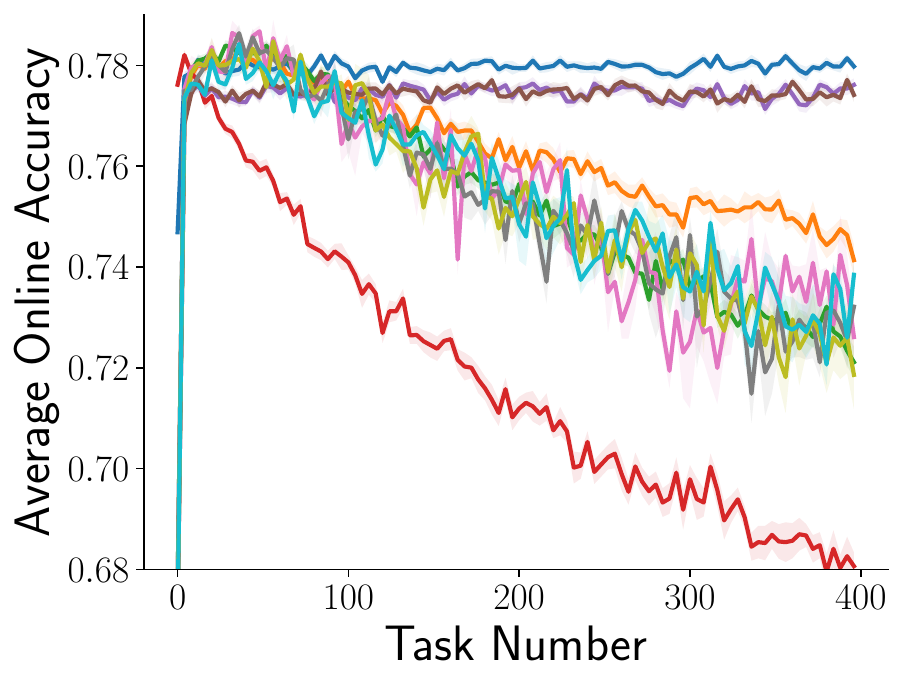}
}
\subfigure[Feature Local Utility]{
    \includegraphics[width=0.23\textwidth]{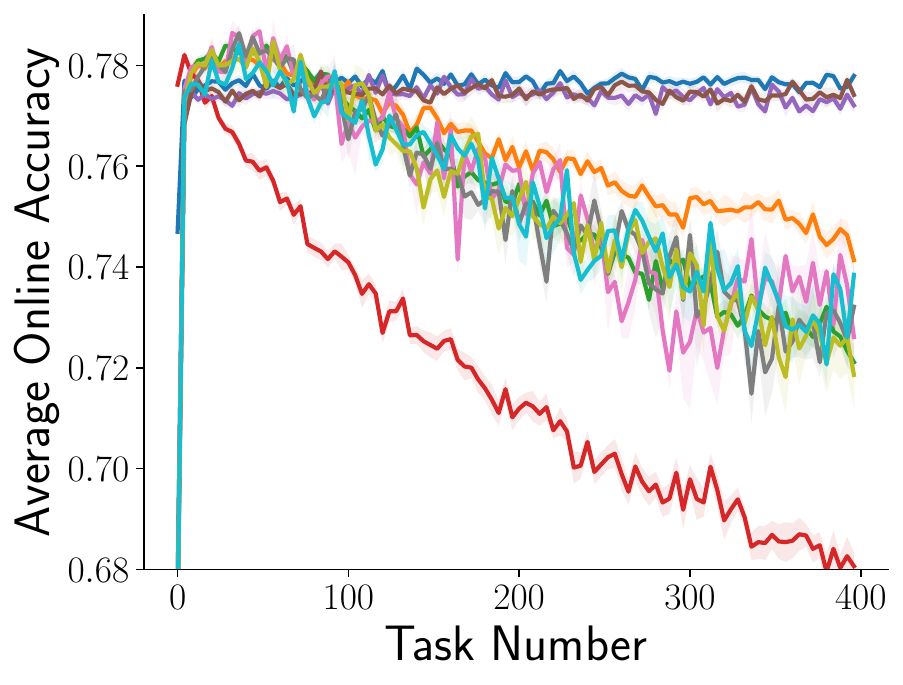}
}

\vspace{0.2cm}
\includegraphics[width=0.95\columnwidth]{figures/Legends/wide_updated_legend.pdf}
\vspace{-0.2cm}
\caption{Performance of methods on Input-Permuted MNIST. First-order utility traces are used for UPGD and Non-protecting UPGD. Results are averaged over $10$ runs.}
\label{fig:input-permuted-all}
\end{figure}


\subsection{Feature-wise and Weight-wise UPGD on Label-permuted EMNIST}
\label{appendix:additional-label-permuted-emnist}
We repeat the experiment on Label-permuted MNIST but with feature-wise approximated utility. In addition, we show the results using the local approximated utility for both weight-wise and feature-wise UPGD in Fig.\ \ref{fig:label-permuted-emnist-all}. The weights are initialized by Kaiming initialization (He et al.\ 2015). A hyperparameter search was conducted (see Table \ref{tab:hyperparameters}), and the best-performing hyperparameter set was used for plotting.

\begin{figure}[ht]
\centering
\subfigure[Weight Global Utility]{
    \includegraphics[width=0.23\textwidth]{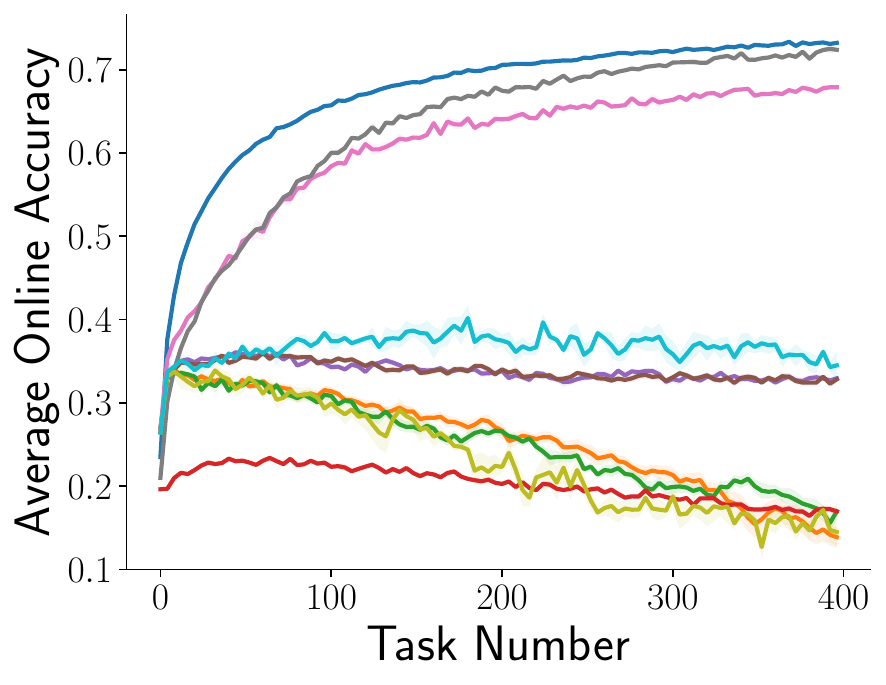}
}
\subfigure[Feature Global Utility]{
    \includegraphics[width=0.23\textwidth]{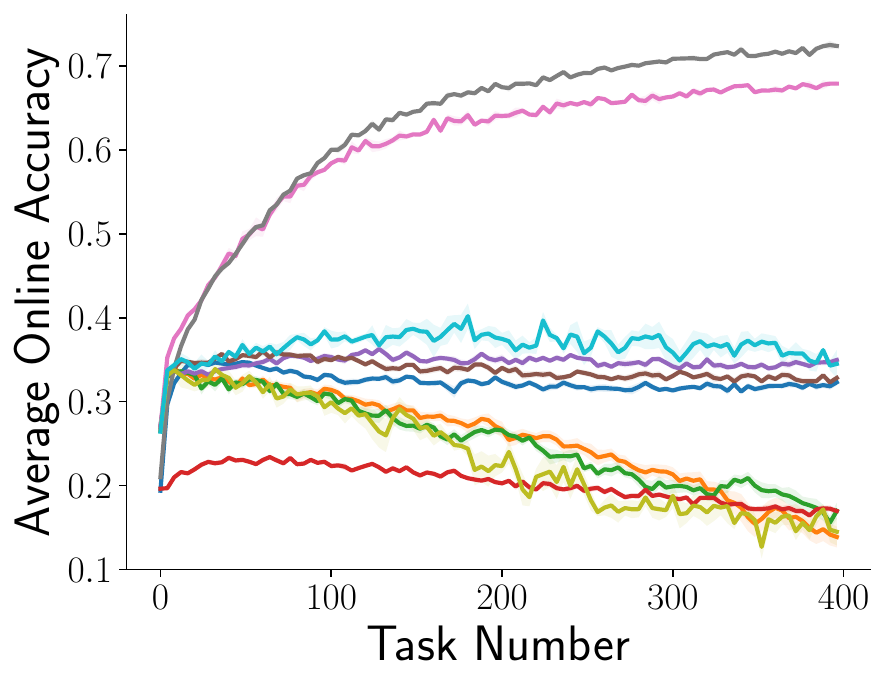}
}
\subfigure[Weight Local Utility]{
    \includegraphics[width=0.23\textwidth]{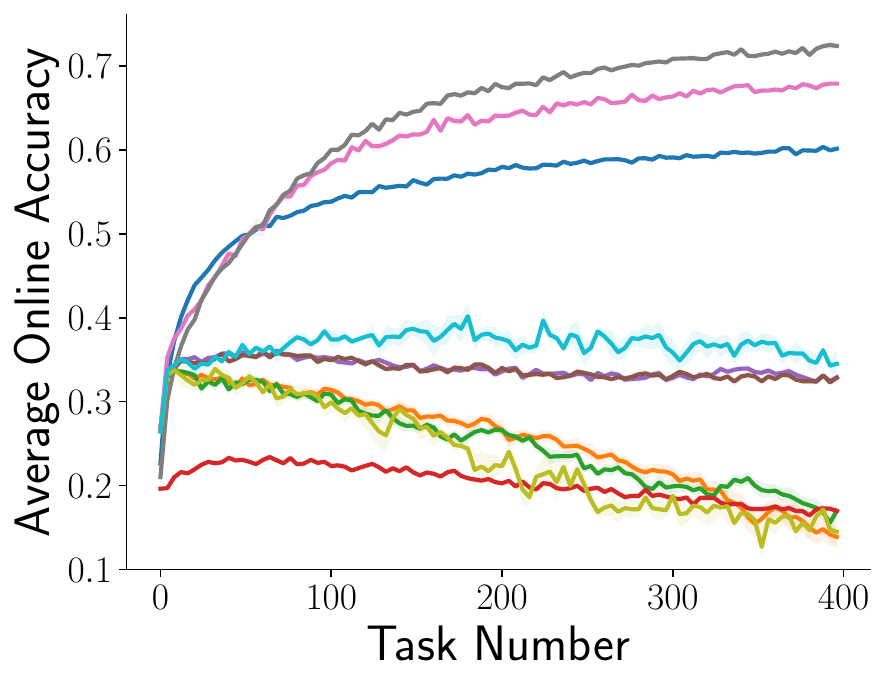}
}
\subfigure[Feature Local Utility]{
    \includegraphics[width=0.23\textwidth]{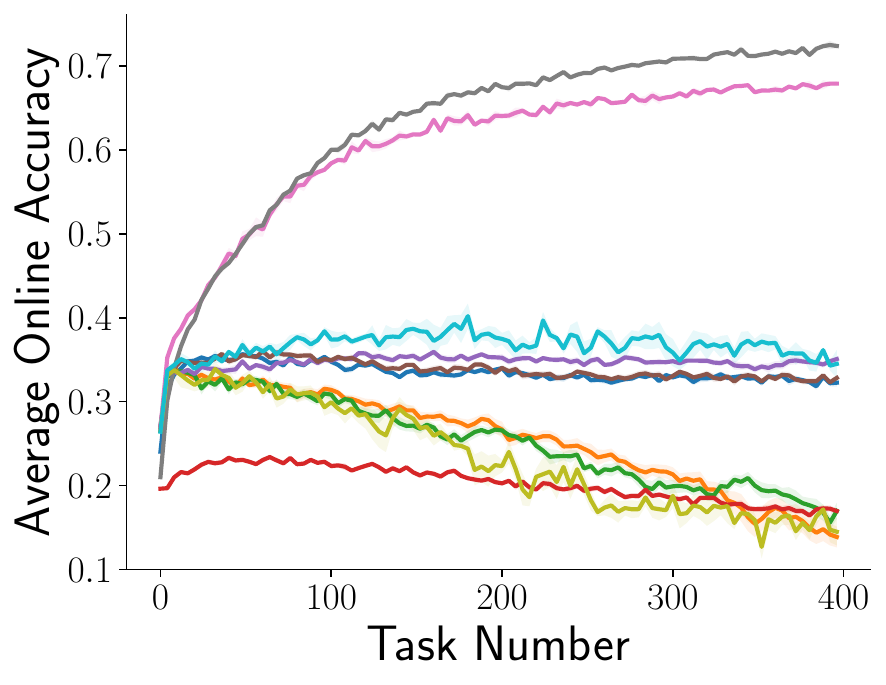}
}

\vspace{0.2cm}
\includegraphics[width=0.95\columnwidth]{figures/Legends/wide_updated_legend.pdf}
\vspace{-0.2cm}
\caption{Performance of methods on Label-permuted EMNIST. First-order utility traces are used for UPGD and Non-protecting UPGD. Results are averaged over $10$ runs.}
\label{fig:label-permuted-emnist-all}
\end{figure}


\subsection{Local Weight-wise UPGD on Label-permuted CIFAR-10}
\label{appendix:additional-label-permuted-cifar10}
We repeat the experiment on Label-permuted CIFAR-10 but with the local approximated utility. Fig.\ \ref{fig:label-permuted-cifar10-all} shows the results for UPGD using local and global approximated utilities. The weights are initialized by Kaiming initialization (He et al.\ 2015). A hyperparameter search was conducted (see Table \ref{tab:hyperparameters}), and the best-performing hyperparameter set was used for plotting.

\begin{figure}[ht]
\centering
\subfigure[Weight Global Utility]{
    \includegraphics[width=0.25\textwidth]{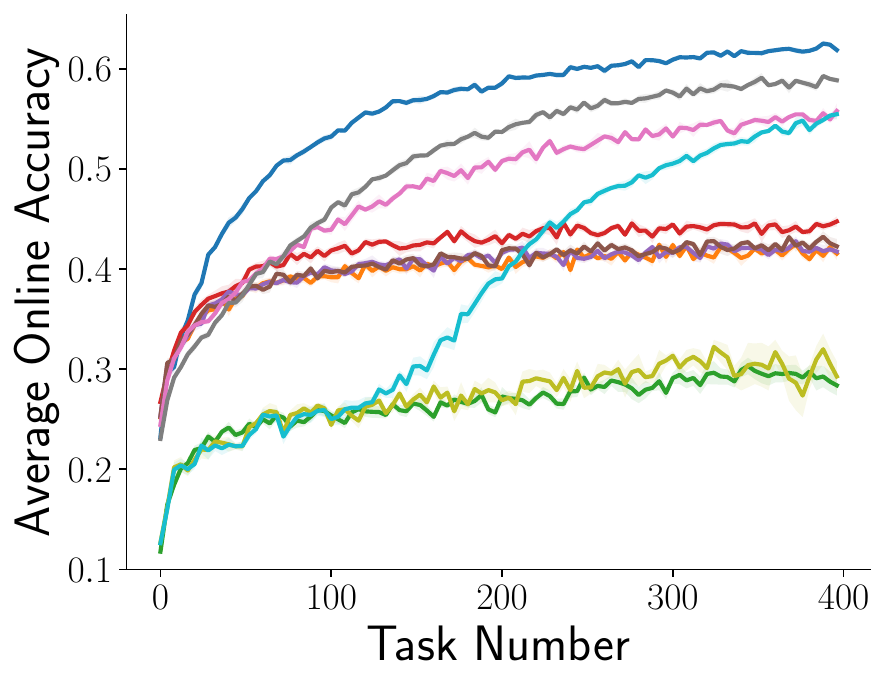}
}
\subfigure[Weight Local Utility]{
    \includegraphics[width=0.25\textwidth]{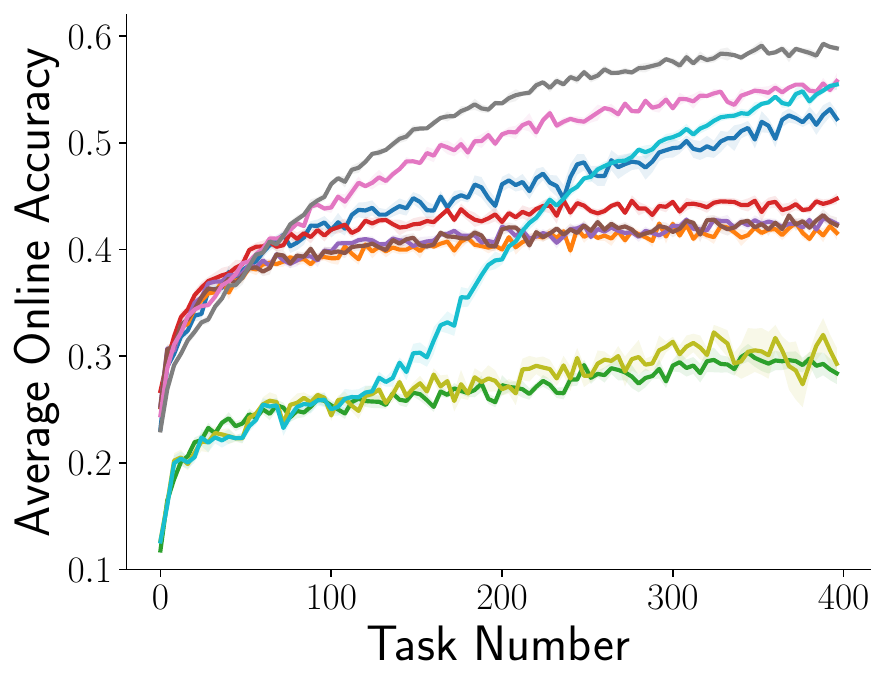}
}

\vspace{0.2cm}
\includegraphics[width=0.95\columnwidth]{figures/Legends/wide_updated_legend.pdf}
\vspace{-0.2cm}
\caption{Performance of methods on Label-permuted CIFAR-10. First-order utility traces are used for UPGD and Non-protecting UPGD. Results are averaged over $10$ runs.}
\label{fig:label-permuted-cifar10-all}
\end{figure}


\section{Ablation on the Components of UPGD-W}
\label{appendix:ablation-study}
We conducted an ablation study on the components of UPGD-W: weight decay, weight perturbation, and utility gating. Starting from SGD, we add each component step by step until we reach UPGD-W. Fig.\ \ref{fig:ablation-study} shows the performance of learners on Input-permuted MNIST, Label-permuted CIFAR10, Label-permuted EMNIST, and Label-permuted mini-ImageNet.
We notice that both weight perturbation and weight decay separately improve SGD performance. Still, the role of weight decay seems to be more important in Input-permuted MNIST and Label-permuted mini-ImageNet. Notably, the combination of weight decay and weight perturbation makes the learner maintain its performance. When utility gating is added on top of weight decay and weight perturbation, the learner can improve its performance continually in all label-permuted problems and slightly improve its performance on input-permuted MNIST.

We also conducted an additional ablation in Fig.\ \ref{fig:additional-ablation-study} where we start from UPGD-W and remove each component individually. This ablation bolsters the contribution of utility gating more. Using utility gating on top of SGD allows SGD to maintain its performance instead of dropping on input-permuted MNIST and improves its performance continually on label-permuted problems. The role of weight decay and weight perturbation is not significant in label-permuted problems, but including both with utility gating improves performance and plasticity on input-permuted MNIST.

\begin{figure}[ht]
\centering
\subfigure[MNIST]{
    \includegraphics[width=0.23\textwidth]{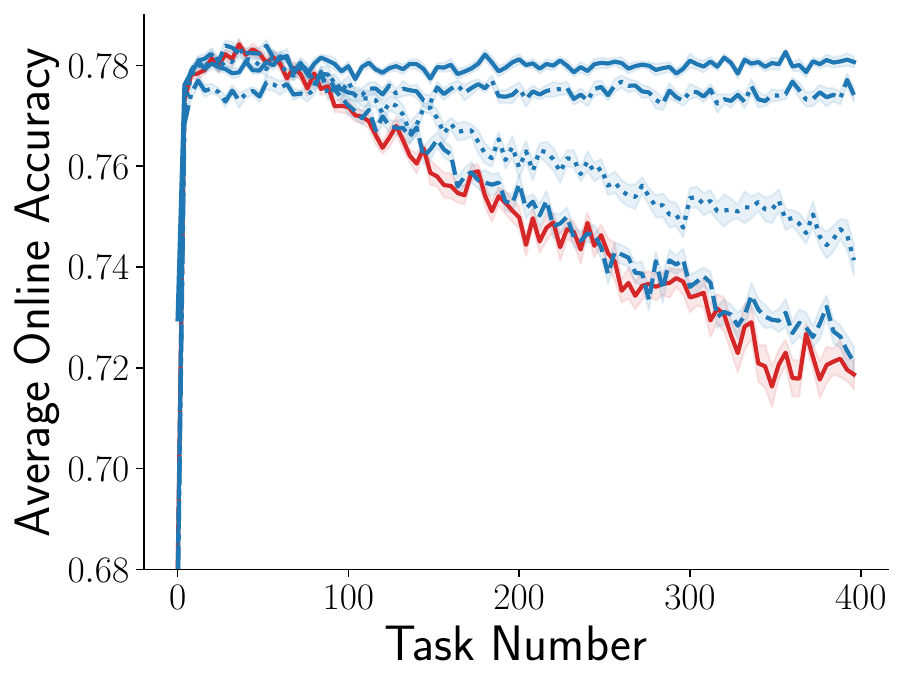}
\label{fig:ablation-mnist}
}
\subfigure[CIFAR10]{
    \includegraphics[width=0.23\textwidth]{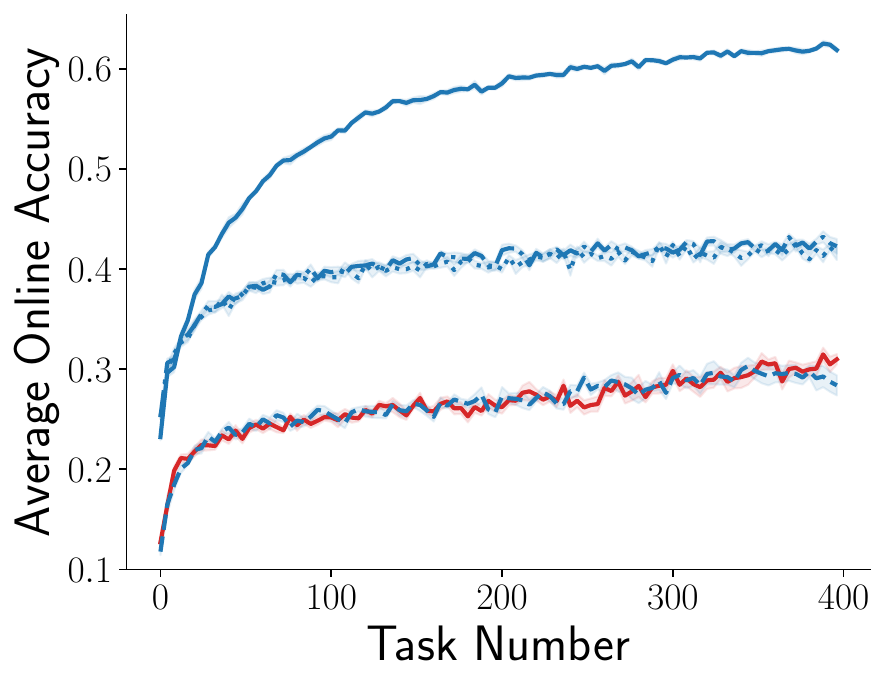}
\label{fig:ablation-cifar10}
}
\subfigure[EMNIST]{
    \includegraphics[width=0.23\textwidth]{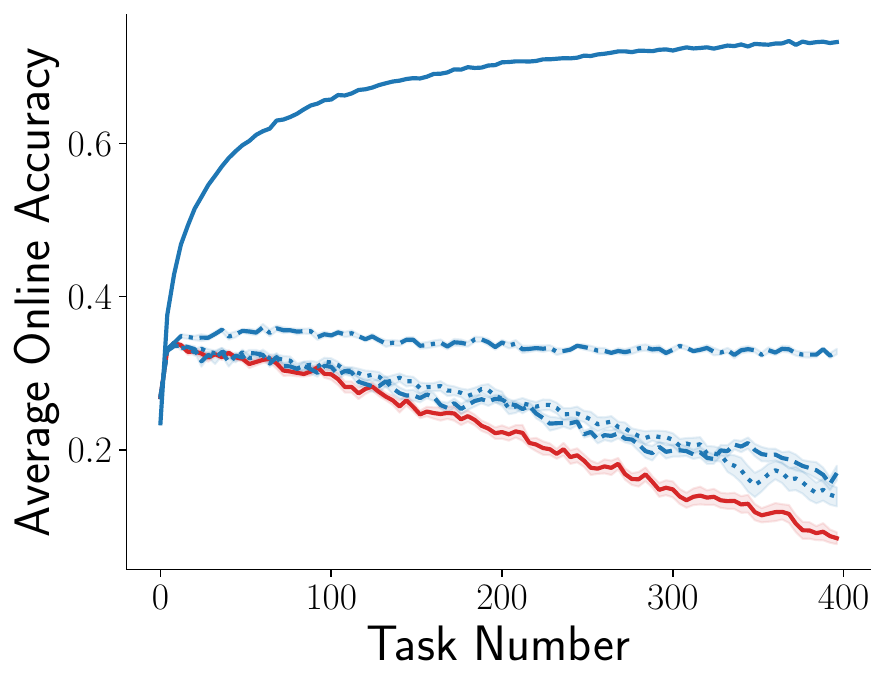}
\label{fig:ablation-emnist}
}
\subfigure[\emph{mini}-ImageNet]{
    \includegraphics[width=0.23\textwidth]{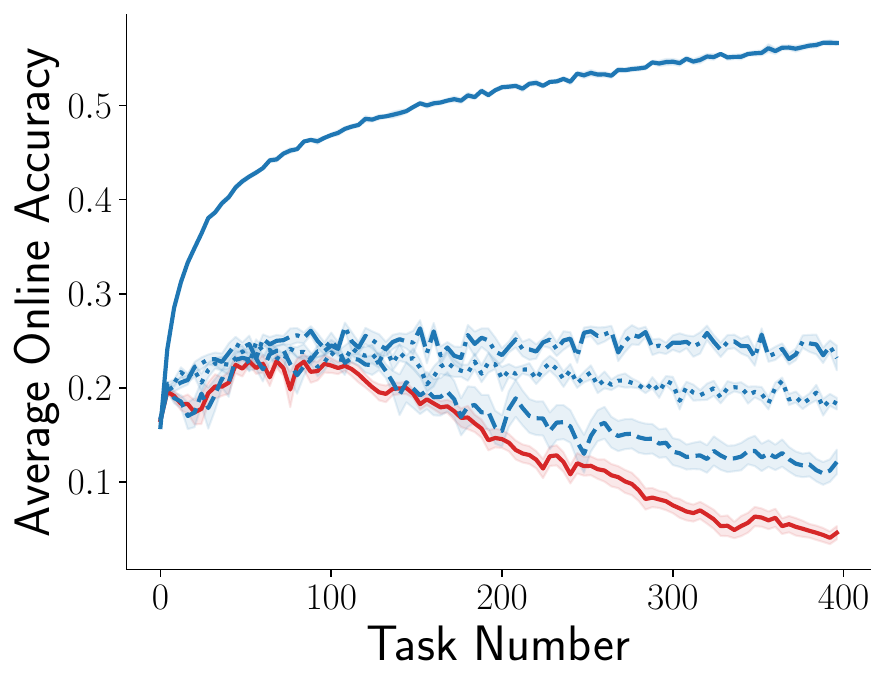}
\label{fig:ablation-imagenet}
}
\includegraphics[width=0.7\columnwidth]{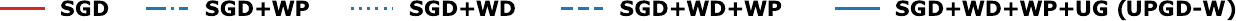}
\vspace{-0.2cm}
\caption{Ablation on the components of UPGD-W: Weight Decay (WD), Weight Perturbation (WP), and Utility Gating (UG) shown on Input-permuted MNIST, Label-permuted EMNIST, Label-permuted CIFAR10, and Label-permuted \emph{mini}-ImageNet. A global first-order utility trace is used. Results are averaged over $10$ runs. The shaded area represents the standard error.}
\label{fig:ablation-study}
\end{figure}

\begin{figure}[ht]
\centering
\subfigure[MNIST]{
    \includegraphics[width=0.23\textwidth]{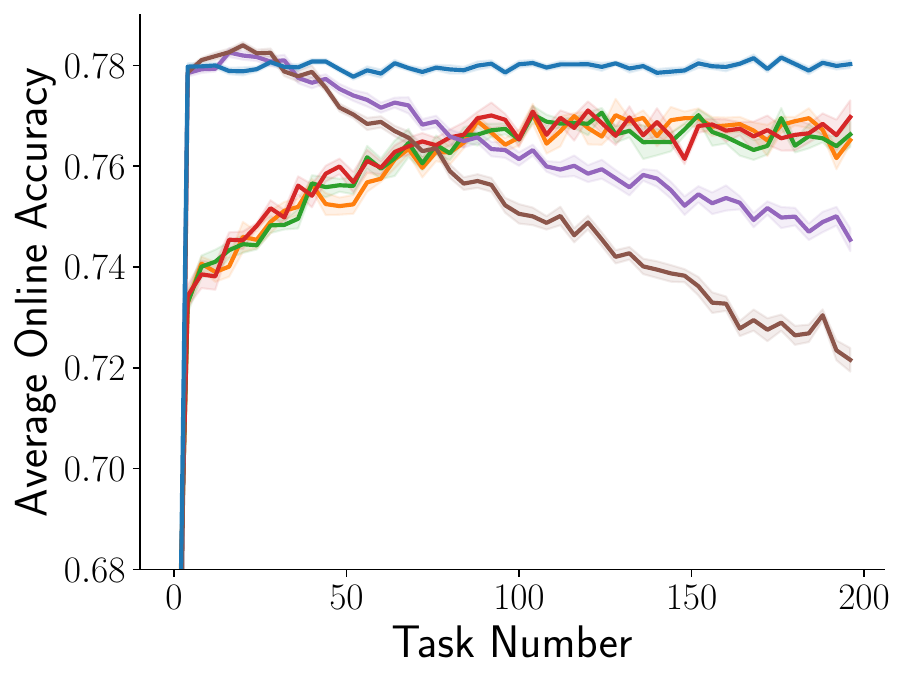}
\label{fig:additional-ablation-mnist}
}
\subfigure[CIFAR10]{
    \includegraphics[width=0.23\textwidth]{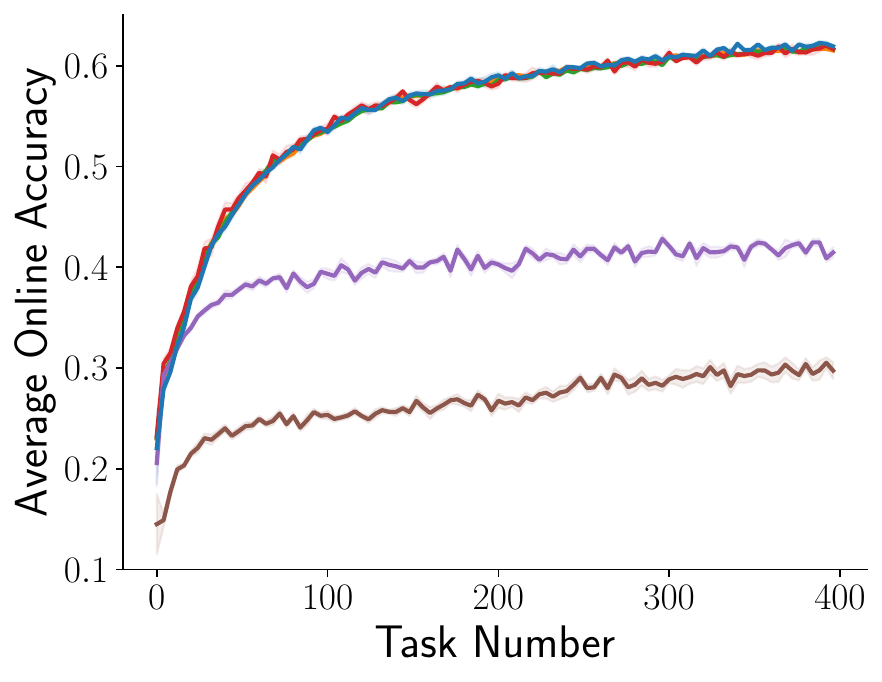}
\label{fig:additional-ablation-cifar10}
}
\subfigure[EMNIST]{
    \includegraphics[width=0.23\textwidth]{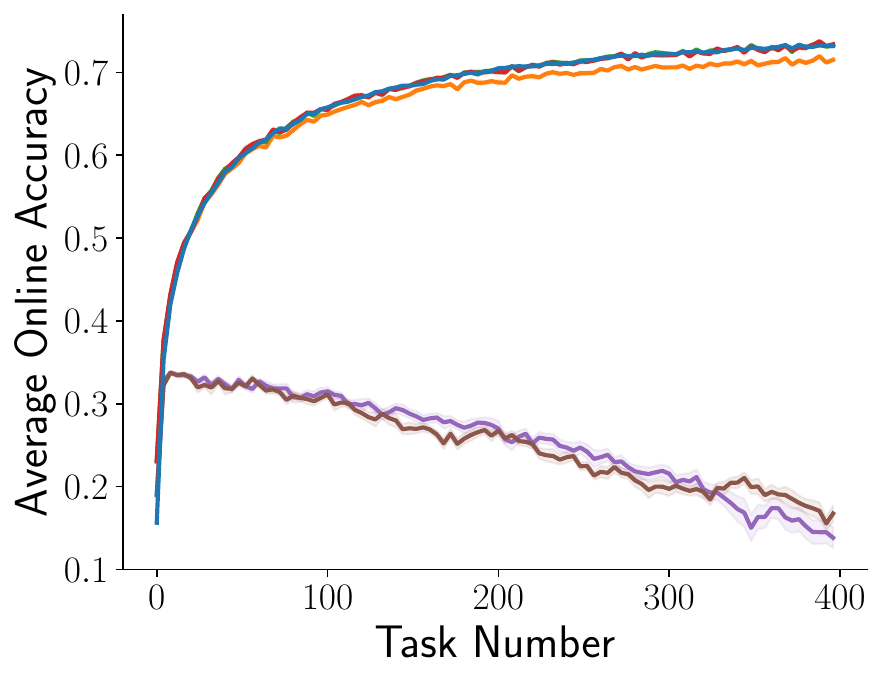}
\label{fig:additional-ablation-emnist}
}
\subfigure[\emph{mini}-ImageNet]{
    \includegraphics[width=0.23\textwidth]{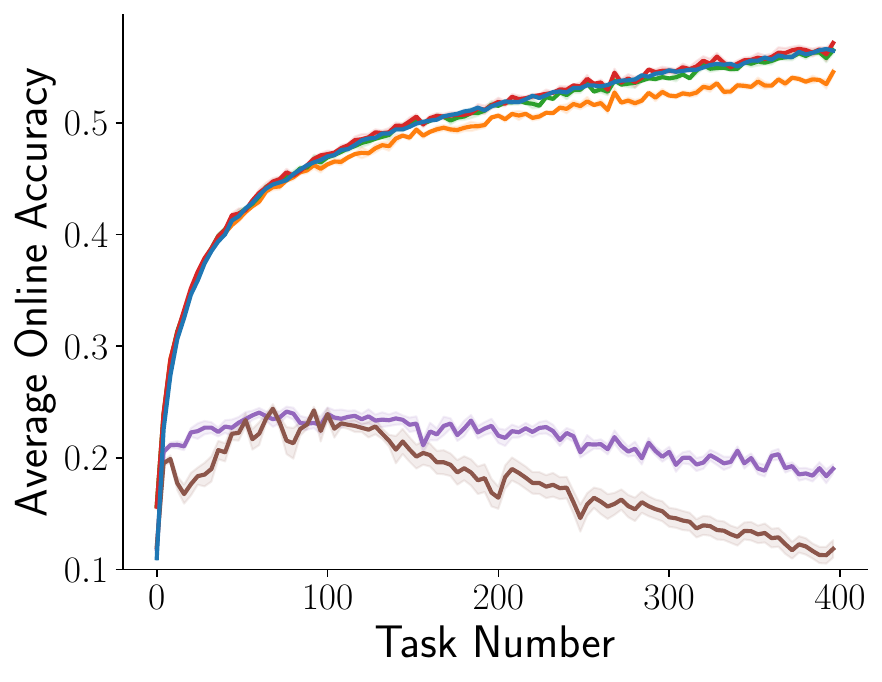}
\label{fig:additional-ablation-imagenet}
}
\includegraphics[width=0.9\columnwidth]{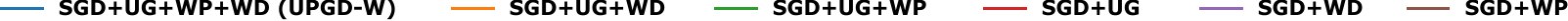}
\vspace{-0.2cm}
\caption{Ablation on the components of UPGD-W: Weight Decay (WD), Weight Perturbation (WP), and Utility Gating (UG) shown on Input-permuted MNIST, Label-permuted EMNIST, Label-permuted CIFAR10, and Label-permuted \emph{mini}-ImageNet, starting from UPGD-W and removing each component individually. SGD+WD and SGD+WP are added as baselines that do not use utility gating. A global first-order utility trace is used. Results are averaged over $10$ runs. The shaded area represents the standard error.}
\label{fig:additional-ablation-study}
\end{figure}

\section{Relationship to Generate-and-Test Methods}
\label{appendix:g-and-t}
UPGD can be seen as an extension of prior ideas that suggest remedying some of the pathological issues in neural networks caused by gradient descent using a non-gradient-based approach called \emph{generate-and-test} (e.g., Dohare et al. 2023a, Mahmood 2017, Mahmood \& Sutton 2013).
These methods use search to alter the features based on their usefulness. 
The hints of these ideas can be traced back to much earlier works (e.g., Kaebling 1994, Lecun et al.\ 1989, Sutton 1986, Selfridge 1958).
Mahmood and Sutton (2013) devised a generate-and-test method that searches for better features of networks with a single hidden layer.
Dohare et al.\ (2023a) extended the method to multiple layers and combined it with backpropagation.
Both methods utilize the weight magnitude to determine feature utilities; however, it has been shown that weight magnitude may not be suitable for other problems, such as classification (Elsayed 2022). 
Moreover, these prior works did not address catastrophic forgetting.
UPGD addresses both issues, utilizes a better notion of utility that enables better search in the feature space, and works with arbitrary network structures or objective functions. Hence, UPGD can be seen as a generalization of the prior works on generate-and-test.


\section{Future Works}
\label{appendix:future-works}
Arguably, one limitation of our approach is that it measures the weight utility, assuming other weights remain unchanged. A better utility would include interaction between weight variations, which is left for future work.
One desirable property of continual learners is the ability to modify their hyperparameter, allowing for greater adaptation to changes. Although our method does not require intensive hyperparameter tuning, it still requires some level of tuning, similar to other methods, which may hinder its ability in true lifelong learning. A promising direction is to use a generate-and-test approach that continually learns the best set of hyperparameters.
In our evaluation, we assumed that the non-stationary target function is piece-wise stationary. Studying the effectiveness of UPGD against hard-to-distinguish non-stationarities would be an interesting future direction.

\end{document}